\documentclass{article}

\PassOptionsToPackage{numbers, compress}{natbib}



\usepackage[final]{neurips_2022}


\usepackage{etoolbox}
\newbool{excludefood}
\setbool{excludefood}{true}

\usepackage[utf8]{inputenc} 
\usepackage[T1]{fontenc}    
\usepackage[pdftex]{graphicx}
\usepackage[font=small]{caption}  
\usepackage{hyperref}       
\usepackage{url}            
\usepackage{booktabs}       
\usepackage{amsfonts}       
\usepackage{nicefrac}       
\usepackage{microtype}      
\usepackage{xcolor}         
\usepackage{wrapfig}
\usepackage{subcaption}
\usepackage{tabularx}
\usepackage{amsmath}
\usepackage{amssymb}
\usepackage{algorithm}
\usepackage{algpseudocode}
\usepackage{mathtools}
\usepackage{copyrightbox}
\usepackage{bbm}
\usepackage{amsthm}

\usepackage{xr}


\makeatletter
\newcommand*{\addFileDependency}[1]{
  \typeout{(#1)}
  \@addtofilelist{#1}
  \IfFileExists{#1}{}{\typeout{No file #1.}}
}
\makeatother

\newcommand*{\myexternaldocument}[1]{%
    \externaldocument{#1}%
    \addFileDependency{#1.tex}%
    \addFileDependency{#1.aux}%
}
\myexternaldocument{supp}

\makeatletter
\renewcommand{\CRB@setcopyrightfont}{%
\usefont{T1}{phv}{m}{n}\fontsize{4}{4}\selectfont
}
\makeatother

\usepackage[normalem]{ulem}



\title{VICE: Variational Interpretable Concept Embeddings}

%

\author{Lukas Muttenthaler\thanks{Also affiliated with the Max Planck Institute for Human Cognitive and Brain Sciences, Leipzig, Germany.}  \\
Machine Learning Group\\
Technische Universit\"at Berlin\\
BIFOLD\thanks{Berlin Institute for the Foundations of Learning and Data, Germany.} \\ 
Berlin, Germany \\
\And
Charles Y. Zheng \\
Machine Learning Team, FMRI Facility \\
National Institute of Mental Health \\
Bethesda, MD, USA\\
\And 
Patrick McClure\thanks{Work was partially done while affiliated with the National Institute of Mental Health, Bethesda, MD, USA.} \\
Department of Computer Science\\
Naval Postgraduate School \\
Monterey, CA, USA\\
\And
Robert A. Vandermeulen \\
Machine Learning Group\\
Technische Universit\"at Berlin\\
BIFOLD\textsuperscript{$\dagger$} \\
Berlin, Germany \\
\And
Martin N. Hebart \\
Vision and Computational Cognition Group \\
MPI for Human Cognitive and Brain Sciences \\
Leipzig, Germany\\
\And
Francisco Pereira \\
Machine Learning Team, FMRI Facility\\
National Institute of Mental Health \\
Bethesda, MD, USA\\
}

%

\DeclareMathOperator*{\argmax}{arg\,max}
\DeclareMathOperator*{\argmin}{arg\,min}

\DeclareMathOperator{\quantize}{quantize}
\DeclareMathOperator{\loss}{loss}

\newtheorem{prop}{Proposition}[section]

\begin{document}

\maketitle

\begin{abstract}
A central goal in the cognitive sciences is the development of numerical models for mental representations of object concepts. This paper introduces Variational Interpretable Concept Embeddings (VICE), an approximate Bayesian method for embedding object concepts in a vector space using data collected from humans in a triplet odd-one-out task. VICE uses variational inference to obtain sparse, non-negative representations of object concepts with uncertainty estimates for the embedding values. These estimates are used to automatically select the dimensions that best explain the data. We derive a PAC learning bound for VICE that can be used to estimate generalization performance or determine a sufficient sample size for experimental design. VICE rivals or outperforms its predecessor, SPoSE, at predicting human behavior in the triplet odd-one-out task. Furthermore, VICE's object representations are more reproducible and consistent across random initializations, highlighting the unique advantage of using VICE for deriving interpretable embeddings from human behavior.
\end{abstract}

\section{Introduction}
\label{sec:intro}
Human knowledge about object concepts encompasses many types of information, including function or purpose, visual appearance, encyclopedic facts, or taxonomic characteristics. A central question in cognitive science concerns the representation of this knowledge and its use across different tasks. One approach to this question is inductive and lets subjects list properties for hundreds to thousands of objects \citep{mcrae2005semantic,devereux2014centre,buchanan2019english,hovhannisyan2020visual}, yielding large lists of responses about different types of properties. Specifically, objects are represented as vectors of binary properties. While this approach is agnostic to downstream prediction tasks, it may be biased since subjects may leave out important features and/or mention unimportant ones. For example, one may forego a general property (e.g., ``is an animal'') while providing a highly specific fact (e.g., ``is found in Florida''). In an alternative, deductive approach, researchers postulate dimensions of interest and subsequently let subjects rate objects in each dimension. \citet{binder2016toward} employed such an approach and collected ratings for hundreds of objects, verbs, and adjectives. These ratings were gathered over 65 dimensions, reflecting sensory, motor, spatial, temporal, affective, social, and cognitive experiences. Nonetheless, it is desirable to discover object representations that are not biased by the conducted behavioral task and whose dimensions are interpretable without necessitating \textit{a priori} assumptions about their semantic content.

Recently, \citet{ZhengPBH19} and \citet{hebart2020revealing} introduced SPoSE, a model of the mental representations of 1,854 objects in a 49-dimensional space. The model was learned from human judgments about object similarity, where subjects were asked to determine an odd-one-out object in random triplets of objects. SPoSE embedded each object in a vector space so that each dimension is non-negative and sparse (most objects have a close-to-zero entry for a given dimension). The authors showed that the embedding dimensions of objects were interpretable and that subjects could coherently label what the dimensions were ``about,'' ranging from categorical (e.g., animate, food) to functional (e.g., tool),  structural (e.g., made of metal or wood), or visual (e.g., coarse pattern). The authors hypothesized that interpretability arose from combining positivity and sparsity constraints so that no object was represented by every dimension, and most dimensions were present for only a few objects. In addition, SPoSE could predict human judgements close to the estimated best attainable performance \citep{ZhengPBH19,hebart2020revealing}.

Despite its notable performance, SPoSE has several limitations. The first stems from the use of an $\ell_{1}$ sparsity penalty to promote interpretability.  In SPoSE, 6 to 11 dominant dimensions for an object account for most of the prediction performance. These dimensions are different between objects. A potential issue with enforcing SPoSE to have even fewer dimensions is that it may cause excessive shrinkage of the dominant values \citep{belloni2013least}.
Second, when inspecting the distributions of values across objects, most SPoSE dimensions do not reflect the exponential prior induced by the $\ell_{1}$ penalty. Overcoming this prior may lead to suboptimal performance and inconsistent solutions, specifically in low data regimes. Third, SPoSE uses an ad-hoc criterion for determining the dimensionality of the solution via an arbitrary threshold on the density of each dimension. Finally, SPoSE has no criterion for determining convergence of its optimization process, nor does it provide any formal guarantees on the sample size needed to learn a model of desired complexity.

To overcome these limitations we introduce VICE, a variational inference (VI) method for embedding object concepts with interpretable, sparse, and non-negative dimensions. We start by discussing related work and a description of the triplet task and SPoSE, followed by our presentation of theory and experimental results.

\noindent {\bf Contribution 1: VICE solves major limitations of SPoSE} First, VICE encourages shrinkage while allowing for small entry values by using a \emph{spike-and-slab} prior \citep{blundell15,fahrmeir2010bayesian,george1993variable,mitchell1988bayesian,ray2020spike,Rockova18,titsias2011spike}. We deem this more appropriate than an exponential prior, because \textit{importance} -- the value an object takes in a dimension -- is different from \textit{relevance} -- whether the dimension is applicable to that object -- and both can be controlled separately.
Second, we use VI with a unimodal posterior for representing each object in a dimension which yields a mean value and an uncertainty estimate. While unimodality makes it possible to use the mean values as representative object embeddings, the uncertainty estimates allow us to use a statistical procedure to automatically select the dimensions that best explain the data.
Third, we use this procedure to introduce a convergence criterion that reliably identifies \emph{representational stability}, i.e. the consistency in the number of selected dimensions.

\noindent {\bf Contribution 2: A PAC bound on the generalization of SPoSE and VICE models}
This bound can be used \emph{retrospectively} to provide guarantees about the generalization performance of a converged model. Furthermore, it can be used \emph{prospectively} to determine the sample size required to identify a representation given the number of objects and a maximum possible number of dimensions.

\noindent {\bf Contribution 3: Extensive evaluation of model performance across multiple datasets} We compare VICE with SPoSE over three different datasets. One of these datasets contains concrete objects, another consists of adjectives, and the third is composed of food items. Experimentally, we find that VICE rivals or outperforms the performance of SPoSE in modeling human behavior. Moreover, we find that VICE yields dimensions that are much more reproducible, with a lower variance for the number of (selected) dimensions. Both of these measures are particularly important in the cognitive sciences. Lastly, we compare VICE with SPoSE for reduced amounts of data and show that VICE has significantly better performance on all measures.

A \texttt{PyTorch} implementation of VICE featuring Continuous Integration is publicly available at \url{https://github.com/LukasMut/VICE}. The GitHub repository additionally includes code to reproduce all of the experiments that are presented in this paper.

\section{Related work}
\label{sec:related_work}
\citet{navarro2008latent} introduced a method for learning semantic concept embeddings from item similarity data. The method infers the number of embedding dimensions using the Indian Buffet Process (IBP) \citep{griffiths2011indian}.
Their approach relies on continuous-valued similarity ratings rather than discrete forced-choice behavior and is not directly applicable to our setting. It is also challenging to scale the IBP to the number of dimensions and samples in our work \citep{zoubin2013scaling}. \citet{roads2021enriching} introduced a method for learning object embeddings from behavior in an 8-rank-2 task. Their method predicts behavior from the embeddings by using active sampling \citep{Gottlieb2018} to query subjects with the most informative stimuli, yielding an object similarity matrix. The interpretability of embedding dimensions was not considered in this work.

A different approach has been to develop interpretable concept representations from text corpora. Early methods used word embeddings with positivity and sparsity constraints \citep{murphy2012learning}. Later work in this direction used topic model representations of Wikipedia articles about objects \citep{pereira2013using}, transformations of word embeddings into sparse non-negative representations \citep{subramanian2018spine,panigrahi2019word2sense}, or predictions of properties \citep{devereux2014centre} or dimensions \citep{utsumi2020exploring}. Others have considered using text corpora in conjunction with imaging data \citep{fyshe2014interpretable,derby2018using}. Finally, \citet{derby2019feature2vec} introduced a neural network function that maps the sparse feature space of a semantic property norm to the dense space of a word embedding, identifying informative combinations of properties and ranking candidate properties for new words.

\section{Triplet task}
\label{method:triplet_task}
The {\em triplet task}, also known as the \emph{triplet odd-one-out task}, is used for discovering object concept embeddings from similarity judgments over a set of $m$ different objects. These judgments are collected from human participants who are given queries that consist of a \emph{triplet} of objects (e.g., $\{\text{``suit''}, \text{``flamingo''}, \text{``car''}\}$). Participants are asked to consider the three pairs in a triplet $\{\{\text{``suit'',``flamingo''}\}, \{\text{``suit'',``car''}\}, \{\text{``flamingo'',``car''})\}$, and to decide which pair is the most similar, leaving the third as the odd-one-out. We assign each object a numerical index, e.g., $1 \gets\text{``aardvark''}, \ldots, 1854 \gets\text{``zucchini''}$. Let $\{y, z\}$ denote the indices in this pair, e.g.,  $\{y,z\} = \{268,609\}$ for ``suit'' and ``flamingo.'' A dataset $\mathcal{D}$ is a set of $n$ ordered pairs of presented triplets and selected pairs. That is,  $\mathcal{D} \coloneqq \big(\{i_s, j_s, k_s\}, \{y_s, z_s\}\big)_{s=1}^{n}$, where $\left\{y_s,z_s\right\} \subset \left\{i_s,j_s,k_s\right\}$. In Appendix~\ref{app:triplet_task} we discuss in detail why the triplet task appears to be a sensible choice for modeling object similarities in humans.

\begin{figure}[h!]
\centering
    \begin{subfigure}{.3\textwidth}
        \centering
        \captionsetup{justification=centering}
        \copyrightbox
        {\includegraphics[height=1.1in,width=1.1in]{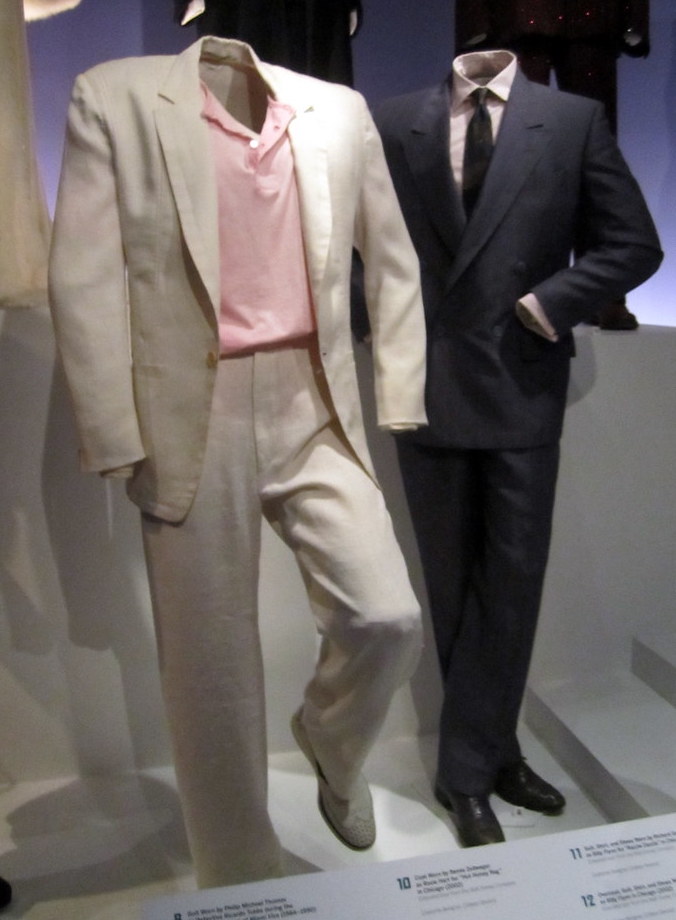}}
        {(c) Janderk1968}
    \end{subfigure}
    \begin{subfigure}{.3\textwidth}
        \centering
        \captionsetup{justification=centering}
        \copyrightbox
         {\includegraphics[height=1.1in,width=1.1in]{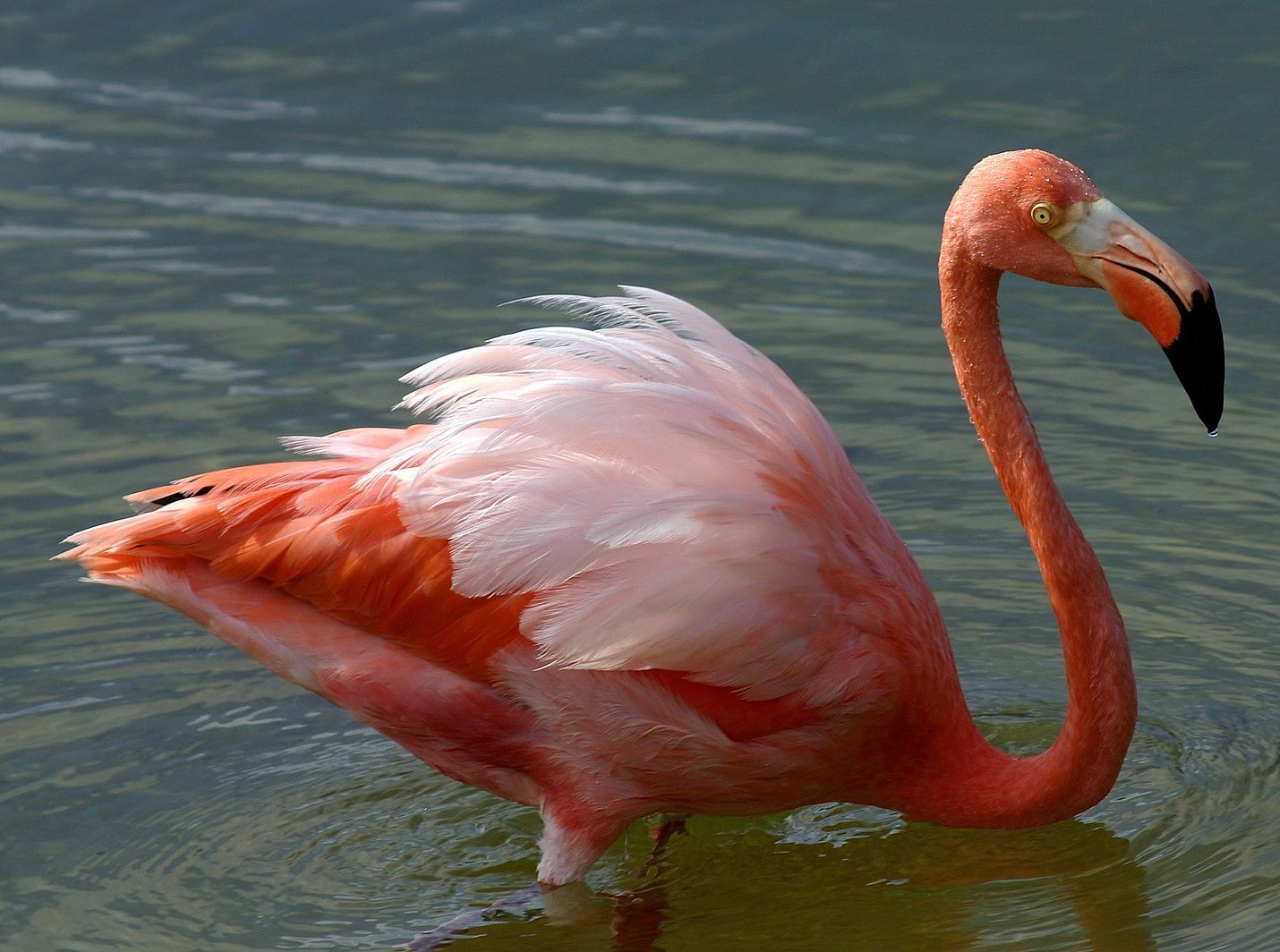}}
         {(c) Charles J. Sharp}
    \end{subfigure}
    \begin{subfigure}{.3\textwidth}
        \centering
        \captionsetup{justification=centering}
         \copyrightbox
        {\includegraphics[height=1.1in,width=1.1in]{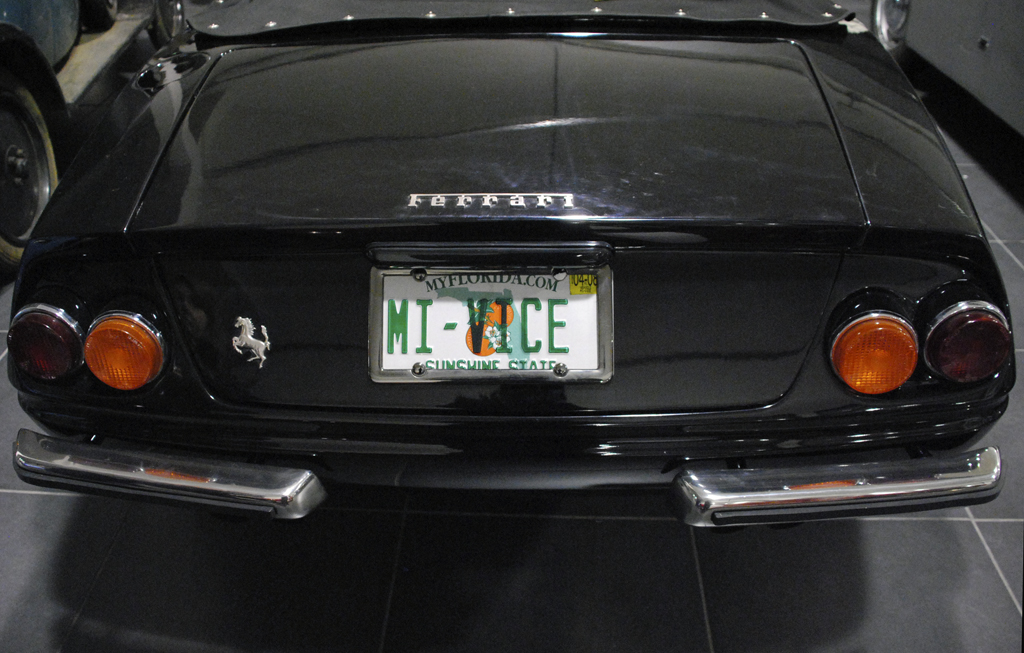}}
        {(c) Tim Evanson}
    \end{subfigure}%
\caption{Example triplet containing the objects ``suit'',  ``flamingo'', and ``car'' (creative commons images).}
\label{fig:example_triplet}
\end{figure}

\section{Formal setting}

Sparse Positive object Similarity Embedding (SPoSE) \citep{ZhengPBH19} is an approach for finding interpretable embedding dimensions from the triplet task. It does so by finding an embedding vector ${\bf x}_i = \left[x_{i1},\ldots,x_{id}\right]$ for every object $i$. Let $X$ denote the $m \times d$ matrix $({\bf x}_1,\hdots, {\bf x}_m)$ and $S \coloneqq XX^T$ be the similarity matrix, where $S_{ij}$ denote its entry at $i,j$. The probability of choosing $\{y_s, z_s\}$ as the most similar pair of objects, given an object triplet $\{i_s, j_s, k_s\}$ and the embedding matrix $X$, is modeled as
\begin{align}
p(\{y_s,z_s\}|\{i_s, j_s, k_s\},X)~\coloneqq&~\frac{\exp(S_{y_s,z_s})}{\exp(S_{i_s,j_s}) + \exp(S_{i_s,k_s}) + \exp(S_{j_s,k_s})}.
\label{eq:triplet_probs}
\end{align}
At this point we remark that each triplet $\{i_{s}, j_{s}, k_{s}\}$ is chosen uniformly at random from the collection of all possible sets of object triplets $\mathcal{T}$. That is, $\{i_{1}, j_{1}, k_{1}\},\ldots, \{i_{n}, j_{n}, k_{n}\} \overset{\text{i.i.d.}}{\sim} \mathcal{U}(\mathcal{T})$. This is precisely stated in Appendix~\ref{app:data_loglikelihood}. SPoSE uses maximum a posteriori (MAP) estimation with a Laplace prior under a non-negativity constraint (equivalent to an exponential prior) to find the most likely embedding $X$ for the training data $\mathcal{D}$. This leads to the training objective
\begin{equation*}
\label{eq:sparse_triplet_objective}
\argmin_{X \geq 0}~-\log p(\mathcal{D}|X) + \lambda \sum_{i=1}^{m} \sum_{j=1}^{d} |X_{ij}|,
\end{equation*}
where $\log p(\mathcal{D}|X) = \text{constant} +\sum_{s=1}^{n} \log p(\{y_s,z_s\}|\{i_s, j_s, k_s\},X)$ (see Appendix~\ref{app:data_loglikelihood} for a full derivation of the log-likelihood function) and $\lambda$ is determined using cross-validation.

\section{VICE}
\label{sec:vice}

In contrast to SPoSE, we use mean-field VI \citep{vi_review} instead of a MAP estimate for approximating the posterior probability $p(X|\mathcal{D})$. In VICE we impose additional constraints on the embedding matrix $X$ by using a prior that encourages shrinkage while allowing entries in $X$ to be close to zero.

\begin{algorithm}
    \caption{VICE optimization for individual triplets for a single training epoch}
    \label{alg:vice_optimization}
    \begin{algorithmic}
    \Require $\mathcal{D}, \theta, \alpha$ \Comment{Recall that $\mathcal{D} \coloneqq (\{i_s, j_s, k_s\}, \{y_s, z_s\})_{s=1}^{n}$, and $\alpha$ is a learning rate}
    \State $\theta \coloneqq \{\mu, \sigma\}$ \Comment{$\mu, \sigma \in \mathbb{R}^{m \times d}$}
    \For{$s \in \{1, \hdots, n\}$}
        \State $\epsilon \sim \mathcal{N}(0, I)$ \Comment{Draw i.i.d Gaussian noise where $\epsilon \in \mathbb{R}^{m \times d}$ and $\epsilon_{ij} \overset{\text{i.i.d.}}{\sim} \mathcal{N}(0,1)$}
        \State $X_{\theta, \epsilon} \coloneqq \mu + \sigma \odot \epsilon$ \Comment{Apply reparameterization trick; $\odot$ is the Hadamard product}
        \State $S \coloneqq [X_{\theta, \epsilon}]_{+}[X_{\theta, \epsilon}]_{+}^{T}$ \Comment{Compute similarity matrix using the non-negative parts of $X_{\theta, \epsilon}$}
        \State $\mathcal{L}_{\text{data}}(X_{\theta, \epsilon}) \triangleq \log\left[\frac{\exp(S_{y_s,z_s})}{\exp(S_{i_s,j_s}) + \exp(S_{i_s,k_s}) + \exp(S_{j_s,k_s})}\right]$ \Comment{Data log-likelihood term}
        \State $\mathcal{L}_{\text{complexity}}(X_{\theta, \epsilon}) \triangleq \frac{1}{n}\big[\log q_{\theta}(X_{\theta, \epsilon}) - \log p(X_{\theta, \epsilon})\big]$ \Comment{KL divergence/Regularization term}
        \State $\mathcal{L}_{\text{total}}(X_{\theta, \epsilon}) \triangleq \mathcal{L}_{\text{complexity}}(X_{\theta, \epsilon}) - \mathcal{L}_{\text{data}}(X_{\theta, \epsilon})$
        \State $\mu \gets \mu - \alpha \nabla_{\mu}\mathcal{L}_{\text{total}}(X_{\theta, \epsilon})$ \Comment{Update the embedding means}
        \State $\sigma \gets \sigma - \alpha \nabla_{\sigma}\mathcal{L}_{\text{total}}(X_{\theta, \epsilon})$ \Comment{Update the embedding standard deviations}
    \EndFor
    \Ensure $\theta$ \Comment{Return the optimized set of model parameters $\theta$}
    \end{algorithmic}
\end{algorithm}

\subsection{Variational Inference}
\label{sec:vbi}
For VICE we consider approximating $p(X|\mathcal{D})$ with a variational distribution, $q_{\theta}(X)$, where $q_{\theta} \in \mathcal{Q}$, and $\theta$ is optimized to minimize the KL divergence to the true posterior, $p(X|\mathcal{D})$. In VICE the KL divergence objective function (derived in Appendix~\ref{app:vbi}) is
\begin{equation}
     \argmin_\theta~\mathbb{E}_{q_\theta(X)}\left[\frac{1}{n}\left(\log q_\theta(X) - \log p(X)\right) - \frac{1}{n}\sum_{s=1}^{n} \log p\left(\{y_s,z_s\}|\{i_s, j_s, k_s\}, X\right)\right].
     \label{eq:vi_kld}
\end{equation}
\noindent {\bf Variational distribution} VI requires a choice of a parametric variational distribution $q \in \mathcal{Q}$. For VICE we use a Gaussian distribution with a diagonal covariance matrix $q_\theta(X) = \mathcal{N}(\mu,\text{diag}(\sigma^2))$ where $\theta = \{\mu, \sigma\}$. Therefore, each embedding dimension has a \textit{mean} and a \textit{standard deviation}. We deem a Gaussian variational distribution appropriate for a variety of reasons. First, under certain conditions, the posterior is Gaussian in the infinite-data limit \citep{kleijn2012bernstein}. Second, a unimodal variational distribution makes it possible to use $\mu_1,
\ldots, 
\mu_m$ as representative object embeddings. A fixed representation is useful for downstream use cases that rely on a single embedding vector for each object. Use cases range from using embeddings as targets in a regression task over unsupervised clustering of embeddings to interpreting embeddings. More complex variational families are clearly not as practical for this. Third, a Gaussian posterior is a computationally convenient choice.

Similarly to \citet{titsias14}, we use a Monte Carlo approximation of Equation~\ref{eq:vi_kld} by sampling a limited number of $X$s from $q_{\theta}(X)$ during training. We generate $X$ with the reparameterization trick \citep{KingmaVAE,rezende14}, $X_{\theta, \epsilon} = \mu + \sigma \odot \epsilon$, where
$\epsilon \in \mathbb{R}^{m\times d}$ is entrywise $\mathcal{N}(0,1)$, and $\odot$ denotes the Hadamard (element-wise) product. This leads to the objective
\begin{equation}
     \argmin_\theta~\frac{1}{nR} \sum_{r=1}^R 
     \left(\log q_\theta(X_{\theta, \epsilon^{(r)}}) - \log p(X_{\theta,\epsilon^{(r)}}) - \sum_{s=1}^{n} \log p(\{y_s,z_s\}|\{i_s, j_s, k_s\},[X_{\theta,\epsilon^{(r)}}]_+)
     \right).
\label{eq:vi_objective}
\end{equation}
We apply a ReLU function, denoted by $[\cdot]_+$, to the sampled $X_{\theta, \epsilon}$ values to guarantee that $X_{\theta, \epsilon} \in \mathbb{R}_{+}^{m\times d}$. As commonly done in the Dropout and Bayesian Neural Network literature \citep{srivastava2014dropout,blundell15,gal2016dropout,McClureK16}, we set $R = 1$ for computational efficiency during the optimization process. The optimization is outlined for individual triplets (i.e., where $B=1$) for a single training epoch in Algorithm~\ref{alg:vice_optimization}.

\noindent {\bf Posterior probability estimation} Computational efficiency at inference time is not as critical as it is during training. Therefore, we can get a better estimate of the posterior probability distribution over the three possible odd one-one-out choices by letting $R \gg 1$. Using the optimized variational posterior, $q_{\hat{\theta}(X)}$, we approximate the probability distribution with a Monte Carlo estimate \citep{Graves11,blundell15,McClureK16,vi_review} from $R$ samples $X^{(r)} = X_{\hat{\theta},\epsilon^{(r)}}$ for $r=1,\hdots,R$, yielding
\begin{equation}
    \hat{p}\left(\{y,z\}|\{i,j,k\}\right) \coloneqq \frac{1}{R} \sum_{r=1}^R p(\{y,z\}|\{i,j,k\},X^{(r)}).
\label{eq:choice_distribution}
\end{equation}
\noindent {\bf Spike-and-slab prior} As discussed above, SPoSE induces sparsity through an $\ell_{1}$ penalty which, along with the non-negativity constraint, is equivalent to using an exponential prior. Through examination of the publicly available histograms of weight values in the two most important SPoSE dimensions (see Figure~\ref{fig:spose_dimension} in Appendix~\ref{app:histograms_of_spose_dimensions}), we observed that the dimensions did not resemble an exponential distribution. Instead, they contained a \emph{spike} of probability at zero and a wide \emph{slab} of probability for the non-zero values. To model this, we use a spike-and-slab Gaussian mixture prior \citep{blundell15,fahrmeir2010bayesian,george1993variable,ishwaran2005spike,malsiner2018comparing},
\begin{equation}
     p(X)= \prod_{i=1}^{m} \prod_{j=1}^{d} (\pi_{\text{spike}} \mathcal{N}(X_{ij};0,\sigma_{\text{spike}}^2) + (1-\pi_{\text{spike}}) \mathcal{N}(X_{ij};0,\sigma_{\text{slab}}^2)),
\label{eq:spike_slab}
\end{equation}
which encourages shrinkage. This prior has three parameters, $\sigma_{\text{spike}}$, $\sigma_{\text{slab}}$, and $\pi_{\text{spike}}$. $\pi_{\text{spike}}$ is the probability that an embedding dimension is drawn from the \textit{spike} Gaussian. Since spike and slab distributions are mathematically interchangeable, by convention we require that $\sigma_{\text{spike}} \ll \sigma_{\text{slab}}$.

\subsection{Dimensionality reduction and convergence}
\label{method:pruning}

For interpretability purposes it is desirable for the object embedding dimensionality, $d$, to be small. In contrast to SPoSE, which employs a user-defined threshold to prune dimensions, VICE exploits the uncertainty estimates for embedding values to select a subset of informative dimensions.

The pruning procedure works by assigning importance scores to each of the $d$ dimensions, which reflect the number of objects that we can confidently say have a non-zero weight in a dimension. To compute the score, we use the variational embedding for each object $i$ and dimension $j$ -- location $\mu_{ij}$ and scale $\sigma_{ij}$ parameters -- to compute the posterior probability that the weight is truncated to zero according to the left tail of a Gaussian distribution with that location and scale (see \S\ref{sec:vbi}). This gives us a posterior probability of the weight being zero for each object within a dimension \citep{Graves11}. To calculate the overall importance of a dimension, we estimate the number of objects that have non-zero weights, while controlling the False Discovery Rate \citep{Benjamini1995} with $\alpha = 0.05$. We define the importance of each dimension $j$ to be the number of objects for which $P(X_{ij} > 0) \geq .95$ holds. After convergence, we prune the model by removing dimensions with $5$ or fewer statistically significant objects. This is a commonly used reliability threshold in semantic property norms (e.g., \citep{mcrae2005semantic,devereux2014centre}).

In gradient-based optimization, the gradient of an objective function with respect to the parameters of a model, $\nabla{\mathcal{L(\theta)}}$, is used to iteratively find parameters $\hat{\theta}$ that minimize that function.
We use a {\em representational stability} criterion to determine convergence. That is, the optimization process halts when the number of identified dimensions - as described above - has not changed by a \emph{single} dimension over the past $L$ epochs (e.g., $L = 500$).  Given that our goal is to find stable estimates of the number of dimensions, we considered this to be more appropriate than other convergence criteria such as evaluating the cross-entropy error on a validation set or evidence-based criteria \citep{evidence_based_gradients,evidence_based_hessian}. For further details on convergence and the optimization process see Appendix~\ref{app:gradient_based_optimization}.

\section{Sample complexity bound}
\label{sec:pac_bound}

We use statistical learning theory to obtain estimates of the sample size needed to appropriately constrain VICE (and SPoSE) models. These estimates can be used \emph{retrospectively} to obtain probabilistic guarantees on the generalization of a trained model to unseen data, or \emph{prospectively} to decide how much data to collect for a study. The bound presented here applies to any embedding matrix, $X$, using the model in Equation~\ref{eq:triplet_probs}. When using this bound to analyze VICE, we extract $\mu$ and use it as a fixed embedding to predict the most likely odd-one-out for a query triplet, rather than sampling from the variational distribution.

To obtain a useful bound we make two assumptions. First, we assume that there exists an upper bound $M$ on the largest value in the embedding. Second, we assume that the embeddings obtained by either SPoSE or VICE can be quantized coarsely with marginal losses in predictive performance. We can choose a discretization scale $\Delta$, e.g., $\Delta = 0.5$, and round embedding values to a non-negative integer multiple of $\Delta$. While we employ this quantization primarily to use learning theory bounds for finite hypothesis classes, it could have benefits for interpretation as well (e.g., a dimension could consist of labels such as zero (0), very low (0.5), low (1), medium (1.5), high (2)). In all datasets we have used, SPoSE and VICE embeddings with reasonable priors had values below $2.7$ (see Appendix~\ref{app:quantization_results}). 
Given an upper bound $M$ and a discretization scale $\Delta$, all entries of $X$ are limited to the set $\mathbb{A}:=\{0, \Delta, \hdots, (k-1)\Delta, k\Delta\}$. The following bound tells us that, given enough samples, the true error rate is not much worse than the estimated error rate for such discretized $X$ matrices, with high probability.
\begin{prop} 
Given $\delta >0 $, $\epsilon > 0$, and $n\geq \left(m d \log (k+1) + \log \left(1/\delta\right)\right)/\left(2\epsilon^2\right)$ training samples $ \left(\{y_s,z_s\}, \{i_{s}, j_{s}, k_{s}\}\right)_{s=1}^n$, then
\begin{equation*}
P\left(\sup_{X \in \mathbb{A}^{ m\times d}} 
\hat{R}(X)
-  R(X) < \epsilon\right) \ge 1-\delta,
\end{equation*}
where
\begin{align*}
    \hat{R}(X) &:= \frac{1}{n}\sum_{s=1}^n \mathbbm{1}\left( \left\{y_s,z_s\right\} \neq \argmax_{\{y, z\}} p\left(\{y, z\}|\{i_s, j_s, k_s\},X\right) \right)
\end{align*}
and
\begin{align*}
    R(X) &:= P_{\left(\{y', z'\},\{i', j', k'\}\right)}\left( \left\{  y',z' \right\} \neq \argmax_{\{y, z\}} p\left(\{y, z\}|\{i', j', k'\},X\right)\right).
\end{align*}
\end{prop}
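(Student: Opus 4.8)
The plan is to treat this as a standard uniform-convergence (PAC) bound over a \emph{finite} hypothesis class, exploiting the fact that quantization restricts each entry of $X$ to the set $\mathbb{A}$ of $k+1$ values, so that $|\mathbb{A}^{m\times d}| = (k+1)^{md}$. The first step is to fix a single $X \in \mathbb{A}^{m\times d}$ and observe that the per-sample loss $Z_s(X) \coloneqq \mathbbm{1}(\{y_s,z_s\} \neq \argmax_{\{y,z\}} p(\{y,z\}|\{i_s,j_s,k_s\},X))$ is a deterministic $\{0,1\}$-valued function of the $s$-th sample $(\{y_s,z_s\},\{i_s,j_s,k_s\})$. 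Because the triplets are drawn i.i.d.\ from $\mathcal{U}(\mathcal{T})$ and each observed pair comes from a fixed conditional response distribution, the samples---and hence the $Z_s(X)$---are i.i.d.\ Bernoulli random variables with common mean $\mathbb{E}[Z_s(X)] = R(X)$, while $\hat{R}(X) = \frac{1}{n}\sum_{s=1}^n Z_s(X)$.

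Next I would apply Hoeffding's inequality to this bounded i.i.d.\ sum. For each fixed $X$ the one-sided bound gives $P(\hat{R}(X) - R(X) \geq \epsilon) \leq \exp(-2n\epsilon^2)$. A union bound over all $(k+1)^{md}$ matrices in $\mathbb{A}^{m\times d}$ then yields
\[
P\!\left(\sup_{X\in\mathbb{A}^{m\times d}} \hat{R}(X) - R(X) \geq \epsilon\right) \leq (k+1)^{md}\exp(-2n\epsilon^2).
\]
Requiring the right-hand side to be at most $\delta$ and taking logarithms gives $md\log(k+1) - 2n\epsilon^2 \leq \log\delta$, which rearranges exactly to the stated condition $n \geq (md\log(k+1) + \log(1/\delta))/(2\epsilon^2)$. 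Passing to the complementary event then delivers the claimed probability of at least $1-\delta$.

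The genuinely mathematical content here is light---Hoeffding plus a union bound over a finite class---so the step I expect to require the most care is justifying the i.i.d.\ structure of the losses, which rests on the sampling assumption that each triplet is drawn independently and uniformly from $\mathcal{T}$ (stated in the excerpt and formalized in Appendix~\ref{app:data_loglikelihood}). A secondary point I would address is the well-definedness of the $\argmax$: if two pairs attain the maximal predicted probability, the loss must be pinned down by a fixed tie-breaking rule so that $Z_s(X)$ remains a genuine measurable indicator, and any deterministic rule leaves the argument unchanged. I anticipate no real obstacle beyond stating these measurability and independence preliminaries cleanly, since the finiteness of the hypothesis class induced by quantization does all the heavy lifting.
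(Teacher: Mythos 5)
Your proposal is correct and follows essentially the same route as the paper's own proof: Hoeffding's inequality for each fixed $X \in \mathbb{A}^{m\times d}$ (with summands bounded by $1/n$ and $\mathbb{E}[\hat{R}(X)] = R(X)$), a union bound over the $(k+1)^{md}$ quantized matrices, and solving $(k+1)^{md}\exp(-2n\epsilon^2) \le \delta$ for $n$. Your additional remark about fixing a deterministic tie-breaking rule for the $\argmax$ is a sensible measurability refinement that the paper leaves implicit, but it does not change the argument.
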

\begin{proof}[Proof sketch.]
This bound follows from applying Hoeffding's inequality to the $(k+1)^{m\times d}$ elements of $\mathbb{A}^{ m\times d}$ combined with a union bound. This proof is virtually identical to that of Theorem 2.13 in \citet{mohri2018foundations}. A complete proof can be found in Appendix~\ref{app:proof}.
\end{proof}
To use the bound we first decide on the number of quantization steps $k =\left\lceil M/\Delta\right\rceil$. The probability of violating the bound, $\delta$, is analogous to the Type I error control in hypothesis testing and is often set to  $0.05$. Since $n$ in our proposition depends very weakly on $\delta$, it is convenient to use $\delta = 0.001$.  The error tolerance $\epsilon$ \emph{does} have a major effect on the sample size estimate provided by the bound. For the bound to be practically useful, $\epsilon$ has to be smaller than the difference between the training error -- average zero-one loss over the training set examples -- and random guessing error.
For the datasets we use in this paper we observed a difference of $0.2-0.3$. A conservative choice would halve the accuracy gap of 0.2, giving $\epsilon = 0.1$. Together, with $k=4$ (from above), these values result in a prospective sample size of $n \ge 50 \cdot (m d\log(5) + \log(1000))$. To use the bound retrospectively, we fix $n$, $\delta$ while varying $\Delta$ (and consequently $k$) to get a guarantee on $\epsilon$, which in turn yields a probabilistic upper bound on the generalization error for an embedding, $R(X) \leq \hat{R}(X) + \epsilon$. For more details, see Algorithm~\ref{alg:adaptive_quantization} in Appendix~\ref{app:retrospective_algorithm}. 

\section{Experiments}
\label{sec:experiments}

\noindent {\bf Data} We performed experiments for three triplet datasets: \textsc{Things} (used to develop SPoSE \cite{ZhengPBH19,hebart2020revealing}), \textsc{Adjectives}\footnote{\textsc{Adjectives} is not yet published, but was shared with us by Shruti Japee.}, and \textsc{Food} \citep{carrington2021} \footnote{\textsc{Food} was shared with us by Jason Avery.}. \textsc{Things} and \textsc{Adjectives} contain random samples from all possible triplet combinations for 1,854 objects and 2,372 adjectives, respectively. \textsc{Things} and \textsc{Adjectives} are each comprised of a large training dataset which contains no ``repeats,'' i.e., there is only one human response for each triplet ($\sim 1.5$M triplets for \textsc{Things} and $\sim 800$K triplets for \textsc{Adjectives}). For each of these training datasets, 10\% of the triplets are assigned to a predefined validation set. The test sets for both contain the results for 1,000 random triplets that are not contained in the training data, with 25 repeats for each triplet. \textsc{Food} contains data for every possible triplet combination for 36 objects and repeats for some triplets. We partitioned this dataset into train (45\%), validation (5\%), and test (50\%) sets with disjoint triplets. Appendix~\ref{app:dataset_details} provides details about stimuli, data collection, and quality control.\label{sec:data}

\noindent {\bf Experimental setup} We implemented both SPoSE and VICE in \texttt{PyTorch} \citep{DBLP:conf/nips/PaszkeGMLBCKLGA19} using Adam \citep{DBLP:journals/corr/KingmaB14} with $\eta = 0.001$.  To find the best VICE hyperparameter combination, we performed a grid search over $\sigma_{\text{spike}}, \sigma_{\text{slab}}, \pi_{\text{spike}}$, optimizing Equation~\ref{eq:vi_objective} and evaluating each model on the validation set. All experiments were performed over $20$ different random initializations. For VICE we observed that the hyperparameter values with the lowest average cross-entropy error on the validation set were highly similar across the different datasets (see Table~\ref{tab:optimal_hyperparams} in Appendix~\ref{app:experimental_details}). For more information about the experimental setup, including details about training, weight initialization, hyperparameter grid, and optimal combination, see Appendix~\ref{app:experimental_details}. \label{sec:experimental_setup}

\noindent {\bf Evaluation measures} VICE estimates $p(\{y, z\}|\{i, j, k\})$, the model's softmax probability distribution over a given triplet (see Equation~\ref{eq:choice_distribution}). The first evaluation measure we consider is the prediction \emph{accuracy} with respect to the correct human choice $\{y, z\}$,  where the predicted pair is $\argmax_{\{y, z\}} \hat{p}\left(\{y, z\}|\{i, j, k\}\right)$. We estimate an upper bound on the prediction accuracy by using the repeats in the test set (see Appendix~\ref{app:experimental_details}). A Bayes-optimal classifier would predict the majority outcome for any triplet, whereas random guessing gives an accuracy of $1/3$. Note that the triplet task is subjective and thus there is not necessarily a definitive correct answer for a given triplet. If a test set contains multiple responses for a triplet, it provides information about the relative similarities of the three object pairs. This lets us calculate probability distributions over answers for each triplet. Approximating these distributions closely is important in cognitive science applications. Therefore, we additionally evaluate the models by computing the KL divergences between the models' posterior probability estimates $\hat{p}(\{y, z\}|\{i, j, k\})$ (see Equation~\ref{eq:choice_distribution}) and the human probability distributions inferred from the test set across all test triplets and report the mean. \label{sec:eval_measures}

\subsection{Results on THINGS}
\label{sec:results_things}
\begin{figure}[t]
\centering
\begin{subfigure}{.326\textwidth}
    \centering
    \captionsetup{justification=centering}
    \includegraphics[height=1.1in, width=1.0\textwidth]{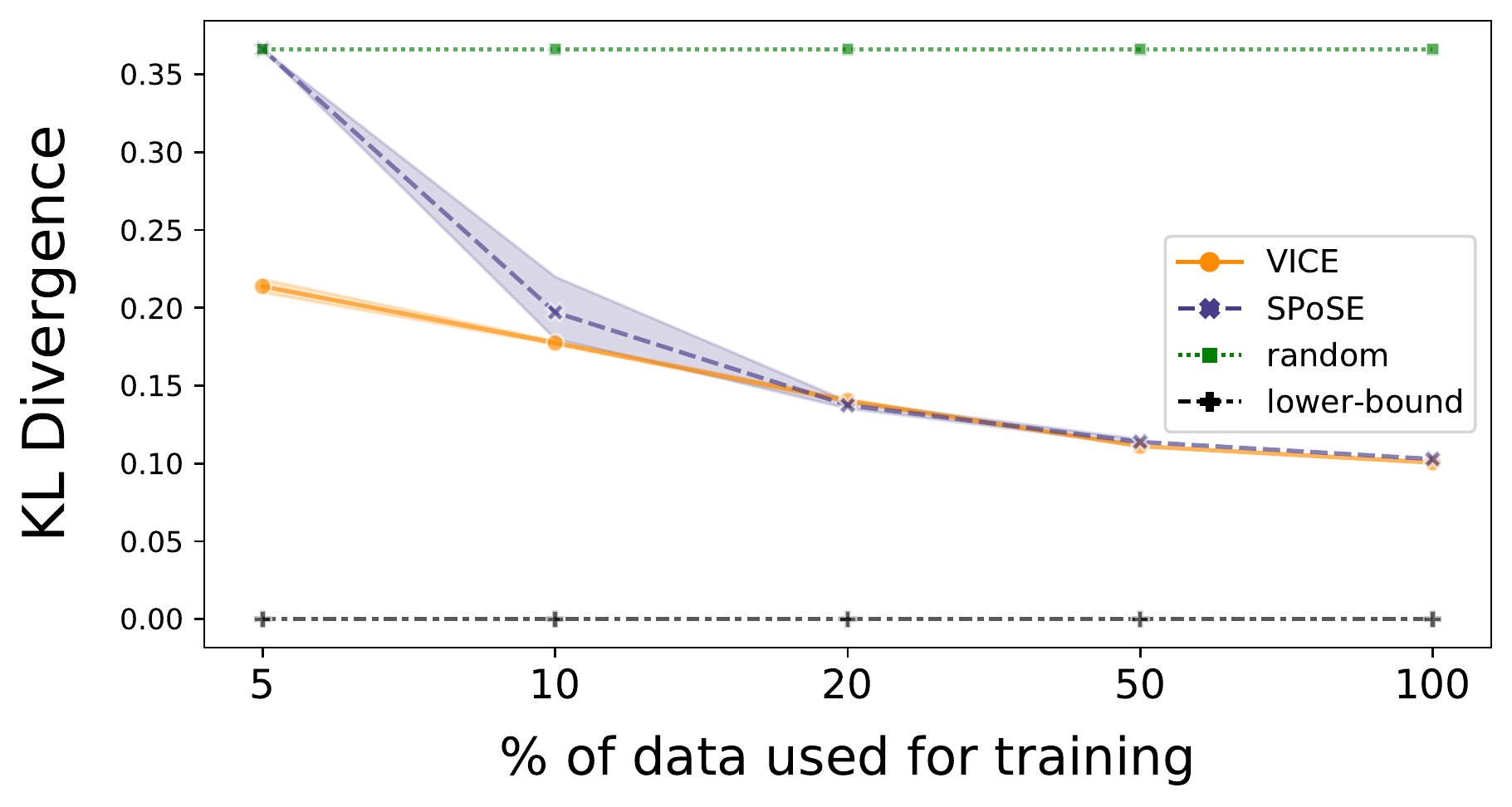}
\end{subfigure}%
\begin{subfigure}{.326\textwidth}
    \centering
    \captionsetup{justification=centering}
    \includegraphics[height=1.1in, width=1.0\textwidth]{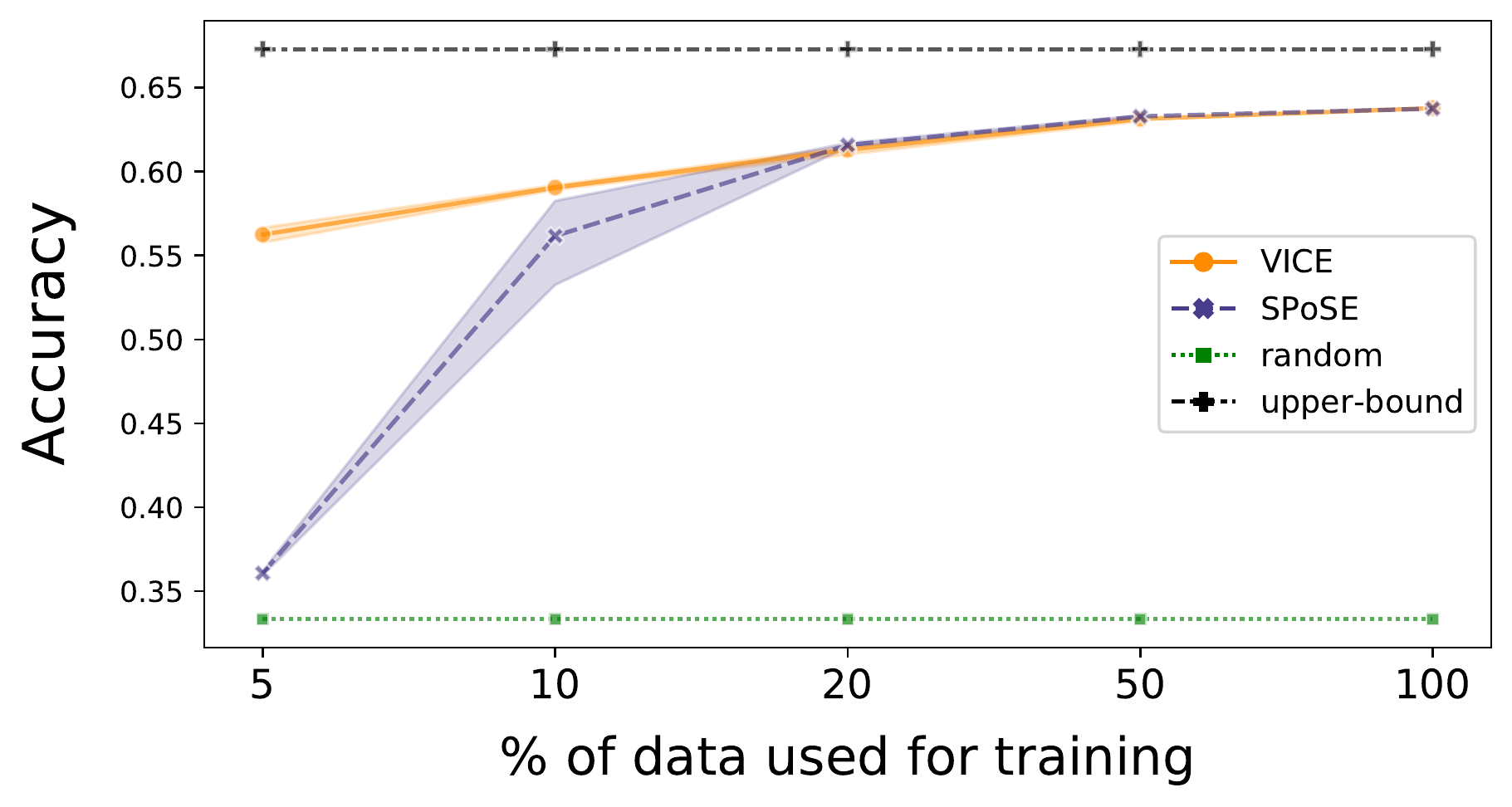}
\end{subfigure}
\begin{subfigure}{.326\textwidth}
    \centering
    \captionsetup{justification=centering}
    \includegraphics[height=1.1in, width=1.0\textwidth]{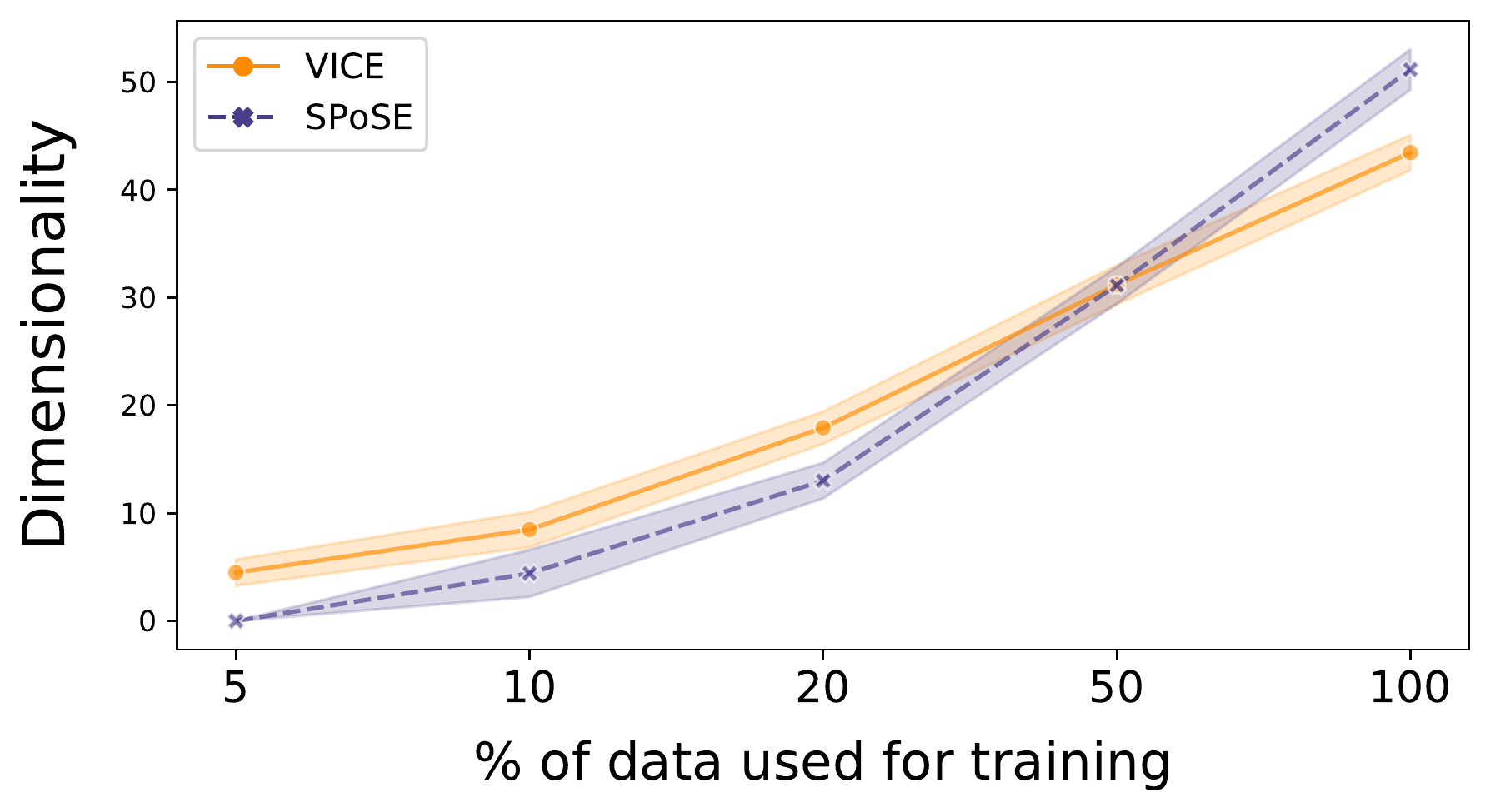}
\end{subfigure}
\caption{VICE vs. SPoSE. \textbf{Left}: KL divergences between model output and human probability distributions. \textbf{Center}: Triplet task prediction accuracies. \textbf{Right}: Number of identified embedding dimensions.
VICE and SPoSE were each trained on differently sized subsets of the \textsc{Things} training data. Error bands visualize 95\% CIs across the different random initializations and partitions of the data.}
\label{fig:data_effiency}
\end{figure}
\noindent{\bf Full dataset}
\label{results:full}
For this experiment we compared a representative model from VICE and SPoSE. The representative models were chosen to be those with the median cross-entropy error on the validation set over the 20 seeds. For VICE, we set the number of Monte Carlo samples to $R = 50$. On the test set, VICE and SPoSE achieved similar prediction accuracies of $0.638$ and $0.637$ respectively (estimated upper bound for accuracy was $0.673$). Likewise, VICE and SPoSE achieved similar average KL divergences of $0.100$ and $0.103$ respectively (random guessing KL divergence was $0.366$). The differences between the median model accuracies and KL divergences were not statistically significant according to a two-sided paired \textit{t}-test over individual test triplets. Hence, VICE and SPoSE predicted triplets approximately equally well when trained on the full dataset.
This is not surprising since Bayesian methods based on Monte Carlo sampling become more like deterministic maximum likelihood estimation in the infinite data limit. If $n \to \infty$, then the left-hand side in the expectation of Equation~\ref{eq:vi_kld}, which accounts for the contribution of the prior, goes to zero. Conversely, the effects of the prior are more prominent when $n$ is small, as we show in the following section.

\noindent{\bf Data efficiency experiments}
\label{results:subsamples}
Performance on small datasets is important in cognitive science, since behavioral experiments often have small sample sizes (e.g., tens to hundreds of volunteer in-lab subjects) compared to \textsc{THINGS}, which is particularly large, or they can be costly to collect. To test whether VICE can model the data better than SPoSE in small sample regimes, we performed experiments training on smaller subsets with sizes equal to 5\%, 10\%, 20\%, and 50\% of the training dataset. For each size, we divided the training set into equal partitions and performed an experiment for each partition, thereby giving multiple results for each subset size. Validation and held-out test sets remained unchanged. In Figure~\ref{fig:data_effiency} we show the average KL divergence (left) and prediction accuracy (middle) across data partitions and 20 random seeds (used to initialize model parameters) for various dataset sizes. The average performance across partitions was used to compute the confidence intervals (CIs). The differences in prediction accuracies and KL divergences between VICE and SPoSE were notable for the 5\% and 10\% data subsets.
In the former, SPoSE predicted only slightly better than random guessing; in the latter, SPoSE showed a large variability between random seeds and data partitions, as can be seen in the 95\% CIs. In comparison, VICE showed small variation in the two performance metrics across random seeds and performed much better than random guessing. The differences between VICE and SPoSE for the $5\%$ and $10\%$ subset scenarios were statistically significant according to a two-sided paired \textit{t}-test  comparing individual triplet predictions between median models ($p < 0.001$). For the full training dataset, VICE used significantly fewer dimensions ($p < 0.001$; unpaired sign test) than SPoSE to achieve comparable performance.

\subsection{Other results}
\noindent {\bf Other datasets} On the \textsc{Adjectives} test set, VICE and SPoSE had accuracies of $0.559$ and $0.562$, respectively (estimated upper bound was $0.607$), and KL divergences of $0.083$ and $0.088$, respectively (random guessing was $0.366$). On the \textsc{Food} test set, VICE and SPoSE median models had accuracies of $0.693$ and $0.698$, respectively. The differences in accuracy (and KL divergence for \textsc{Adjectives}) between VICE and SPoSE for both datasets were not statistically significant according to two-sided paired \textit{t}-tests on the median model predictions across triplets on the test set. However, for \textsc{Food}, VICE used significantly fewer dimensions than SPoSE to achieve similar performance (see Table~\ref{tab:reliability}).

\noindent {\bf Hyperparameters} We observed that VICE's performance is fairly insensitive to hyperparameter selection. Using default hyperparameters, $\sigma_{\text{spike}} = 0.25$, $\sigma_{\text{slab}} = 1.0$, $\pi_{\text{sigma}} = 0.5$, yields an accuracy score within $0.015$ of the best cross-validated model in the full dataset setting, on all three datasets.

\subsection{Reproducibility and Representational stability}
\label{sec:reproducibility}
\noindent {\bf Reproducibility} As mentioned in the introduction, a key criterion for learning concept representations beyond predictive performance is {\em reproducibility}, i.e., learning similar representations for different random initializations on the same training data. To evaluate this, we compared the learned embeddings from 20 differently initialized VICE and SPoSE models. 
The first aspect of reproducibility we investigate is whether the models yield a consistent number of embedding dimensions across random seeds. 

As reported in Table~\ref{tab:reliability}, VICE yielded fewer dimensions than SPoSE with less variance in the number of dimensions for all three datasets. The embedding dimensionality was significantly smaller according to an independent $t$-test for \textsc{Things} ($p < 0.001$) and \textsc{Food} ($p < 0.001$), but not for \textsc{Adjectives} ($p = 0.108$). The difference in the standard deviations for \textsc{Adjectives} was statistically significant according to a two-sided \textit{F}-test ($ p = 0.002$), but was not statistically significant for \textsc{Things} ($ p = 0.283$) or \textsc{Food} ($ p = 0.378$). In addition, we observed that the identified dimensionality is consistent as long as $d$ is chosen to be sufficiently large (see Appendix~\ref{app:experimental_details}).

\begin{table}[h!]

\caption{Reproducibility of VICE and SPoSE. Reported are the means and standard deviations with respect to selected dimensions and the average reproducibility score of dimensions (in \%) across random seeds. Bold means VICE performed statistically significantly better with $\alpha = 0.05$.}
\centering
    \begin{footnotesize}
    \begin{tabularx}{\linewidth} {@{}lXXXXr@{}}
    \toprule 
    &\multicolumn{2}{c}{\textsc{VICE}}&\multicolumn{2}{c}{\textsc{SPoSE}}\\
    \textsc{Data$\setminus$ Metric} & Selected Dims. & Reproducibility & Selected Dims. & Reproducibility \\
    \midrule
    \textsc{Things} & \centering  $\mathbf{44}~(1.59)$ & \centering $\mathbf{87.01\%}$ & \centering $52~(1.82)$ & \centering $81.30$\%  & \\
    \textsc{Adjectives} & \centering  $21~(\mathbf{0.77})$ & \centering $76.76$\% & \centering $22~(1.53)$  & \centering  $71.64$\% & \\
    \textsc{Food} & \centering  $\mathbf{5}~(0.95)$ &  \centering $\mathbf{87.38\%}$ & \centering $16~(1.02)$  & \centering $62.88$\% & \\
    \bottomrule
    \end{tabularx}
    \end{footnotesize}
\label{tab:reliability}
\end{table}
The second aspect to reproducibility we examine is the degree to which the identified dimensions are similar across random initializations up to a permutation. To calculate the number of reproducible dimensions, we associated each embedding dimension of a model with the most similar embedding dimension across the other models. We quantify reproducibility of a dimension as the average Pearson correlation between one dimension and its best match across the 19 remaining models. In Table~\ref{tab:reliability} we report the average relative number of dimensions with a correlation $> 0.8$ across models. The embedding dimensions in VICE were more reproducible than those in SPoSE. The difference in average reproducibility scores was statistically significant according to a two-sided, independent $t$-test for \textsc{Food} ($p = 0.030$) and \textsc{Things} ($p = 0.008$), but not for \textsc{Adjectives} ($p = 0.466$).

\noindent {\bf Representational stability} For all three datasets, VICE found reproducible and stable dimensions across all $20$ random initializations and converged when the embedding dimensionality had not changed over the past $L = 500$ epochs. The median number of epochs to achieve representational stability was comparable for the similarly-sized \textsc{Things} and \textsc{Adjectives} datasets, but occurred later for \textsc{Food} (likely due to the smaller number of gradient updates per epoch). In Appendix~\ref{app:gradient_based_optimization} we show plots that demonstrate that the convergence criterion as defined in \S\ref{method:pruning} worked reliably for all datasets.

\subsection{Interpretability}
\label{sec:interpretability}

\begin{figure}[h!]
\centering
\begin{subfigure}{.245\textwidth}
    \centering
    \includegraphics[width=0.95\textwidth]{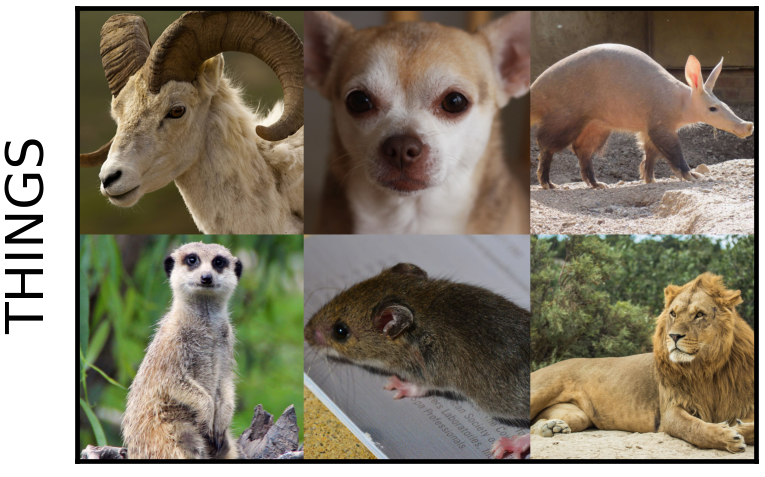}
\end{subfigure}
\begin{subfigure}{.245\textwidth}
    \centering
    \includegraphics[width=0.90\textwidth]{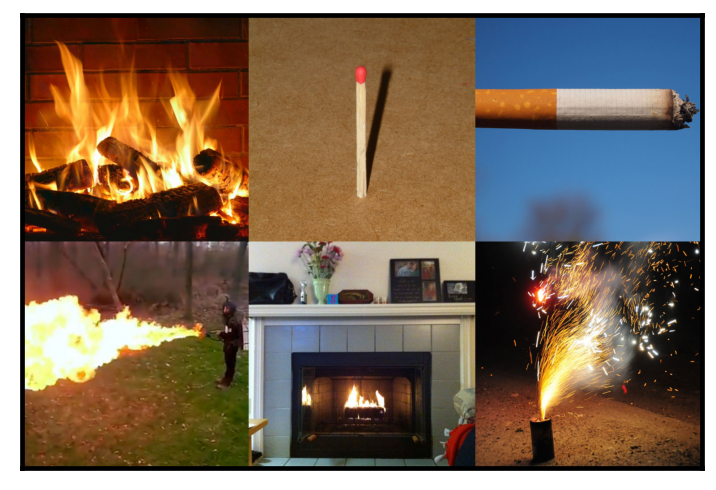}
\end{subfigure}
\begin{subfigure}{.245\textwidth}
    \centering
    \includegraphics[width=0.90\textwidth]{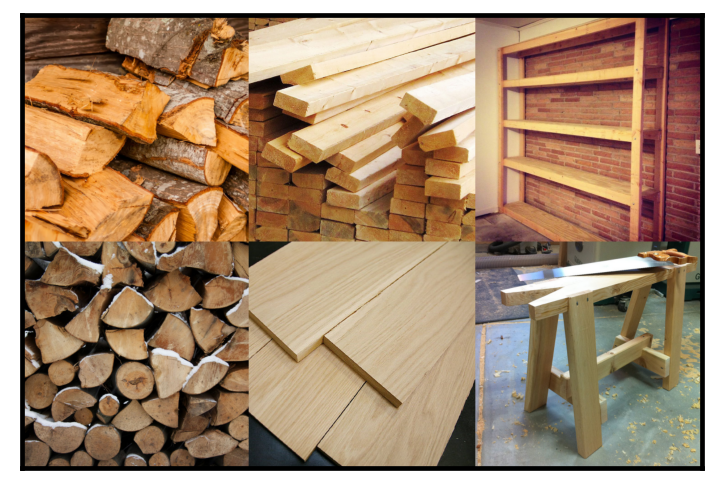}
\end{subfigure}
\begin{subfigure}{.245\textwidth}
    \centering
    \captionsetup{justification=centering}
    \includegraphics[width=0.90\textwidth]{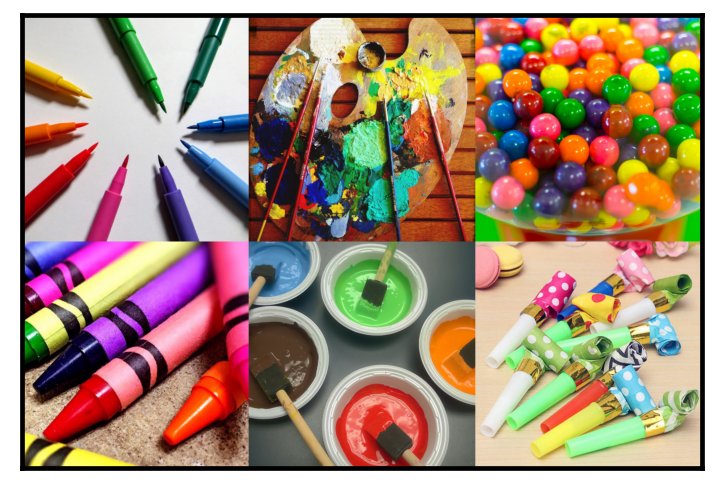}
\end{subfigure}
\begin{subfigure}{.244\textwidth}
    \centering
    \includegraphics[width=0.95\textwidth]{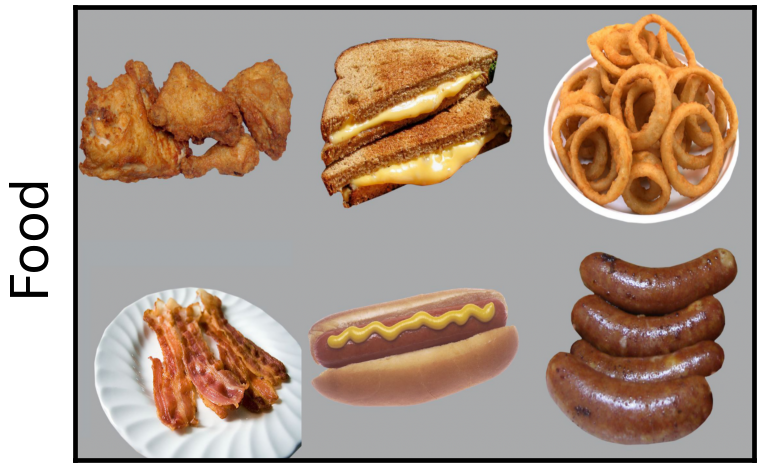}
\end{subfigure}%
\begin{subfigure}{.245\textwidth}
    \centering
    \includegraphics[width=0.90\textwidth]{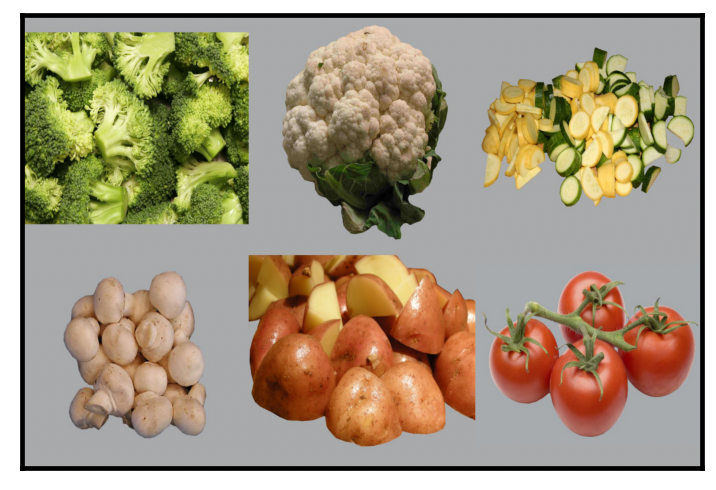}
\end{subfigure}
\begin{subfigure}{.245\textwidth}
    \centering
    \includegraphics[width=0.90\textwidth]{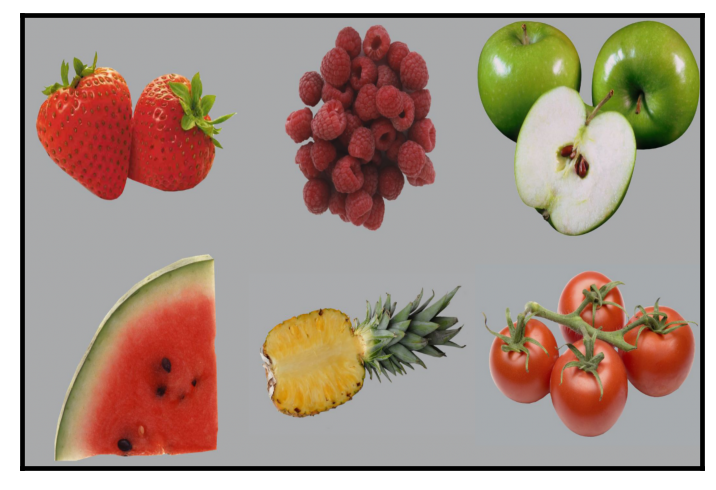}
\end{subfigure}
\begin{subfigure}{.245\textwidth}
    \centering
    \includegraphics[width=0.90\textwidth]{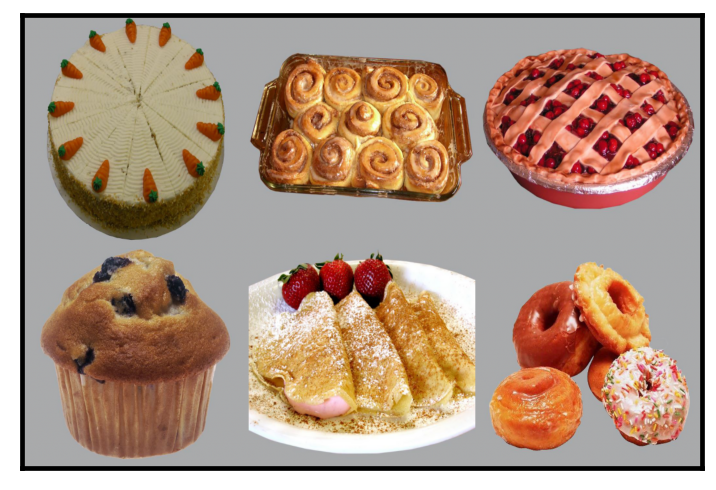}
\end{subfigure}
\begin{subfigure}{.244\textwidth}
    \centering
    \includegraphics[height=0.7in, width=0.95\textwidth]{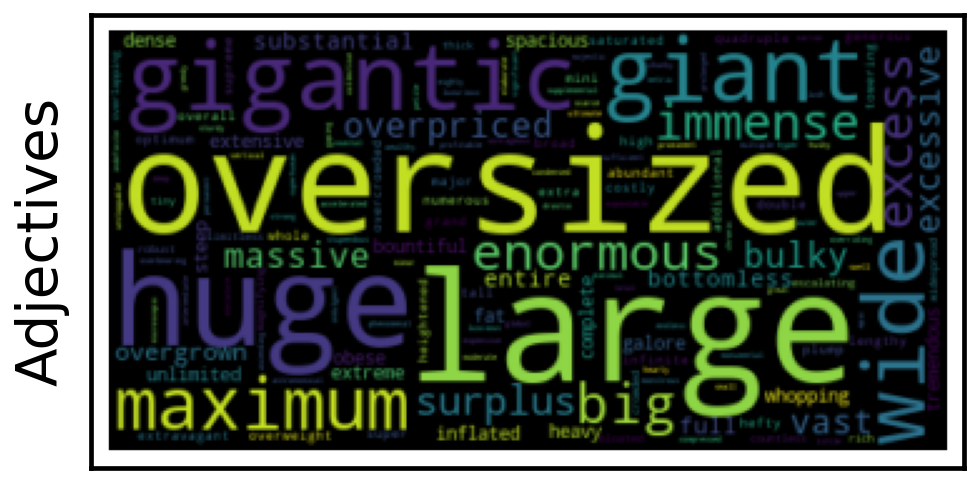}
\end{subfigure}%
\begin{subfigure}{.245\textwidth}
    \centering
    \includegraphics[height=0.7in, width=0.90\textwidth]{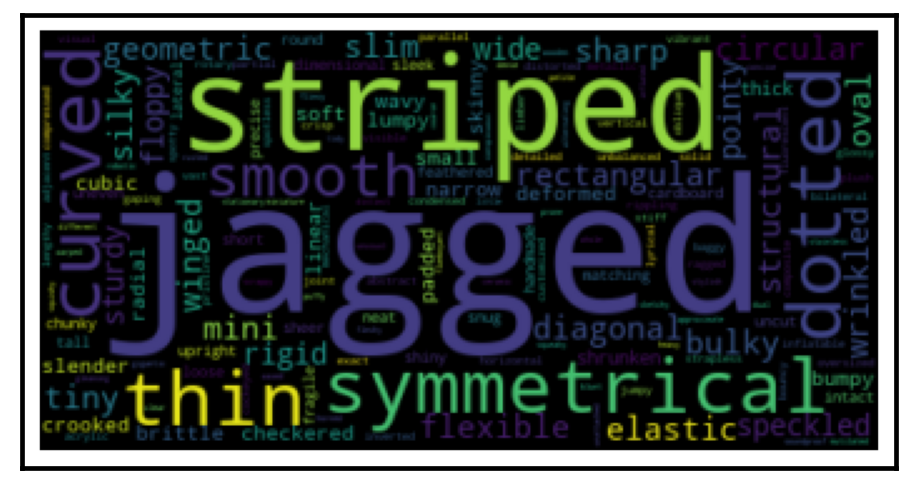}
\end{subfigure}
\begin{subfigure}{.245\textwidth}
    \centering
    \includegraphics[height=0.7in, width=0.90\textwidth]{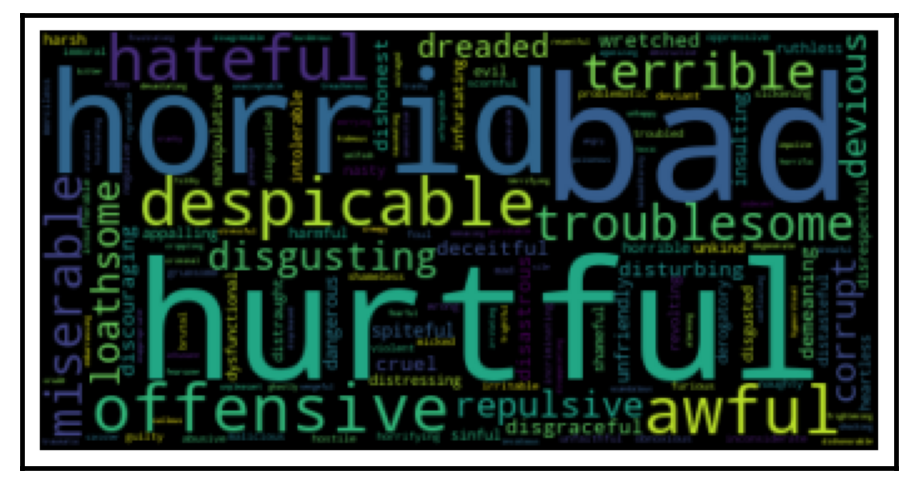}
\end{subfigure}
\begin{subfigure}{.245\textwidth}
    \centering
    \includegraphics[height=0.7in, width=0.90\textwidth]{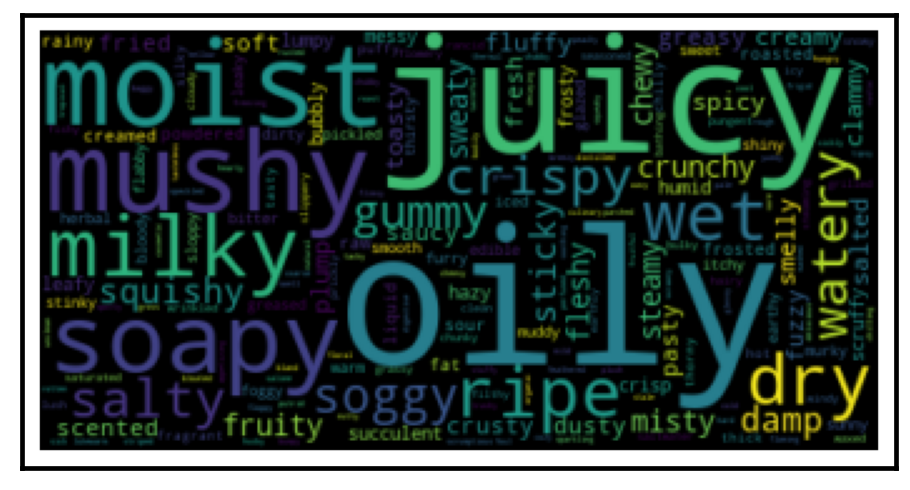}
\end{subfigure}
\caption{Four example VICE dimensions showing the top six objects for \textsc{Things} (\textit{top})
and \textsc{Food} (\textit{middle}), and wordclouds for \textsc{Adjectives} (\textit{bottom}).}
\label{fig:interpretability}
\end{figure}
One of the benefits of SPoSE is the interpretability of the dimensions of its concept embeddings which was evidenced through experiments with humans \citep{hebart2020revealing}. VICE dimensions are equally interpretable due to similar constraints on its embedding space. Therefore, it is easy to sort objects within a dimension of the VICE mean embedding matrix $\mu$ in descending order, to obtain human judgments of what an embedding dimension represents. To illustrate the interpretability of VICE we show in Figure~\ref{fig:interpretability} four example dimensions of a representative model for \textsc{Things}, \textsc{Food}, and \textsc{Adjectives}. For \textsc{Things} the dimensions appear to represent \textit{categorical}, \textit{functional}, \textit{structural}, and \textit{color-related} information. For \textsc{Food} the dimensions appear to be a combination of \textit{fried}, \textit{vegetable}, \textit{fruit}, and \textit{sweet} food items.
For \textsc{Adjectives} the dimensions seem to reflect \textit{size/magnitude}, \textit{visual appearance}, \textit{negative valence}, and \textit{sensory} adjectives. In Appendix~\ref{app:object_dimensions} we show VICE dimensions for \textsc{Things} only, since manuscripts about the other datasets are still in preparation, and, hence, cannot be shown publicly.

\section{Limitations}
\label{sec:limitations}
The goal of our method is to identify general mental representations of objects. The dimensions identified by a model reflect semantic characteristics that explain task performance for many subjects in the experimental subject population (here, Amazon Mechanical Turk subjects in the United States). As such, it is possible that they reflect biases widespread in that population. Furthermore, the choice of population may affect the identified dimensions. That is, a chéf may classify food items differently from a lay subject, and a linguist would likely have a more complex representation of an adjective. The effects of expertise or developmental stage in mental representations are of obvious interest to cognitive scientists. Therefore, we envision further research in those areas, which may additionally provide some indication of how widely representations can vary.

\section{Conclusion}
\label{sec:conclusion}
One of the central goals in the cognitive sciences is the development of computational models of mental representations of object concepts. Such models may serve as a component for other behavioral prediction models or as a basis for identifying the nature of concept representations in the human brain. In this paper we introduced VICE, a novel VI approach for learning interpretable object concept embeddings by modeling human behavior in an triplet odd-one-out task. We showed that VICE predicts human behavior close to the estimated best attainable performance across three datasets and that VICE outperforms a competing method, SPoSE, in low data regimes. In addition, VICE has several characteristics that are desirable for scientific use. It has an automated procedure for determining the number of dimensions sufficient to explain the data, which further enables the detection of convergence during training. This leads to better model reproducibility across different random initializations and hyperparameter settings. As a result, VICE can be used out-of-the-box without requiring to perform an extensive search over random seeds or tuning model hyperparameters. Finally, we introduced a PAC learning bound on the generalization performance for a VICE model. Although VICE assumes a shared mental representation across participants - akin to much of the literature -, we believe that the VI framework can be leveraged to model inter-individual differences, which we plan to do in future work.

\subsubsection*{Acknowledgments}
LM and RV acknowledge support by the Federal Ministry of Education and Research (BMBF) for the Berlin Institute for the Foundations of Learning and Data (BIFOLD) (01IS18037A). LM and MNH acknowledge support by a Max Planck Research Group grant awarded to MNH by the Max Planck Society. FP, CZ, and PM acknowledge the support of the National Institute of Mental Health Intramural Research Program (ZIC-MH002968). PM acknowledges the support of the Naval Postgraduate School's Research Initiation Program. This study utilized the high-performance computational capabilities of the Biowulf Linux cluster at the National Institutes of Health, Bethesda, MD (http://biowulf.nih.gov) and the Raven and Cobra Linux clusters at the Max Planck Computing \& Data Facility, Garching, Germany (https://www.mpcdf.mpg.de/services/supercomputing/). The authors would like to thank Chris Baker for useful initial discussions, Shruti Japee, Jason Avery, and Alex Martin for sharing data, and Erik Daxberger, Lorenz Linhardt, Adrian Hill, Niklas Schmitz, and Marco Morik for valuable feedback on earlier versions of the paper.

\bibliography{neurips}
\bibliographystyle{iclr2022_conference}
\clearpage

\section*{Checklist}
\begin{enumerate}

\item For all authors...
\begin{enumerate}
  \item Do the main claims made in the abstract and introduction accurately reflect the paper's contributions and scope?
    \answerYes{}
  \item Did you describe the limitations of your work?
    \answerYes{See \S~\ref{sec:conclusion}.}
  \item Did you discuss any potential negative societal impacts of your work?
    \answerNA{}
  \item Have you read the ethics review guidelines and ensured that your paper conforms to them?
    \answerYes{}{}
\end{enumerate}

\item If you are including theoretical results...
\begin{enumerate}
  \item Did you state the full set of assumptions of all theoretical results?
    \answerYes{} 
        \item Did you include complete proofs of all theoretical results?
    \answerYes{}
\end{enumerate}

\item If you ran experiments...
\begin{enumerate}
  \item Did you include the code, data, and instructions needed to reproduce the main experimental results (either in the supplemental material or as a URL)? \answerYes{See \url{https://github.com/LukasMut/VICE} for our codebase. See \url{https://osf.io/jum2f/} and \url{https://things-initiative.org/} for data.}
    
  \item Did you specify all the training details (e.g., data splits, hyperparameters, how they were chosen)?
    \answerYes{See \S~\ref{sec:data}, \S~\ref{sec:experimental_setup} and Appendix~\ref{app:experimental_details}.}
    \item Did you report error bars (e.g., with respect to the random seed after running experiments multiple times)?
    \answerYes{See Figure~\ref{fig:data_effiency} and Table~\ref{tab:reliability}.}
    \item Did you include the total amount of compute and the type of resources used (e.g., type of GPUs, internal cluster, or cloud provider)?
    \answerYes{See Appendix~\ref{app:experimental_details}}
\end{enumerate}

\item If you are using existing assets (e.g., code, data, models) or curating/releasing new assets...
\begin{enumerate}
  \item If your work uses existing assets, did you cite the creators?
    \answerYes{}
  \item Did you mention the license of the assets?
    \answerYes{}
  \item Did you include any new assets either in the supplemental material or as a URL?
    \answerYes{See \url{https://github.com/LukasMut/VICE}.}
  \item Did you discuss whether and how consent was obtained from people whose data you're using/curating?
    \answerYes{}
  \item Did you discuss whether the data you are using/curating contains personally identifiable information or offensive content?
    \answerYes{See Appendix~\ref{app:dataset_details}.}
\end{enumerate}

\item If you used crowdsourcing or conducted research with human subjects...
\begin{enumerate}
  \item Did you include the full text of instructions given to participants and screenshots, if applicable?
    \answerNA{}
  \item Did you describe any potential participant risks, with links to Institutional Review Board (IRB) approvals, if applicable?
    \answerNA{}
  \item Did you include the estimated hourly wage paid to participants and the total amount spent on participant compensation?
    \answerNA{}
    
\end{enumerate}

We re-used crowdsourced data collected by various collaborators, and attribute it to them in the main paper. Even though we did not collect these data ourselves, we provide a summary of the information asked for in 5. in Appendix~\ref{app:dataset_details}, as well as a short description of the task and stimuli in Section~\ref{method:triplet_task}.

\end{enumerate}

\appendix
\clearpage

\section{Triplet task}
\label{app:triplet_task}

The triplet task compares favorably to other possible alternatives for eliciting human pairwise object similarity judgments.  Asking for similarity ratings directly, on a numeric or qualitative scale, introduces the difficulty of differing inter-individual calibration of ratings.  Having a choice task reduces this inter-individual variability by reducing the number of possible actions the user can make, e.g., three in the triplet task.  Within choice tasks, there are also $k$-way forced choice tasks which specify a reference object and query the user for the most similar object to the reference among a list of $k$ candidate objects.  It seems likely that the choice in the forced choice task is not strictly driven by pairwise item similarities between the choices and the reference, but that the reference may influence the selection of features used to assess item similarities.  For instance, the subject may choose the most prominent feature associated with the reference, and then make a uni-dimensional assessment of which candidate scores highest on that feature.  This is less likely to be an issue in the triplet task because with no reference object, the subject has to consider all three pairs of items to evaluate which pair has greater similarity than the other two pairs.  In particular, it would be less feasible to try to find a single feature that would be relevant for all three pairs of objects.  Hence, we speculate that the triplet task may promote more multidimensional similarity assessments than the $k$-way forced-choice task.

Many studies find it convenient to pool triplet tasks from multiple participants.  This could be justified under an assumption where subjects within a population all share the same common mental representations of items, which in turn determine the pairwise similarities that govern the choice of the odd-one-out selection.  Alternatively, it could be the case that the representation of each item has random variation, both within a subject and across subjects.  In such a scenario, it could be useful to use a model which assumes that a \emph{probabilistic} (rather than a \emph{deterministic}) embedding governs the triplet choices. As we discussed theoretically in \S\ref{sec:vice} and showed empirically in \S\ref{sec:experiments}, variational bayesian inference is one way that appears appropriate to model such a probabilistic embedding.

\section{Objective function}

\subsection{Probability model and the log-likelihood function}
\label{app:data_loglikelihood}

In the following we define the log-likelihood function of the data given the embedding matrix, $\log p(\mathcal{D}|X)$. Recall that $\mathcal{D} \coloneqq \big(\{i_s, j_s, k_s\}, \{y_s, z_s\}\big)_{s=1}^{n}$. For simplicity, we assume that each triplet $\{i_{s}, j_{s}, k_{s}\}$ is chosen uniformly at random from the collection of all possible sets of object triplets $\mathcal{T}$. That is, $\{i_{1}, j_{1}, k_{1}\},\ldots, \{i_{n}, j_{n}, k_{n}\} \overset{\text{i.i.d.}}{\sim} \mathcal{U}(\mathcal{T})$. Hence, the probability for choosing a triplet, $p(\{i_{s}, j_{s}, k_{s}\})$, is the same for all $\mathcal{D}_{s} \in \{\mathcal{D}_{1},\ldots,\mathcal{D}_{n}\}$ and can therefore be treated as a coefficient for computing $p(\mathcal{D}|X)$.

For our probability model the log-likelihood of the data given the embedding matrix, $\log p(\mathcal{D}|X)$, can then be defined as
\begin{align}
    \log p(\mathcal{D}|X)~=&~\log \prod_{s=1}^{n} p(\mathcal{D}_{s}|X)\notag\\
    =&~\log \prod_{s=1}^{n} p(\{y_s,z_s\},\{i_s, j_s, k_s\}|X)\notag\\
    =&~\log \prod_{s=1}^{n} p(\{y_s,z_s\}|\{i_s, j_s, k_s\}, X)p(\{i_s, j_s, k_s\}|X)\\
    =&~\log \prod_{s=1}^{n} p(\{y_s,z_s\}|\{i_s, j_s, k_s\},X)p(\{i_s, j_s, k_s\}).\notag
\end{align}
Recall that $p(\{i_s, j_s, k_s\})$ is a coefficient that is the same for all $s \in \{1,\ldots,n\}$. We let $C \coloneqq p(\{i_{s}, j_{s}, k_{s}\})$ and thus we can further rewrite, 
\begin{align*}
    \log \prod_{s=1}^{n} p(\{y_s,z_s\}|\{i_s, j_s, k_s\},X)p(\{i_s, j_s, k_s\})=&~\log \left[C^{n}\prod_{s=1}^{n} p(\{y_s,z_s\}|\{i_s, j_s, k_s\},X)\right]\\
    =&~\log C^{n} + \log \prod_{s=1}^{n} p(\{y_s,z_s\}|\{i_s, j_s, k_s\},X)\\
    =&~n\log C +\sum_{s=1}^{n} \log p(\{y_s,z_s\}|\{i_s, j_s, k_s\},X),
\end{align*}
where $(6)$ follows from the chain rule of probability. Note that the constant $n\log C$ can be ignored in the minimization of the VICE objective function, whose derivation is outlined in Appendix~\ref{app:vbi}.

\subsection{KL divergence}
\label{app:vbi}
In VI, one minimizes the KL divergence between $q_\theta(X)$, the variational posterior, and, $p(X|\mathcal{D})$, the true posterior,
\begin{equation*}
    \argmin_\theta D_{\text{KL}} ( q_{\theta}(X) \Vert p(X|\mathcal{D}) ),
\end{equation*}
where
\begin{align*}
    D_{\text{KL}}( q_{\theta}(X) \Vert p(X|\mathcal{D}) )~=&~\mathbb{E}_{q_\theta(X)} \left[ \log \frac{q_{\theta}(X)}{p(X|\mathcal{D}) } \right]\\
    =&~\mathbb{E}_{q_\theta(X)} \big[\log q_{\theta}(X) -  \log p(X|\mathcal{D}) \big]\\ 
    =&~\mathbb{E}_{q_\theta(X)} \left[ \log q_{\theta}(X) - \log \frac{p(\mathcal{D}|X)p(X)}{p(\mathcal{D})} \right]\\ 
    =&~\mathbb{E}_{q_\theta(X)} \big[\log q_{\theta}(X) - \log p(X) - \log p(\mathcal{D}|X) \big] + \log p(\mathcal{D})\\
    =&~\mathbb{E}_{q_\theta(X)} \left[ \log q_{\theta}(X) - \log p(X) - n\log C -\sum_{s=1}^{n} \log p(\{y_s,z_s\}|\{i_s, j_s, k_s\},X) \right]\\ &+ \log p(\mathcal{D})\\
    =&~\mathbb{E}_{q_\theta(X)} \left[ \log q_{\theta}(X) - \log p(X) - \sum_{s=1}^{n} \log p(\{y_s,z_s\}|\{i_s, j_s, k_s\},X) \right]\\ &- n\log C + \log p(\mathcal{D}).
\end{align*}

\subsection{VICE objective function}
Because $n\log C$ and $\log p(\mathcal{D})$ are constants and not functions of the variational parameters, we can ignore both terms in the minimization and get the following VICE objective
\begin{equation*}
     \argmin_\theta~\mathbb{E}_{q_\theta(X)}\left[\log q_\theta(X) - \log p(X) - \sum_{s=1}^{n} \log p(\{y_{s}, z_{s}\}|\{i_s, j_s, k_s\},X)\right].
\end{equation*}
Multiplying this by $(1 / n)$, where $n$ is the number of training examples, does not change the minimum of the objective function and results in
\begin{equation*}
     \argmin_\theta~\mathbb{E}_{q_\theta(X)}\left[\frac{1}{n}(\log q_\theta(X) - \log p(X)) - \frac{1}{n}\sum_{s=1}^{n} \log p(\{y_{s}, z_{s}\}|\{i_s, j_s, k_s\},X)\right].
\end{equation*}

\section{Optimization, convergence and prior}

\subsection{Gradient-based optimization}
\label{app:gradient_based_optimization}

In gradient-based optimization, the gradient of an objective function with respect to the parameters of a model, $\nabla{\mathcal{L(\theta)}}$, is used to iteratively find parameters $\hat{\theta}$ that minimize that function. Equation~\ref{eq:vi_objective} computes the expected log-likelihood of the entire training data. However, using every training data point to compute a gradient update is computationally expensive for large datasets and often generalizes poorly for non-convex objective functions \citep{smith2020generalization}. In VICE, we stochastically approximate the training log-likelihood using random subsets (i.e., mini-batches) of the training data, where each mini-batch consists of $B$ triplets \citep{robbins1951stochastic}. This leads to an objective function that is a doubly stochastic approximation of Equation~\ref{eq:vi_objective} \citep{titsias14}, due to sampling weights from the variational distribution, $q_{\theta} \in \mathcal{Q}$, and sampling a random mini-batch of training examples during each gradient step (i.e., performing mini-batch gradient descent),
\begin{equation*}
     \mathcal{L}_{batch} = \frac{1}{n}\left[\log q_\theta(X_{\theta,\epsilon}) - \log p(X_{\theta,\epsilon})\right] - \frac{1}{B}\sum_{b=1}^{B} \log p(\{y_{b}, z_{b}\}|\{i_b, j_b, k_b\},[X_{\theta,\epsilon}]_+),
\end{equation*}
where $p(\{y, z\}|\{i, j, k\}, X)$ for a single sample is defined in Equation~\ref{eq:triplet_probs}. To find parameters, $\theta$, that optimize Equation~\ref{eq:vi_objective}, we iteratively update both the means, $\mu$, and the standard deviations, $\sigma$, of each VICE dimension, by
\begin{equation*}
    \mu_{t+1} \coloneqq \mu_{t} - \alpha  \nabla_{\mu_{t}} \mathcal{L}_{batch}
\end{equation*}
and
\begin{equation*}
    \sigma_{t+1} \coloneqq \sigma_{t} - \alpha  \nabla_{\sigma_{t}} \mathcal{L}_{batch},
\end{equation*}
where $\alpha$ is the learning rate for $\theta$.

\subsection{Convergence}

We define \emph{convergence} as the point in time, $t^{*}$, where the number of embedding dimensions has not changed by a \textit{single} dimension over the past $L$ epochs. We denote this as the point of \emph{representational stability}. To ensure convergence, we recommend letting $L$ be relatively large (e.g., $L \gg 100$). We found $L = 500$ to work well for our experiments. Figure~\ref{fig:convergence} shows that the convergence criterion defined in \S\ref{method:pruning} worked reliably for all datasets.


\begin{figure}[h!]
\centering
\begin{subfigure}{.329\textwidth}
    \centering
    \includegraphics[height=1.1in, width=1.\textwidth]{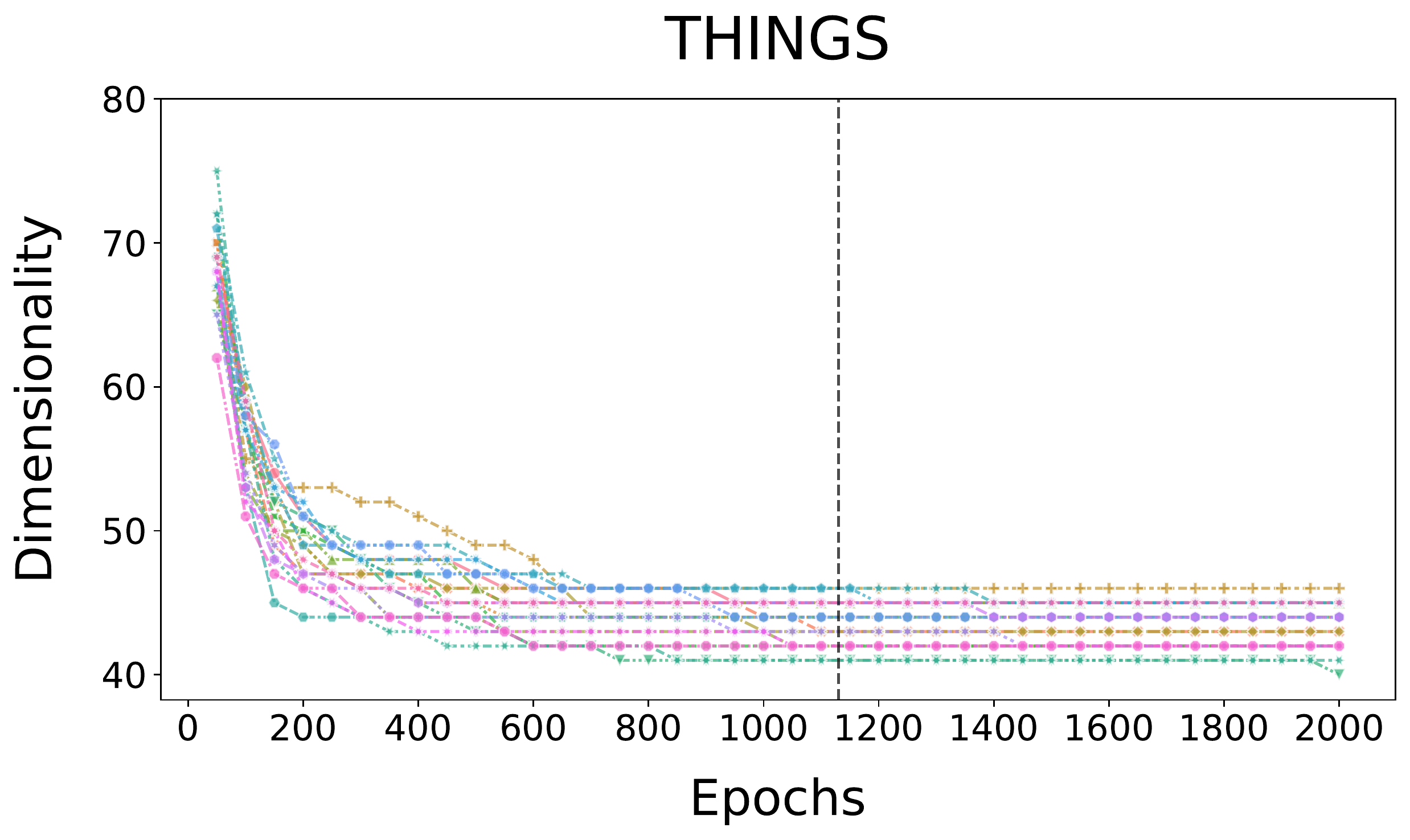}
\end{subfigure}
\begin{subfigure}{.329\textwidth}
    \centering
    \includegraphics[height=1.1in, width=1.\textwidth]{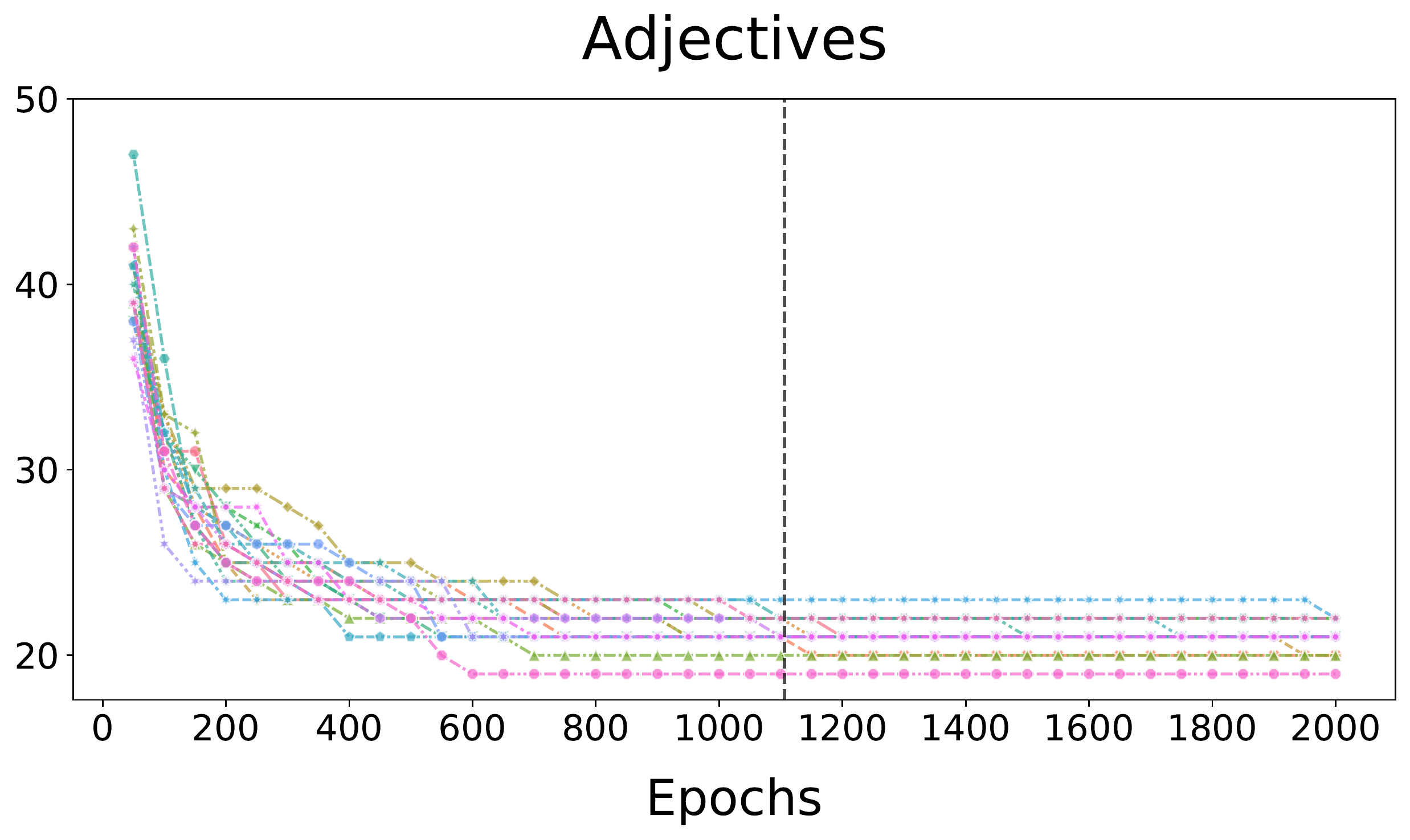}
\end{subfigure}
\begin{subfigure}{.329\textwidth}
    \centering
    \includegraphics[height=1.1in, width=1.\textwidth]{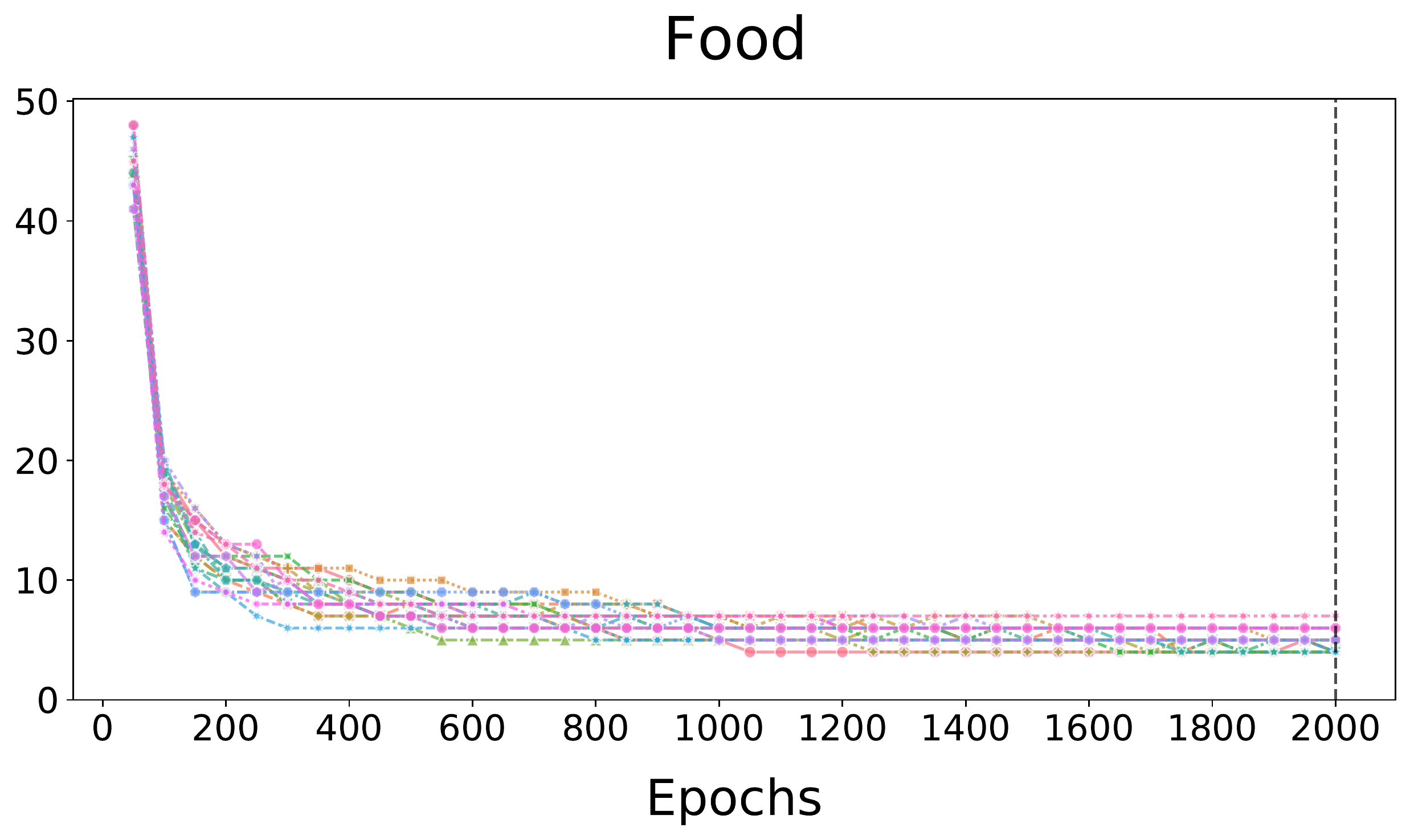}
\end{subfigure}
\caption{These plots show the number of embedding dimensions over time for \textsc{Things}, \textsc{Adjectives}, and \textsc{Food} respectively. Each line in a plot corresponds to a single random seed. Vertical dashed lines indicate the median number of epochs (across random seeds) until the convergence criterion with $L = 500$ was met.}
\label{fig:convergence}
\end{figure}

\subsection{(In-)efficient prior choice}
\label{app:histograms_of_spose_dimensions}

\begin{figure}[h!]
\centering
\begin{subfigure}{.45\textwidth}
    \centering
    \captionsetup{justification=centering}
    \includegraphics[height=1.1in]{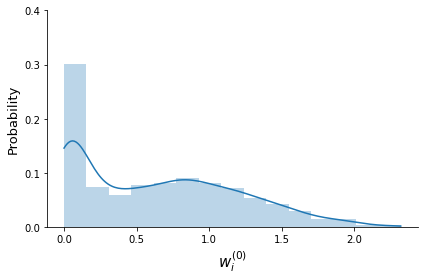}
\end{subfigure}%
\begin{subfigure}{.45\textwidth}
    \centering
    \captionsetup{justification=centering}
    \includegraphics[height=1.1in]{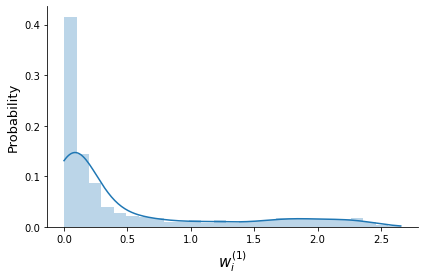}
\end{subfigure}
\caption{Histograms and PDFs of the first two SPoSE dimensions after training.}
\label{fig:spose_dimension}
\end{figure}
As discussed in \S\ref{sec:vbi}, SPoSE imposes a combination of an $\ell_{1}$ penalty and a non-negativity constraint on its embedding values. This is analogous to having an exponential prior on those values. If we consider the distribution of values across objects for the two most important SPoSE dimensions in Figure~\ref{fig:spose_dimension}, we can see that they show a distribution with a spike around $0$, and a much smaller, wide slab of probability mass for non-zero values. Overcoming the exponential prior is data inefficient. SPoSE was developed using a dataset that was orders of magnitude larger than what a typical psychological experiment might collect, a setting in which this issue would likely not manifest itself. However, SPoSE has not been tested on the more common, smaller datasets, and our results suggest that the implicit SPoSE prior leads to suboptimal results in those regimes, when compared to the spike-and-slab prior used by VICE, as shown in Figure~\ref{fig:data_effiency}.

\section{Proof of Proposition 3.1}
\label{app:proof}
\begin{proof}
Recall that
\begin{align*}
    \hat{R}(X) &:= \sum_{s=1}^n \frac{1}{n}\mathbbm{1}\left( \left\{y_s,z_s\right\} \neq \argmax_{\{y, z\}} p\left(\{y, z\}|\{i_s, j_s, k_s\},X\right) \right)
\end{align*}
and
\begin{align*}
    R(X) &:= P_{\{y', z'\}|\{i', j', k'\}}\left( \left\{  y',z' \right\} \neq \argmax_{\{y, z\}} p\left(\{y, z\}|\{i', j', k'\},X\right)\right).
\end{align*}
Using the union bound we have that
\begin{align}
P\left(\sup_{X \in \mathbb{A}^{ m\times d}} \hat{R}(X)-  R(X) \ge \epsilon\right) 
  &\le\sum_{X \in \mathbb{A}^{ m\times d}} P\left( \hat{R}(X)-  R(X) \ge \epsilon\right). \label{eqn:union-bound}
\end{align}
Since $\mathbb{E}\left[\hat{R}(X)\right] = R(X)$ and the summands of $\hat{R}(X)$ are independent and satisfy
\begin{align*}
  0 \le\frac{1}{n}\mathbbm{1}\left( \left\{y_s,z_s\right\} \neq \argmax_{\{y, z\}} p\left(\{y, z\}|\{i_s, j_s, k_s\},X\right) \right)\le \frac{1}{n}
\end{align*}
for all $X\in \mathbb{A}^{ m\times d}$ we can apply Hoeffding's Inequality to yield 
\begin{equation}
P\left( \hat{R}(X)-  R(X) \ge \epsilon\right)
   \le \exp\left(-\frac{2\epsilon^2}{\sum_{i=1}^n \left(\frac{1}{n}-0\right)^2} \right)\\
   = \exp\left(-2\epsilon^2n \right) \label{eqn:hoeffdings-bound}
\end{equation}
for all $X\in \mathbb{A}^{ m\times d}$. The set $\mathbb{A}^{ m\times d}$ contains $(k+1)^{md}$ elements so combining Equation~\ref{eqn:union-bound} and Equation~\ref{eqn:hoeffdings-bound} we get
\begin{align*}
P\left(\sup_{X \in \mathbb{A}^{ m\times d}} \hat{R}(X)-  R(X) \ge \epsilon\right) 
  &\le (k+1)^{md}\exp\left(-2\epsilon^2n \right)\\
  &= \exp\left(md\log(k+1) - 2\epsilon^2n \right)\\
  \iff P\left(\sup_{X \in \mathbb{A}^{ m\times d}} \hat{R}(X)-  R(X) < \epsilon\right)&\ge 1- \exp\left(md\log(k+1) - 2\epsilon^2n \right). 
\end{align*}

To arrive at the proposition statement we need that 

\begin{equation*}
1-\delta \ge 1- \exp\left(md\log(k+1) - 2\epsilon^2n \right).
\end{equation*}
Solving for $n$ we get
\begin{align*}
    1- \exp\left(md\log(k+1) - 2\epsilon^2n \right)     &\ge1-\delta\\
    \iff \exp\left(md\log(k+1) - 2\epsilon^2n \right)   &\le\delta\\
    \iff md\log(k+1) - 2\epsilon^2n    &\le\log(\delta)\\
    \iff \left( md\log(k+1) +\log(1/\delta)\right)/(2\epsilon^2) &\le n.
\end{align*}
\end{proof}

\section{Experimental details}
\label{app:experimental_details}

\paragraph{Training} Although we have developed a reliable convergence criterion for VICE (see \S\ref{method:pruning}), to guarantee a fair comparison between VICE and SPoSE, each model configuration was trained, using 20 different random seeds for $2000$ epochs. Each VICE model was initialized with two weight matrices, $\mu \in \mathbb{R}^{m \times d}$ and $\log{(\sigma)} \in \mathbb{R}^{m \times d}$, where $m$ refers to the number of unique objects in the dataset (\textsc{Things}: $m = 1854$; \textsc{Adjectives}: $m = 2372$; \textsc{Food}: $m = 36$) and $d$, the initial dimensionality of the embedding, was set to $100$ (the $\log$ ensures that $\sigma$ is positive). In preliminary experiments, we observed that for sufficiently large $d$ our dimensionality reduction method (see \S\ref{method:pruning}) prunes to similar representations, regardless of the choice of $d$. This is why we did not consider models with larger initial embedding dimensionality.

\paragraph{Weight initialization} We initialized the means of the variational distributions, $\mu$, following a Kaiming He initialization \citep{DBLP:conf/iccv/HeZRS15}. The logarithms of the scales of the variational distributions, $\log{(\sigma)}$, were initialized with $\epsilon = -1/s_{\mu}$, where $s_{\mu}$ is the standard deviation over the entires of $\mu$, so $\log(\sigma) = \epsilon  \mathbf{1}$. 

\paragraph{Hyperparameter grid} The final grid was the Cartesian product of the following hyperparameter sets: $\pi_{\text{spike}} = \{0.1, 0.2, 0.3, 0.4, 0.5, 0.6, 0.7, 0.8, 0.9\}$, $\sigma_{\text{spike}} = \{0.125, 0.25, 0.5, 1.0, 2.0 \}$, $\sigma_{\text{slab}} = \{0.25, 0.5, 1.0, 2.0, 4.0, 8.0\}$, subject to the constraint $\sigma_{\text{spike}} \ll \sigma_{\text{slab}}$, where combinations that did not satisfy the constraint were discarded. We observed that setting $\sigma_{\text{slab}} > 8.0$ led to numerical overflow issues during optimization.
For SPoSE, we used the same range as was done in \citet{ZhengPBH19}, with a finer grid of 64 values.

\paragraph{Optimal hyperparameters} We found the optimal VICE hyperparameter combination through a two step procedure. First, among the final $180$ combinations (see Cartesian product above), we applied our pruning method (see \S\ref{method:pruning}) to each model and kept the subsets of dimensions where more than 5 objects had non-zero weight. For SPoSE we used the pruning heuristic proposed in \citet{ZhengPBH19}.  We defined the optimal hyperparameter combination as that with the lowest average cross-entropy error on the validation set across twenty different random initializations. The optimal hyperparameter combinations for VICE and SPoSE on the full datasets are reported in Table~\ref{tab:optimal_hyperparams}.

\begin{table}[ht!]
\caption{Optimal hyperparameter combinations for VICE and SPoSE according to the average cross-entropy error on the validation set for the three datasets \textsc{Things}, \textsc{Adjectives}, and \textsc{Food}.}
\centering
    \begin{footnotesize}
    \begin{tabularx}{\linewidth} {@{}lXXXXr@{}}
    \toprule 
    &\multicolumn{3}{c}{\textsc{VICE}}&\multicolumn{1}{c}{\textsc{SPoSE}}\\
    \textsc{Data$\setminus$ Hyperparam.} & \centering $\sigma_{\text{spike}}$ & \centering $\sigma_{\text{slab}}$ & \centering $\pi$ & \centering $\lambda$ & \\
    \midrule
    \textsc{Things} & \centering  $0.125$  & \centering $0.5$ & \centering $0.6$ & \centering $5.75$  & \\
    \textsc{Adjectives} & \centering  $0.25$  & \centering $0.5$ & \centering $0.6$ & \centering $4.96$  & \\
    \textsc{Food} & \centering  $0.25$  & \centering $1.0$ & \centering $0.8$ & \centering $2.90$  & \\
    \bottomrule
    \end{tabularx}
    \end{footnotesize}
\label{tab:optimal_hyperparams}
\end{table}

\paragraph{Computational resources} This study utilized the high-performance computational capabilities of the Biowulf Linux cluster at the National Institutes of Health, Bethesda, MD (\url{http://biowulf.nih.gov}) and the Raven and Cobra Linux clusters at the Max Planck Computing \& Data Facility (MPCDF), Garching, Germany (\url{https://www.mpcdf.mpg.de/services/supercomputing/}). The total number of CPU hours used were approximately $40,000,000$ (Biowulf) and $5,000$ (MPCDF).

\paragraph{Accuracy upper bound}

For \textsc{Things} \citep{ZhengPBH19,hebart2020revealing} and \textsc{Adjectives}, a random subset of triplets was chosen to be presented multiple times to different participants. For a given triplet - repeated over many participants - this provides a way to estimate the distribution of responses over all participants. If for a given triplet the response distribution is $(0.2, 0.3, 0.5)$, then the best predictor for the participants' responses is the third object. This results in an accuracy score of 50\%, averaged across repetitions. Alternatively, one may observe a distribution of $(0.1,0.8,0.1)$ for a different triplet. The best one could do is to identify the second object as the odd-one-out, and get 80\% accuracy. From this, we can see that no classifier can do worse than 33\%. Taking the average best prediction accuracy over all of the repeated triplets gives us an estimate for the best possible average prediction score. This is defined to be the upper bound for the prediction performance.

\section{Generalization error bound}

\subsection{Algorithm for generalization error upper bound}
\label{app:retrospective_algorithm}

\begin{algorithm}
    \caption{Algorithm for generalization error upper bound via adaptive quantization}
    \label{alg:adaptive_quantization}
    \begin{algorithmic}
    \Require $\mu$
    \State $M \gets \max(\mu)$
    \State $\{\Delta_1,\hdots, \Delta_m\} \gets \{0.05, ..., 1.0\}$ \Comment{pre-determined set of quantization scales}
    \State $\alpha \gets 0.05$ \Comment{desired Type I error control rate}
    \State $\delta \gets \frac{\alpha}{m}$
    \For{$i \in \{1, \hdots, m\} $}
        \State $\mu^{\dagger}_{i} \coloneqq \quantize{(\mu, \Delta_{i})}$ \Comment{quantization with $\Delta_{i}$}
        \State $\hat{R}_i = \loss{(\mu^{\dagger}_{i}, \mathcal{D})}$ \Comment{(training) error}
        \State $\bar{R}_i \coloneqq \hat{R}_i + \sqrt{\frac{m d \log(\lceil M/\Delta_i \rceil + 1) + \log(1/\delta)}{2N_{\text{train}}}}$ \Comment{generalization upper bound}
    \EndFor
    \State $i^{*} \coloneqq \argmin \{\bar{R}_{i}, ..., \bar{R}_{m}\}$
    \Ensure $(\mu^{\dagger}_{i^{*}}~, \bar{R}_{i^{*}})$ \Comment{$\bar{R}_{i^{*}}$ holds with probability $1 - \alpha$}
    \end{algorithmic}
\end{algorithm}

We have the flexibility of choosing the quantization scale post-hoc.  As long as we search over a pre-specified set of $m$ quantization scales $\{\Delta_1,\hdots, \Delta_m\}$, using a union bound, the PAC bound holds simultaneously for all quantized embeddings with probability at least $1-m\delta$.  Therefore, we can find the quantization scale that gives us the best probabilistic upper bound on generalization error. 

\subsection{Quantization}
\label{app:quantization_results}

This section describes a number of empirical findings which support the feasibility of obtaining useful bounds by using our proposed quantization-based PAC bound (see \S\ref{sec:pac_bound}) and the algorithm for obtaining retrospective generalization bounds (see Appendix~\ref{app:retrospective_algorithm}).

Recall that we require two assumptions for our bounding approach to be effective. Specifically, we assume,

 \begin{enumerate}
     \item Sparse embeddings obtained by either SPoSE or VICE can be quantized in a relatively coarse fashion, with only marginal losses in predictive performance.
     \item There exists an upper bound $M$ on the largest value in an embedding.
 \end{enumerate}
 
In the following we present empirical results which demonstrate that those assumptions are satisfied for the three datasets, \textsc{Things}, \textsc{Adjectives}, and \textsc{Food}, which we have used to evaluate VICE.

\begin{figure*}[ht!]
\centering
\begin{subfigure}{.33\textwidth}
    \centering
    \includegraphics[height=1.2in, width=1.0\textwidth]{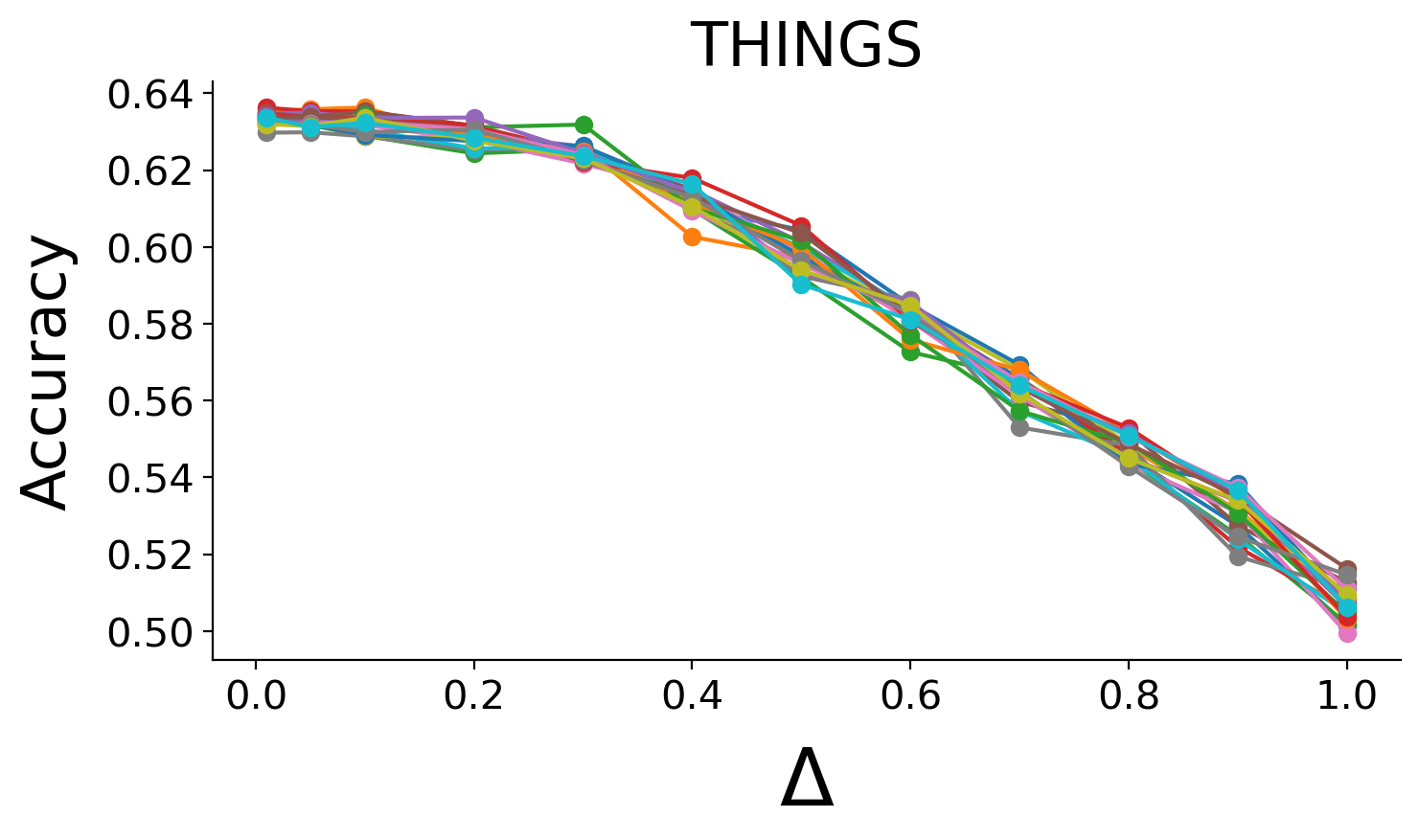}
\end{subfigure}%
\begin{subfigure}{.33\textwidth}
    \centering
    \includegraphics[height=1.2in, width=1.0\textwidth]{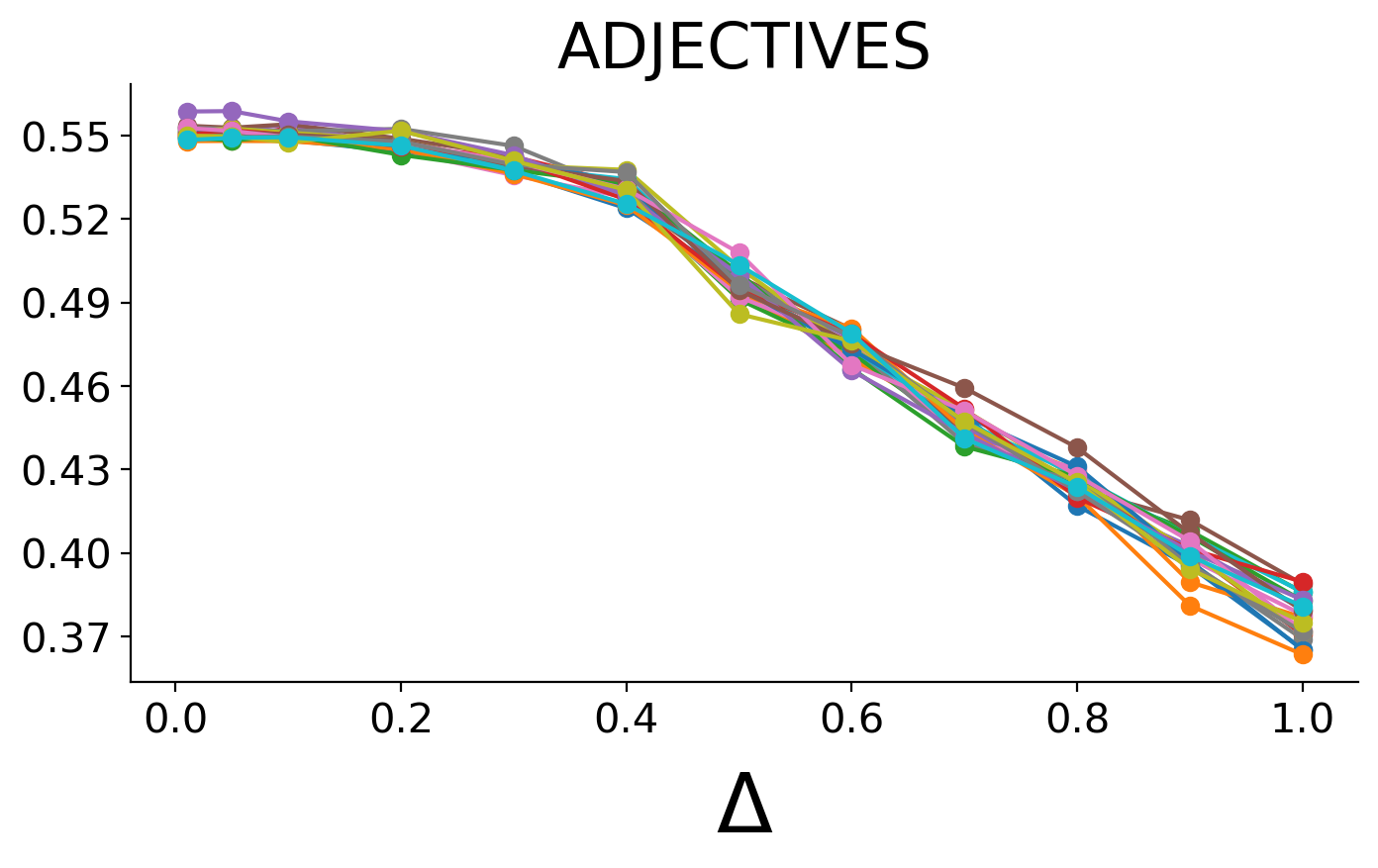}
\end{subfigure}
\begin{subfigure}{.33\textwidth}
    \centering
    \includegraphics[height=1.2in, width=1.0\textwidth]{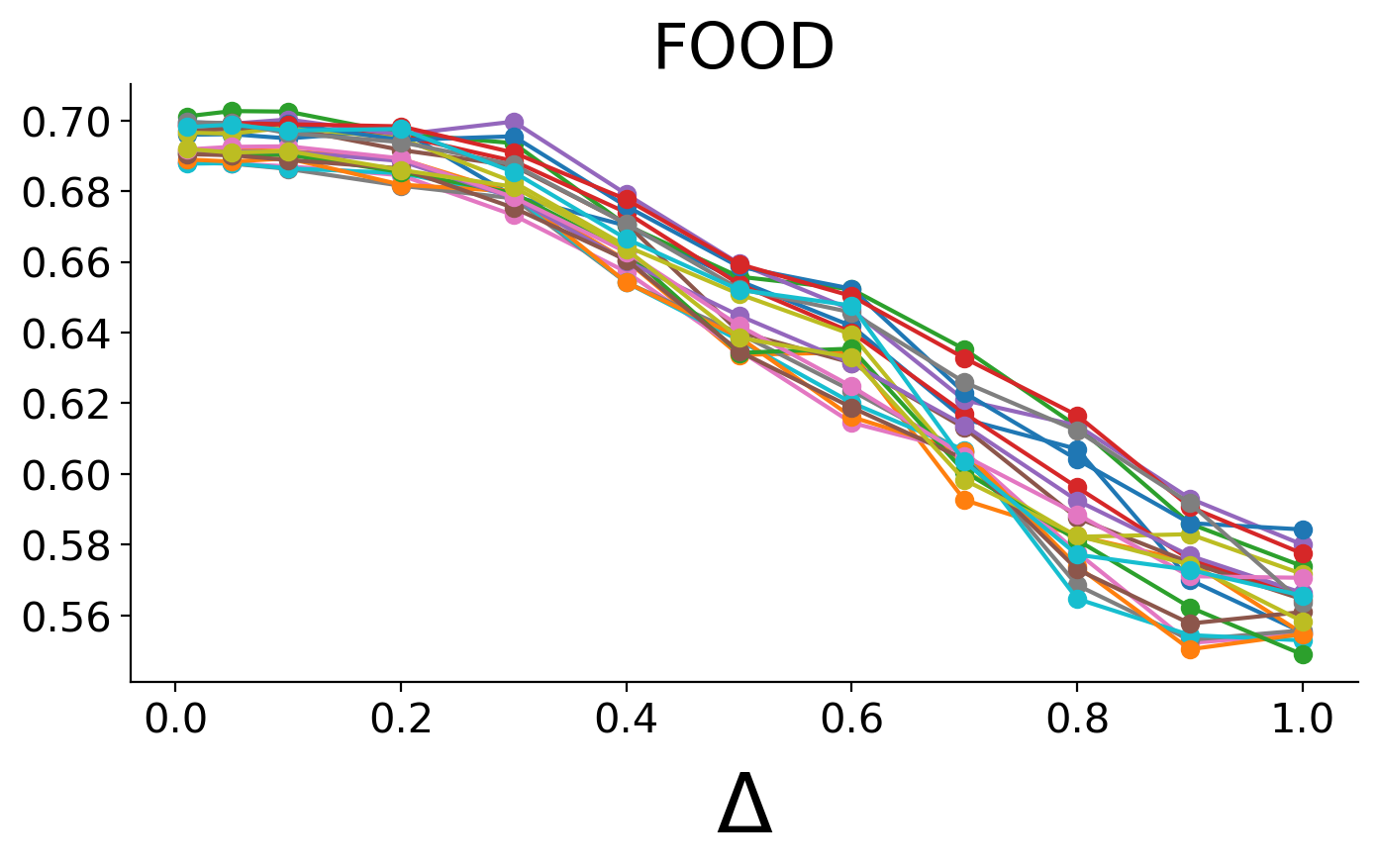}
\end{subfigure}%
\caption{Test performance as a function of the quantization scale, $\Delta$, for twenty random initializations for \textsc{Things}, \textsc{Adjectives}, and \textsc{Food}.}
\label{fig:quantization_test_errors}
\end{figure*}

Recall that it is possible to choose a certain quantization scale, $\Delta$, and round an embedding value to a non-negative integer multiple of $\Delta$.
While we employ quantization mainly to use STL bounds for finite hypothesis classes, it could have benefits for interpretation as well (e.g., a dimension would consist of labels such as zero (0), very low (0.5), low (1), medium (1.5), high (2)).

In Figure~\ref{fig:quantization_test_errors} we show generalization performances of VICE models as a function of the quantization scale, $\Delta$, for \textsc{Things}, \textsc{Adjectives}, and \textsc{Food}. To quantize the VICE embeddings, we used the same set of quantization scales for every dataset, $\Delta \in \{0.05, \hdots, 1.0\}$, as defined in Algorithm~\ref{alg:adaptive_quantization}. As $\Delta$ increases, the number of bins, to which the embedding values can possibly be assigned, decreases. That is, the space of possible embedding values gets smaller as a function of increasing $\Delta$, and, therefore, maintaining generalization performance becomes more difficult. Note that quantization with $\Delta = 0$ is equivalent to performing inference with the original embeddings.

As a result, the optimal generalization bounds that we get by using Algorithm~\ref{alg:adaptive_quantization} are typically obtained with $\Delta \leq 0.2$.  However, we explore a much larger range since \emph{a priori} we do not know the level of granularity needed to preserve most of the information in an embedding. Theoretically, there exist embeddings, such as highly sparse embeddings with nearly binary elements, which could maintain high accuracy at large quantization scales.  Hence, the optimal level of quantization for the bound is an empirical question that may vary across datasets.  However, we limit the upper range to 1, because if the largest weight is less than 2.7 (as we will examine below), then $\Delta=1$ already reduces the number of distinct weight values to 3, which is small enough that we are likely to see diminishing returns by considering even larger discretization scales.

\paragraph{Upper bound}

The upper bound, $M$, as defined in \S\ref{sec:pac_bound}, for the VICE mean embeddings was approximately $2.7$ for all datasets. Empirically, we found the maximum weights for \textsc{Things}, \textsc{Adjectives}, and \textsc{Food} to be $2.6285$, $2.2130$ and $2.4307$, respectively, across all random initializations.  Theoretically, this upper bound is not surprising due to a combination of two factors.  First, in most datasets, the cross-entropy error goes to infinity as weights become arbitrarily large, due to the increasing over-confidence of incorrect probability estimates.  Hence, the optimal cross-entropy is achieved with bounded weights. Second, almost any kind of regularization, including the $\ell_{1}$ regularization used in SPoSE and the spike-and-slab regularization used for VICE, further encourages the weights to shrink. The spike-and-slab prior, in particular, penalizes large weights much higher than small weights, which is a desirable property in gradient-based optimization.

\section{Dataset acquisition}
\label{app:dataset_details}
All datasets were collected by crowd-sourcing human responses to the triplet task described in \S\ref{method:triplet_task} on the Amazon Mechanical Turk platform, using workers located in the United States. For \textsc{Adjectives}, all words deemed offensive were removed from the list of adjectives being considered. There are no offensive images in either \textsc{Things} or \textsc{Food}. All workers provided informed consent, and were compensated financially for their time ($0.5~c$ per response, and additional $10~c$ per completed HIT).  Workers could participate in blocks of 20 triplet trials and could choose to work on as many such blocks as they liked. The online research was approved by the Office of Human Research Subject Protection at the National Institutes of Health and conducted in accordance with all relevant ethical regulations. Worker ages were not assessed, and no personally identifiable information was collected. 

\paragraph{\textsc{Things} and \textsc{Adjectives}}

A total of 5,526 workers participated in collecting the first dataset (5,301 after exclusion; 3,159 female, 2,092 male, 19 other, 31 no response). A total of 336 workers participated in collecting the second dataset (325 after exclusion; 156 female, 103 male, 66 not reported).
Workers were excluded if they exhibited overly fast responses in at least 5 sets of 20 trials (the speed cut-off was $25\%$ or more responses $< 800$ms and $50\%$ or more responses $< 1,100$ms) or if they carried out at least 200 trials and showed overly deterministic responses ($> 40\%$ of responses in one of the three odd-one-out positions; expected value, $33\%$).

\paragraph{\textsc{Food}}
A total of 554 subjects participated in collecting the dataset (487 after exclusion). Workers were excluded if they exhibited overly fast responses (reaction time of less than 500ms). They were also excluded if they failed on either of two catch trials in each HIT, where they saw images of ‘+’, ‘-‘, ’-’ or ‘=’, ‘+’, or ‘=’ instead of food pictures;  they were instructed on the slide to select the ‘+’. 

\section{Interpretability}
\label{app:object_dimensions}

Here, we display the top $6$ objects for each of the $45$ VICE dimensions \textsc{Things} \citep{hebart2019things}. Objects were sorted in descending order according to their absolute embedding value. As we have done for every other experiment, we used the \textit{pruned} median model to guarantee the extraction of a representative sample of embedding dimensions without being over-optimistic with respect to their interpretability (see \S\ref{sec:results_things} for how the median model was identified).

\begin{figure*}[ht!]
\centering
\begin{subfigure}{.45\textwidth}
    \centering
    \includegraphics[width=1.0\textwidth]{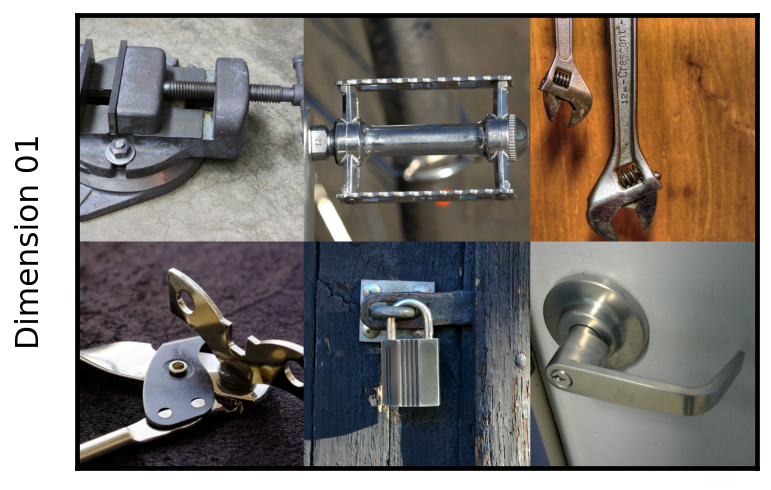}
\end{subfigure}%
\begin{subfigure}{.45\textwidth}
    \centering
    \includegraphics[width=1.0\textwidth]{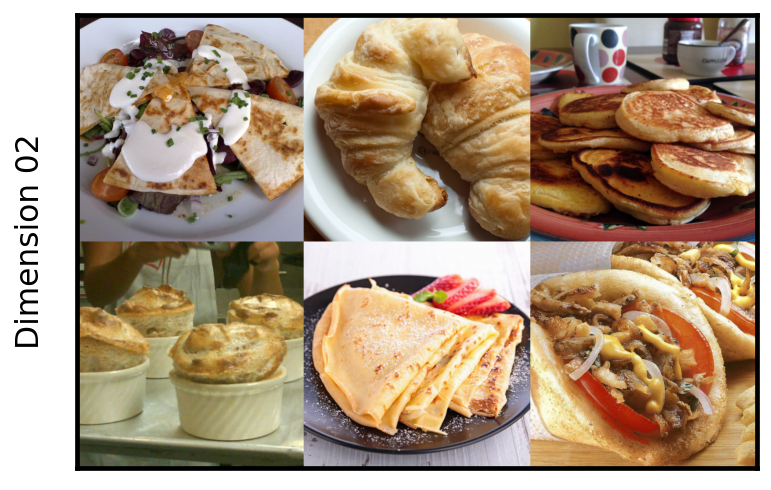}
\end{subfigure}
\begin{subfigure}{.45\textwidth}
    \centering
    \includegraphics[width=1.0\textwidth]{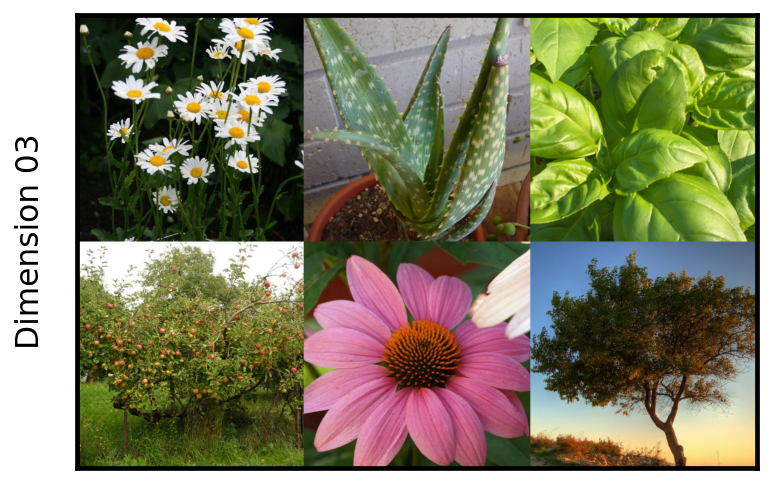}
\end{subfigure}%
\begin{subfigure}{.45\textwidth}
    \centering
    \includegraphics[width=1.0\textwidth]{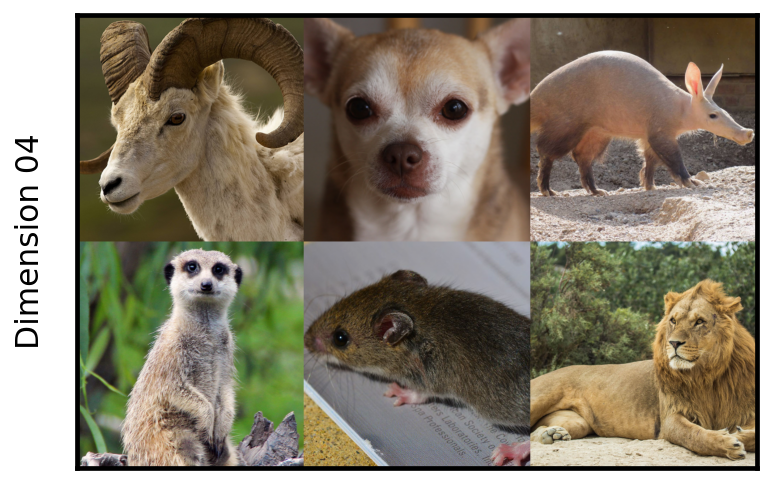}
\end{subfigure}
\begin{subfigure}{.45\textwidth}
    \centering
    \includegraphics[width=1.0\textwidth]{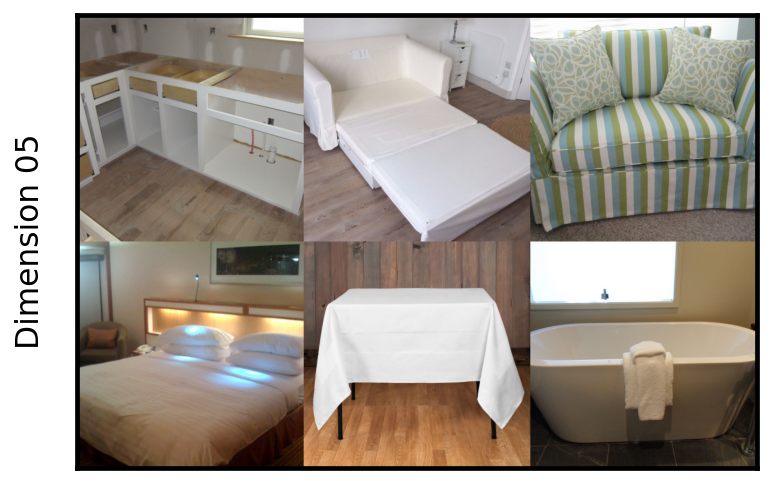}
\end{subfigure}%
\begin{subfigure}{.45\textwidth}
    \centering
    \includegraphics[width=1.0\textwidth]{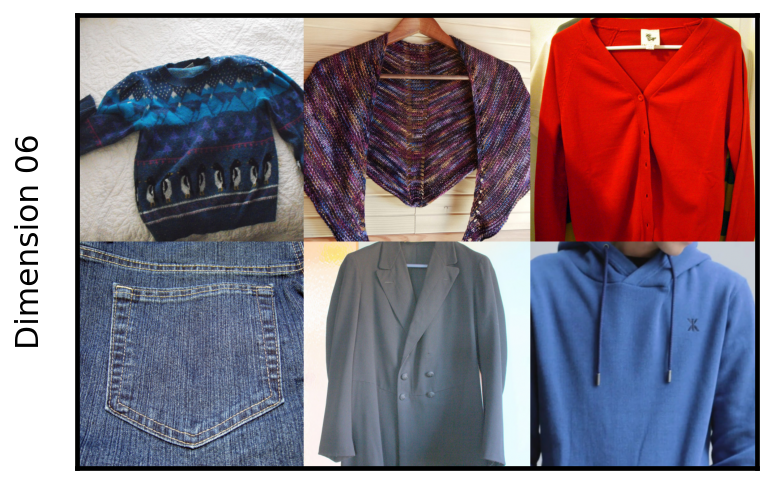}
\end{subfigure}
\begin{subfigure}{.45\textwidth}
    \centering
    \includegraphics[width=1.0\textwidth]{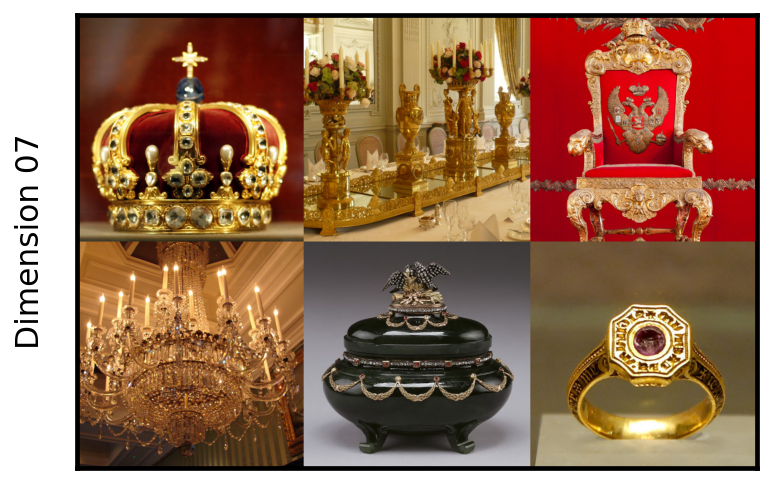}
\end{subfigure}%
\begin{subfigure}{.45\textwidth}
    \centering
    \includegraphics[width=1.0\textwidth]{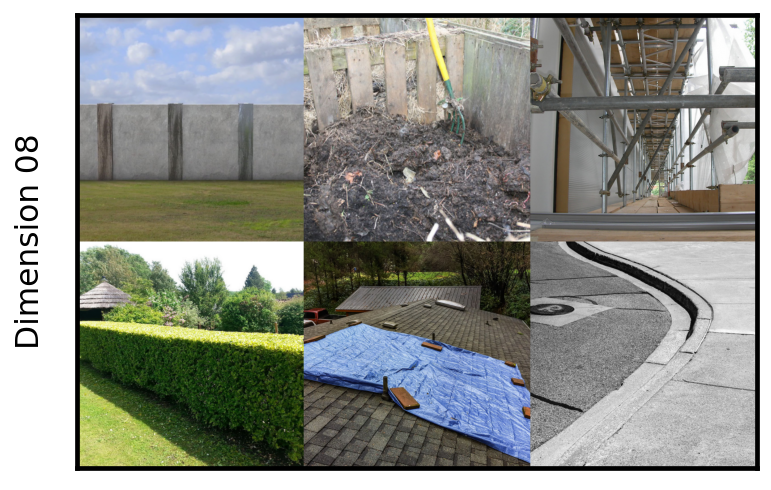}
\end{subfigure}
\begin{subfigure}{.45\textwidth}
    \centering
    \includegraphics[width=1.0\textwidth]{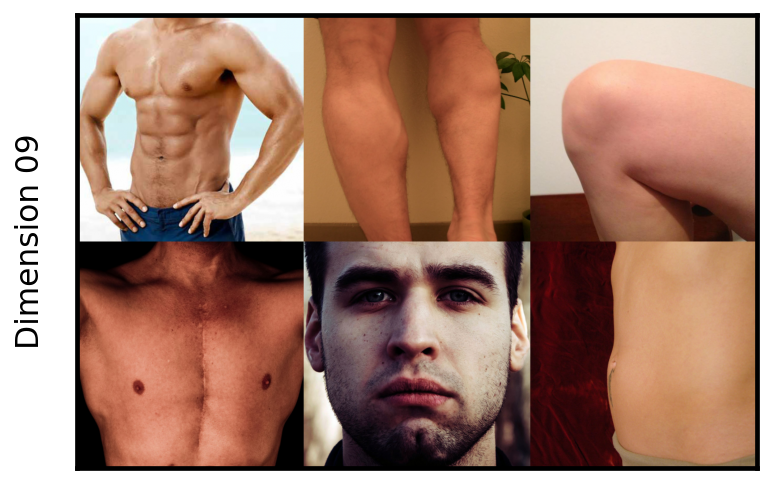}
\end{subfigure}%
\begin{subfigure}{.45\textwidth}
    \centering
    \includegraphics[width=1.0\textwidth]{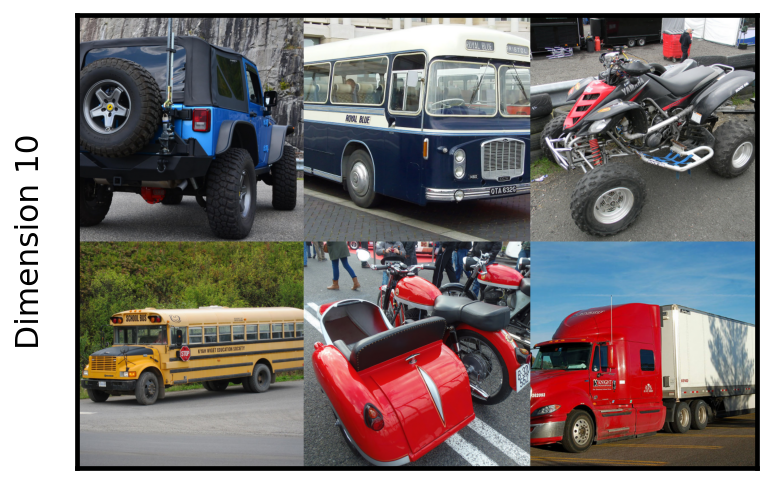}
\end{subfigure}
\caption{\textsc{Things} Dimensions 1-10.}
\end{figure*}

\begin{figure*}[t]
\centering
\begin{subfigure}{.45\textwidth}
    \centering
    \includegraphics[width=1.0\textwidth]{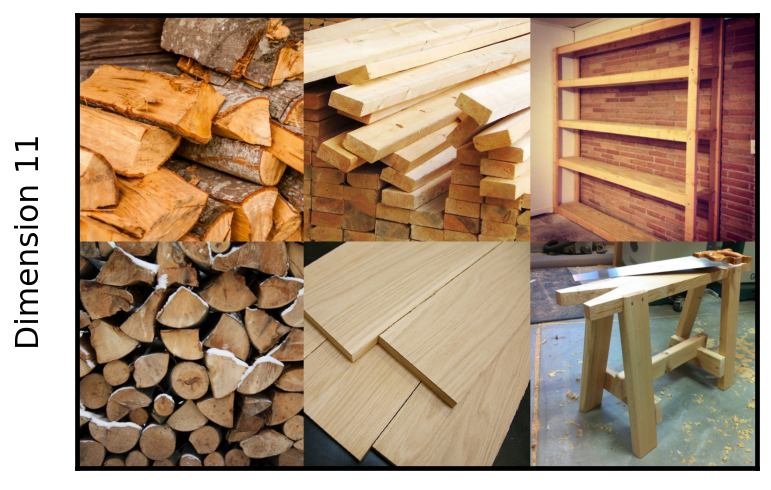}
\end{subfigure}%
\begin{subfigure}{.45\textwidth}
    \centering
    \includegraphics[width=1.0\textwidth]{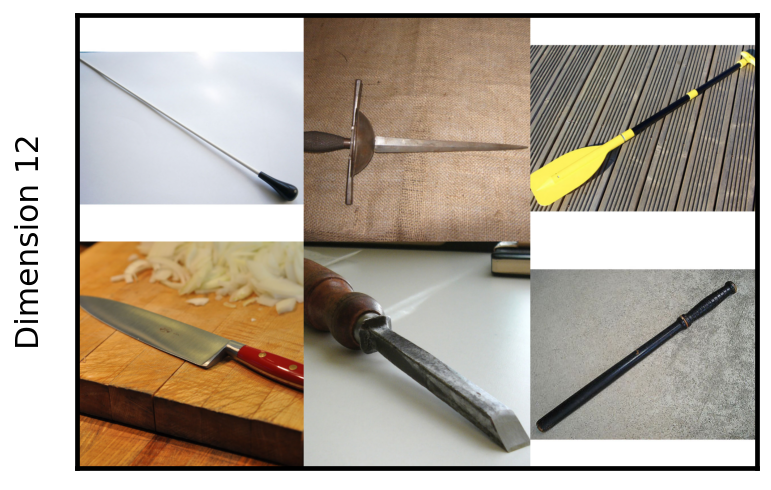}
\end{subfigure}
\begin{subfigure}{.45\textwidth}
    \centering
    \includegraphics[width=1.0\textwidth]{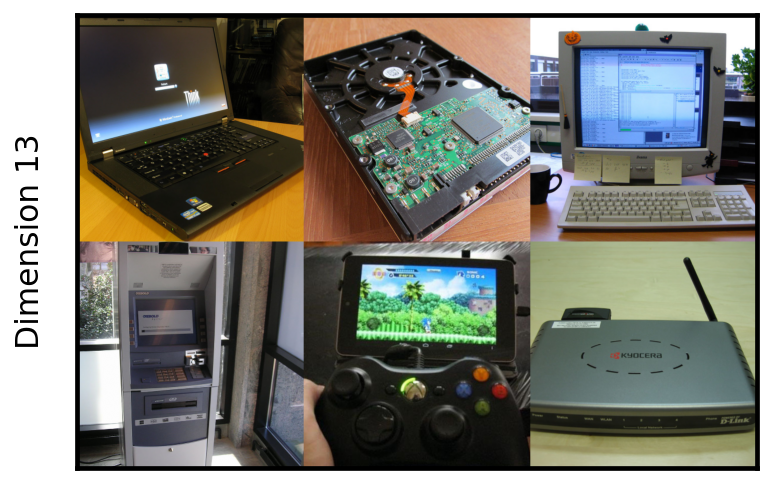}
\end{subfigure}%
\begin{subfigure}{.45\textwidth}
    \centering
    \includegraphics[width=1.0\textwidth]{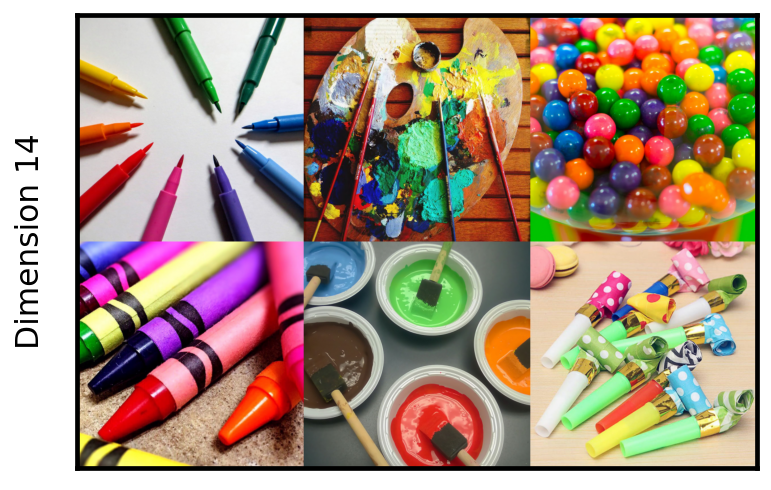}
\end{subfigure}
\begin{subfigure}{.45\textwidth}
    \centering
    \includegraphics[width=1.0\textwidth]{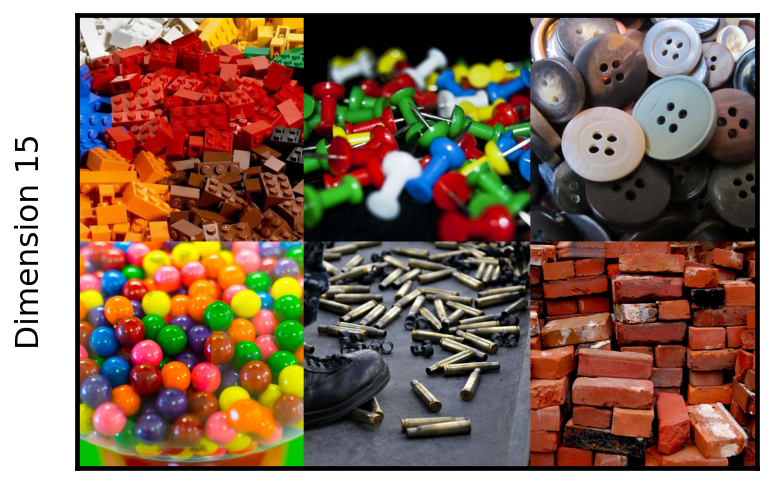}
\end{subfigure}%
\begin{subfigure}{.45\textwidth}
    \centering
    \includegraphics[width=1.0\textwidth]{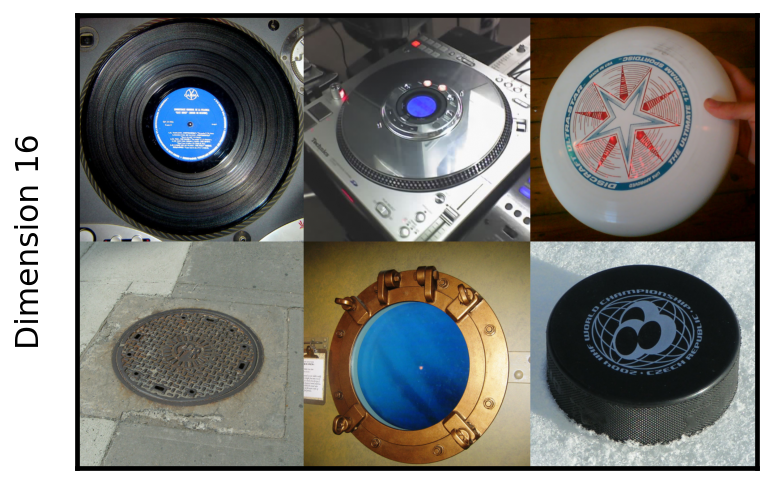}
\end{subfigure}
\begin{subfigure}{.45\textwidth}
    \centering
    \includegraphics[width=1.0\textwidth]{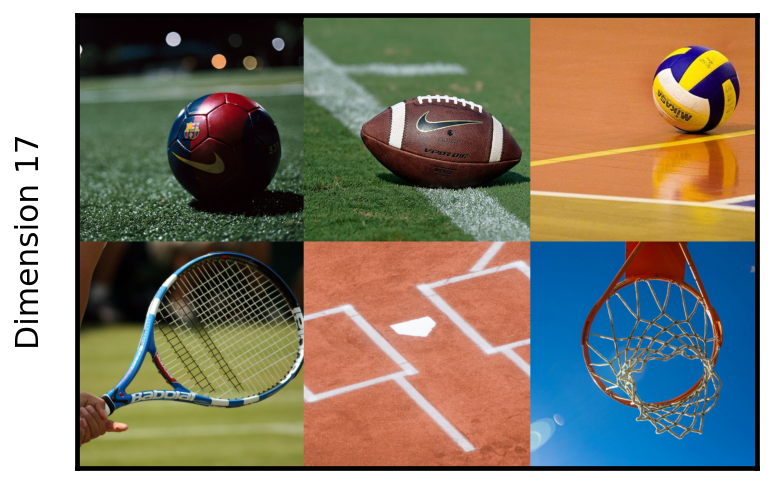}
\end{subfigure}%
\begin{subfigure}{.45\textwidth}
    \centering
    \includegraphics[width=1.0\textwidth]{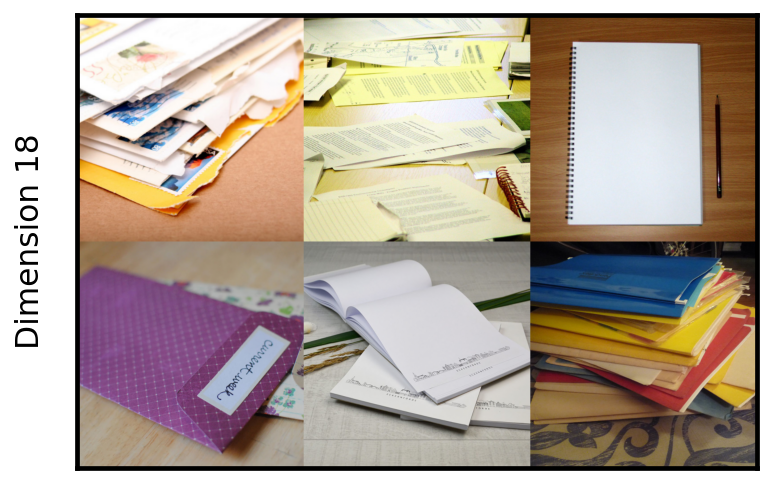}
\end{subfigure}
\begin{subfigure}{.45\textwidth}
    \centering
    \includegraphics[width=1.0\textwidth]{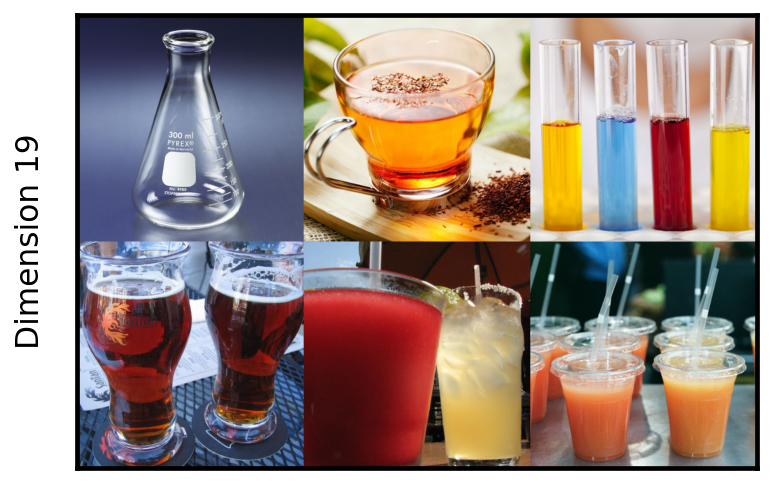}
\end{subfigure}%
\begin{subfigure}{.45\textwidth}
    \centering
    \includegraphics[width=1.0\textwidth]{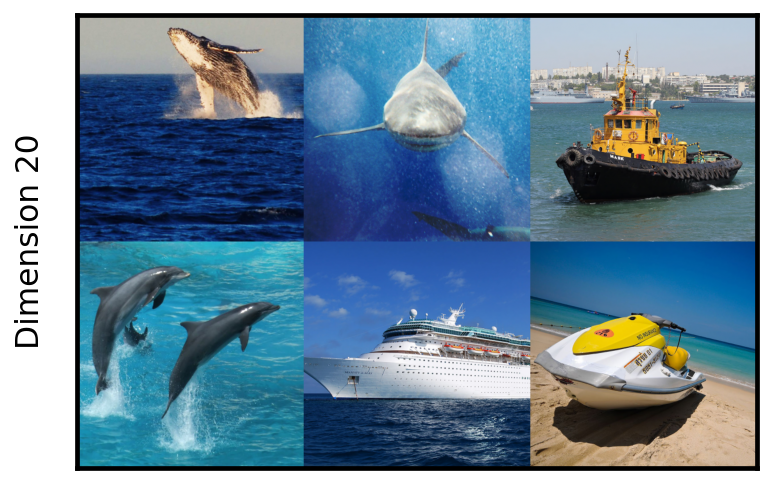}
\end{subfigure}
\caption{\textsc{Things} Dimensions 11-20.}
\end{figure*}

\begin{figure*}[t]
\centering
\begin{subfigure}{.45\textwidth}
    \centering
    \includegraphics[width=1.0\textwidth]{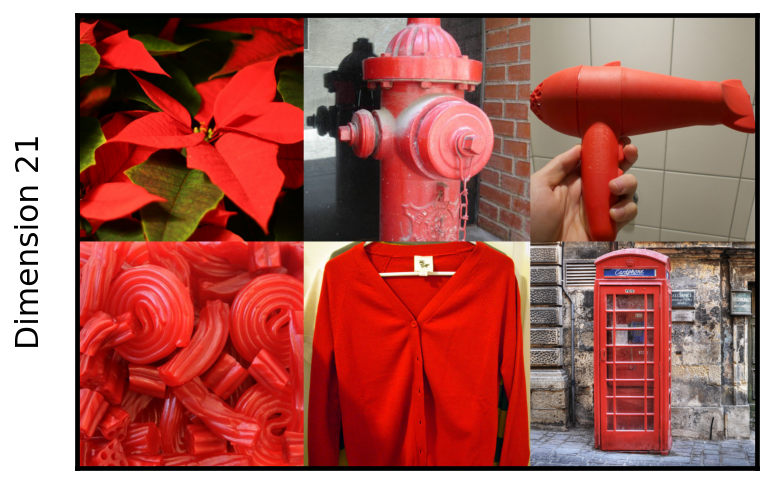}
\end{subfigure}%
\begin{subfigure}{.45\textwidth}
    \centering
    \includegraphics[width=1.0\textwidth]{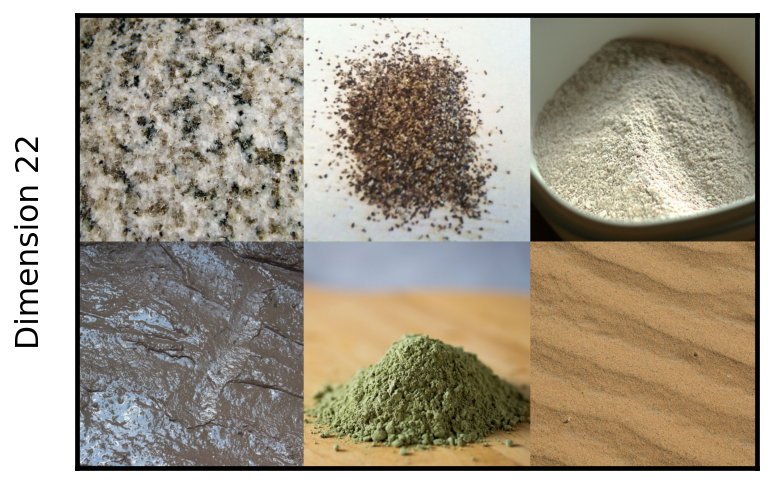}
\end{subfigure}
\begin{subfigure}{.45\textwidth}
    \centering
    \includegraphics[width=1.0\textwidth]{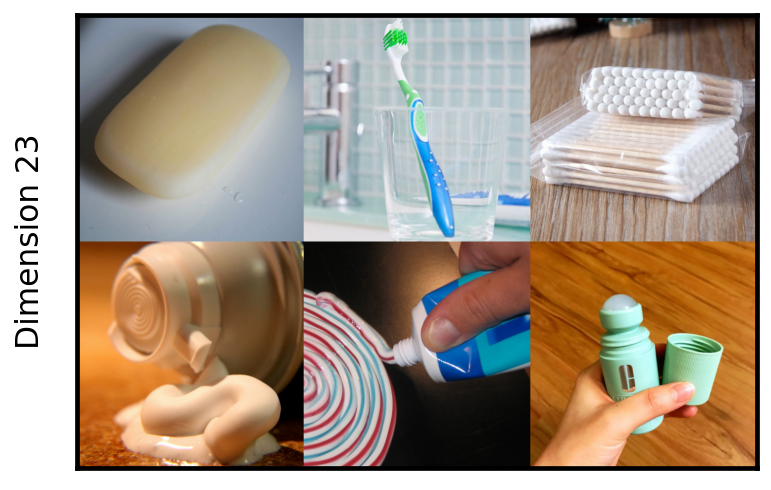}
\end{subfigure}%
\begin{subfigure}{.45\textwidth}
    \centering
    \includegraphics[width=1.0\textwidth]{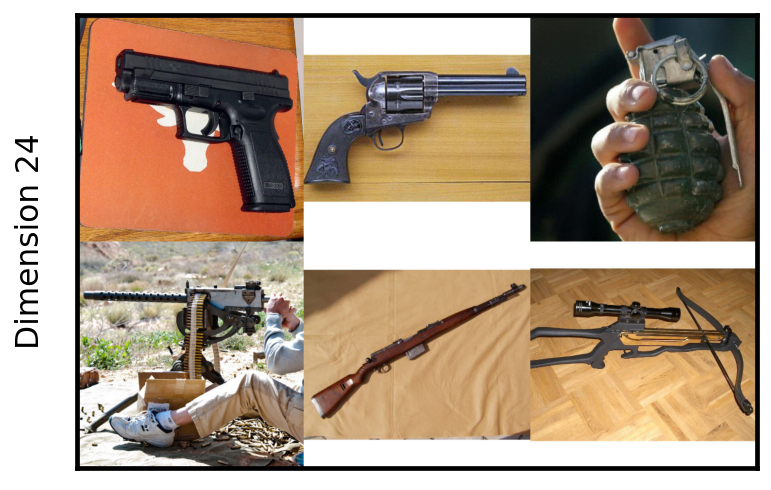}
\end{subfigure}
\begin{subfigure}{.45\textwidth}
    \centering
    \includegraphics[width=1.0\textwidth]{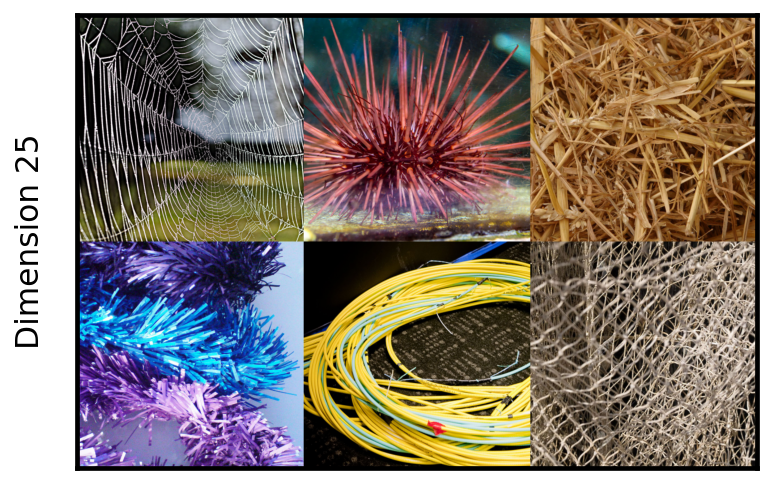}
\end{subfigure}%
\begin{subfigure}{.45\textwidth}
    \centering
    \includegraphics[width=1.0\textwidth]{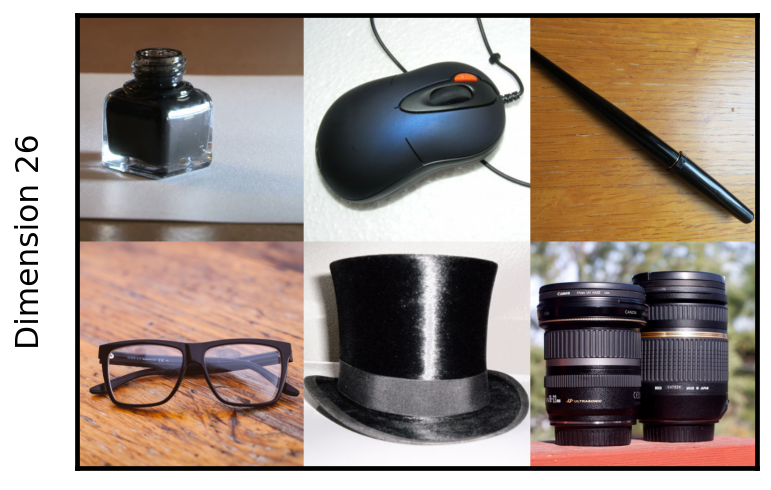}
\end{subfigure}
\begin{subfigure}{.45\textwidth}
    \centering
    \includegraphics[width=1.0\textwidth]{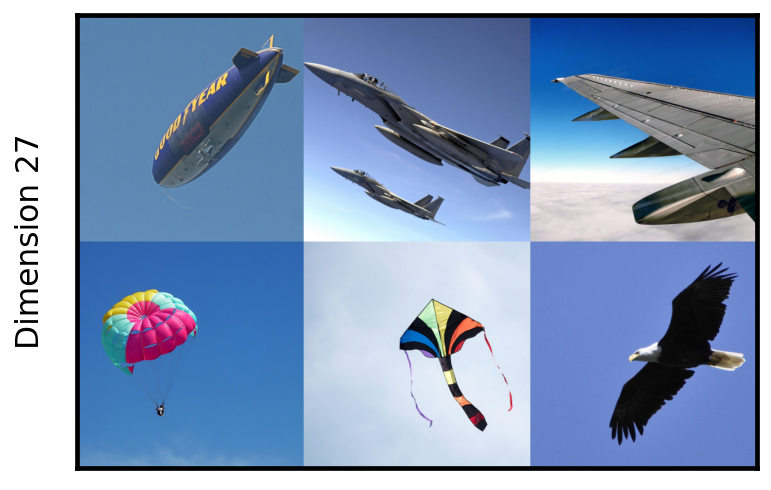}
\end{subfigure}%
\begin{subfigure}{.45\textwidth}
    \centering
    \includegraphics[width=1.0\textwidth]{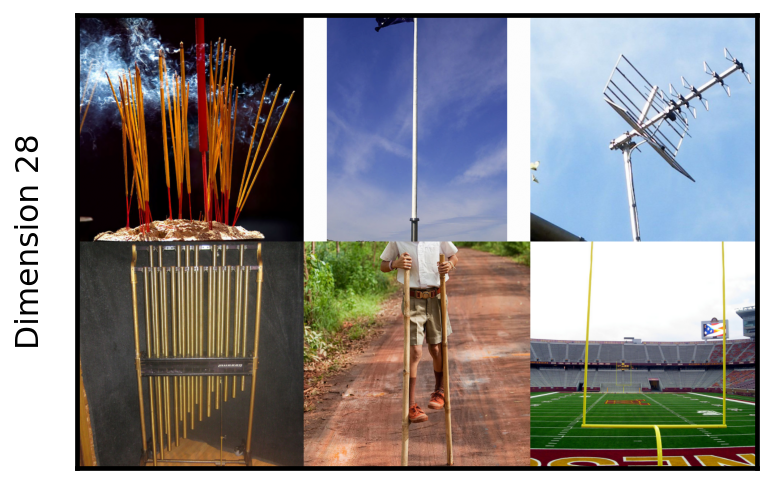}
\end{subfigure}
\begin{subfigure}{.45\textwidth}
    \centering
    \includegraphics[width=1.0\textwidth]{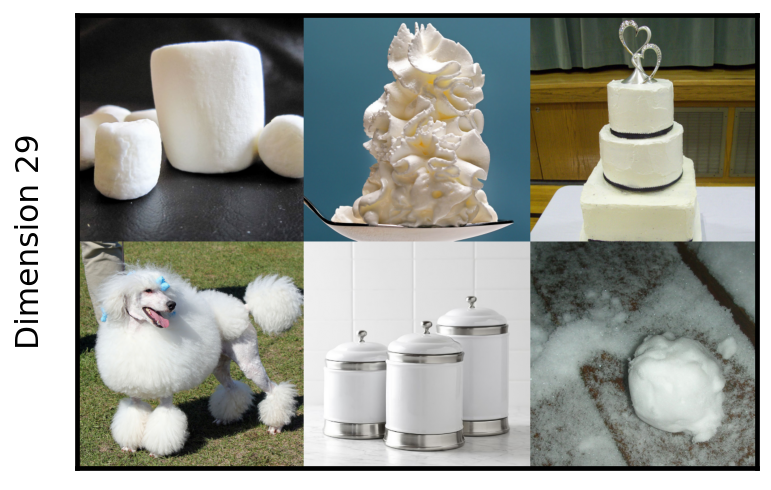}
\end{subfigure}%
\begin{subfigure}{.45\textwidth}
    \centering
    \includegraphics[width=1.0\textwidth]{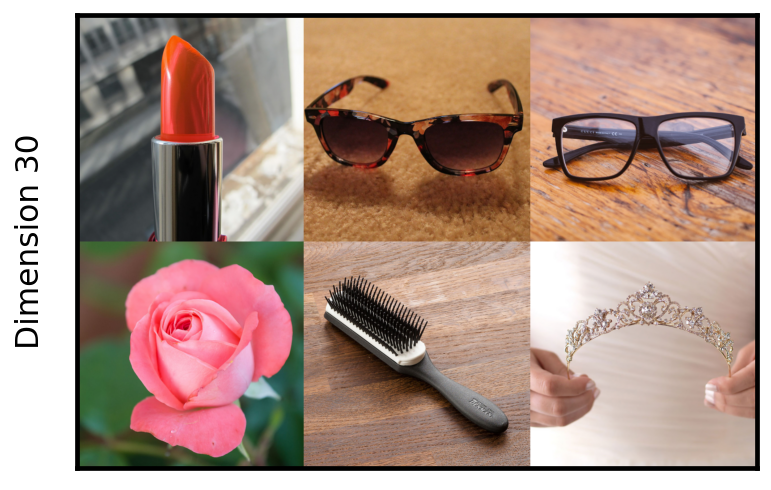}
\end{subfigure}
\caption{\textsc{Things} Dimensions 21-30.}
\end{figure*}

\begin{figure*}[t]
\centering
\begin{subfigure}{.45\textwidth}
    \centering
    \includegraphics[width=1.0\textwidth]{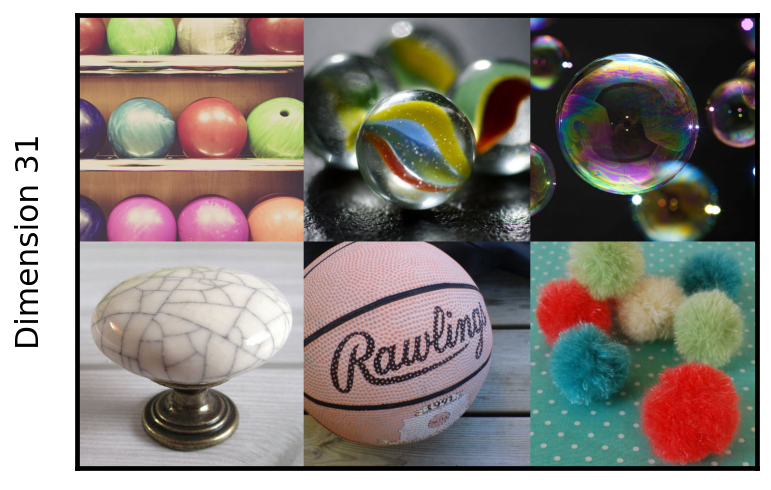}
\end{subfigure}%
\begin{subfigure}{.45\textwidth}
    \centering
    \includegraphics[width=1.0\textwidth]{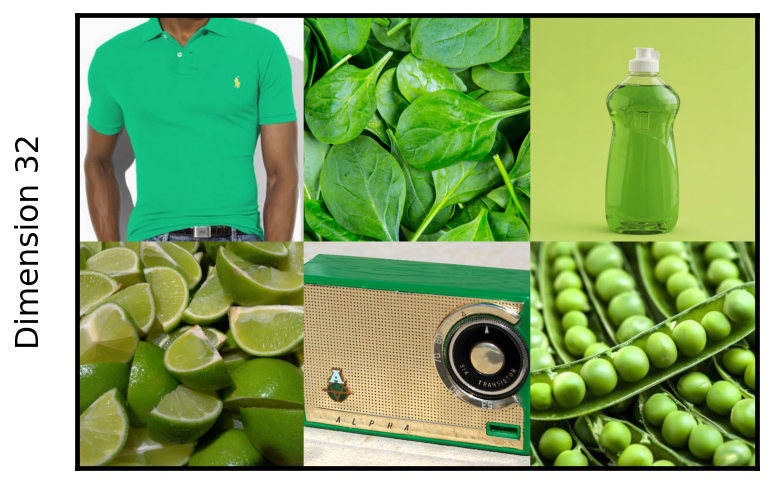}
\end{subfigure}
\begin{subfigure}{.45\textwidth}
    \centering
    \includegraphics[width=1.0\textwidth]{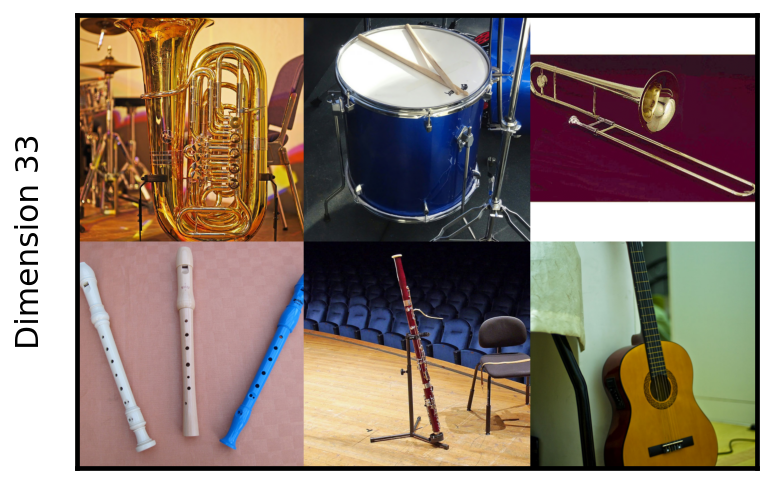}
\end{subfigure}%
\begin{subfigure}{.45\textwidth}
    \centering
    \includegraphics[width=1.0\textwidth]{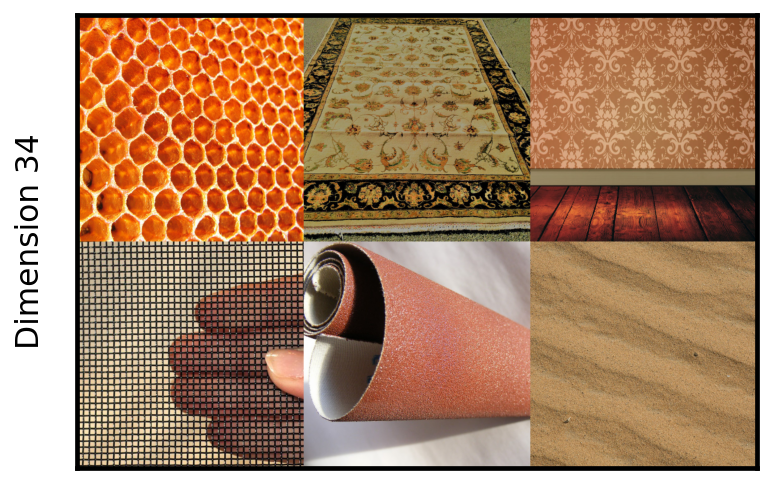}
\end{subfigure}
\begin{subfigure}{.45\textwidth}
    \centering
    \includegraphics[width=1.0\textwidth]{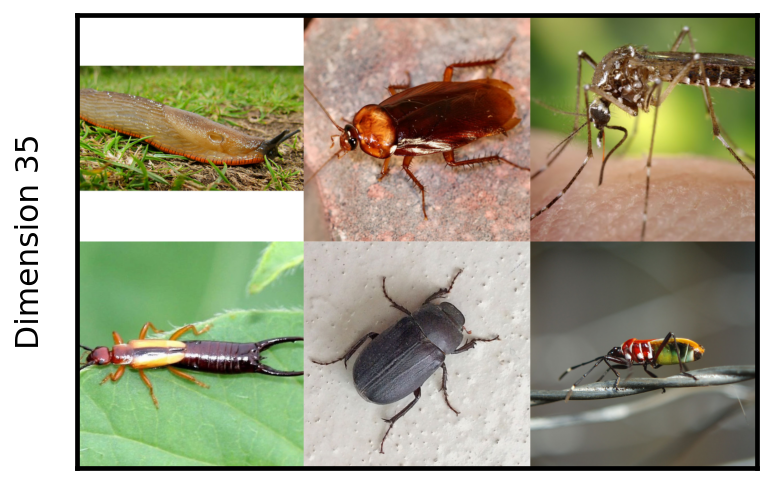}
\end{subfigure}%
\begin{subfigure}{.45\textwidth}
    \centering
    \includegraphics[width=1.0\textwidth]{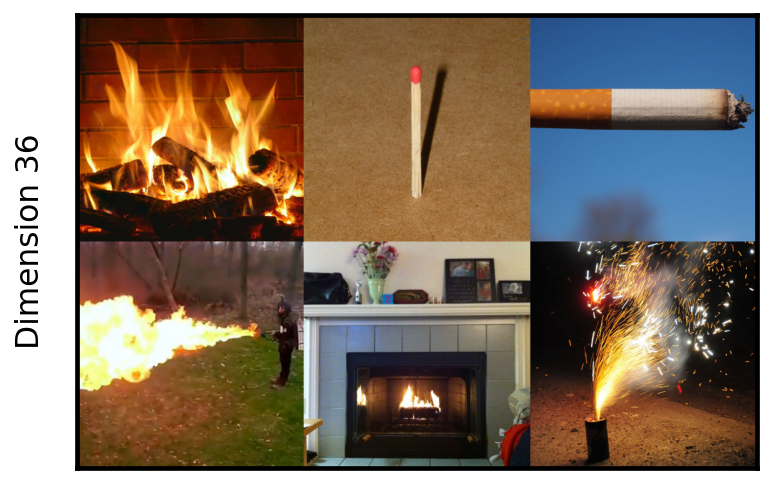}
\end{subfigure}
\begin{subfigure}{.45\textwidth}
    \centering
    \includegraphics[width=1.0\textwidth]{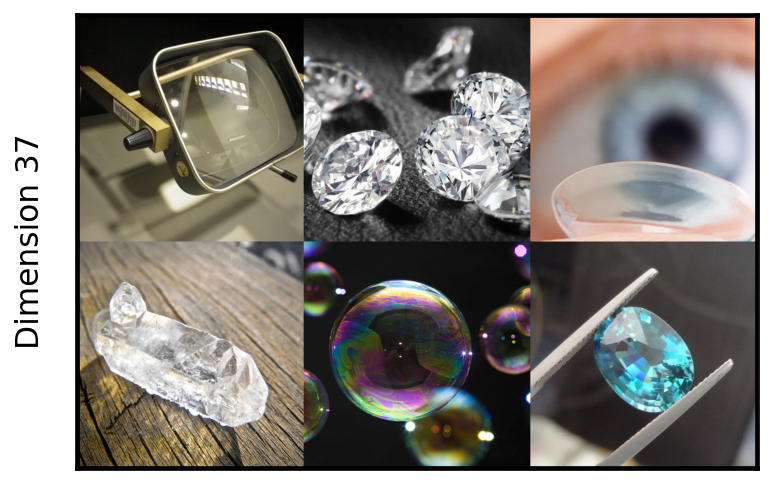}
\end{subfigure}%
\begin{subfigure}{.45\textwidth}
    \centering
    \includegraphics[width=1.0\textwidth]{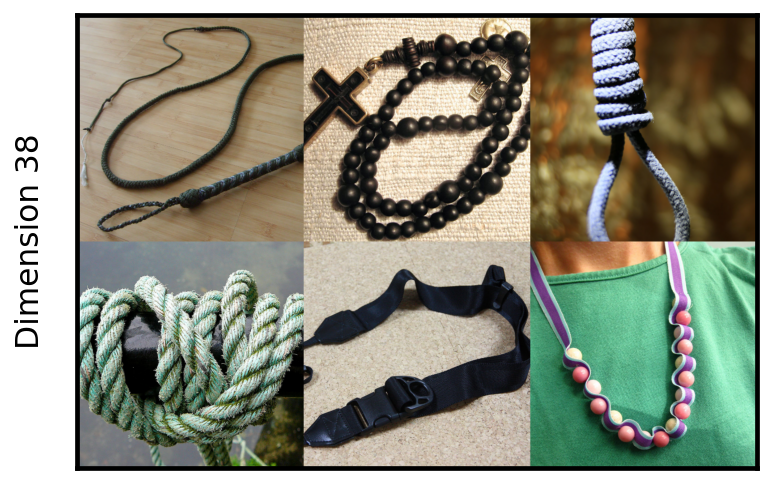}
\end{subfigure}
\begin{subfigure}{.45\textwidth}
    \centering
    \includegraphics[width=1.0\textwidth]{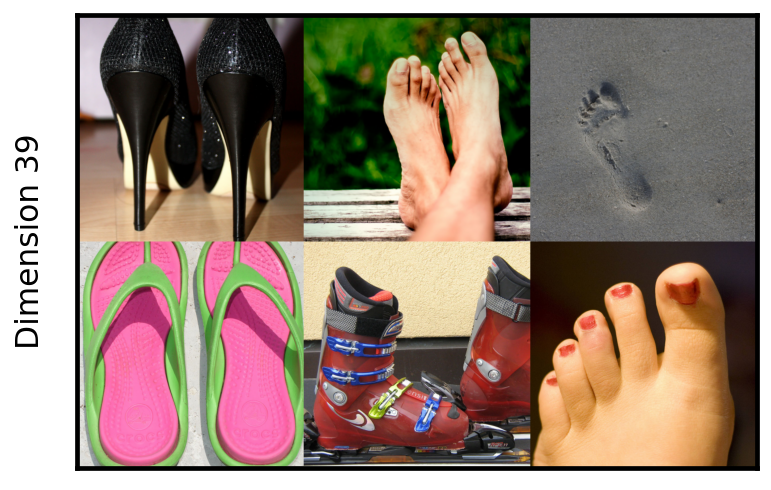}
\end{subfigure}%
\begin{subfigure}{.45\textwidth}
    \centering
    \includegraphics[width=1.0\textwidth]{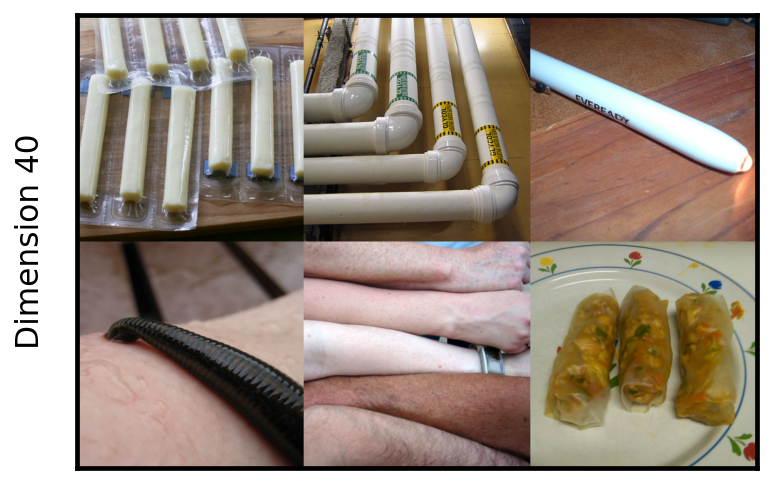}
\end{subfigure}
\caption{\textsc{Things} Dimensions 31-40.}
\end{figure*}

\begin{figure*}[t]
\centering
\begin{subfigure}{.5\textwidth}
    \centering
    \includegraphics[width=1.0\textwidth]{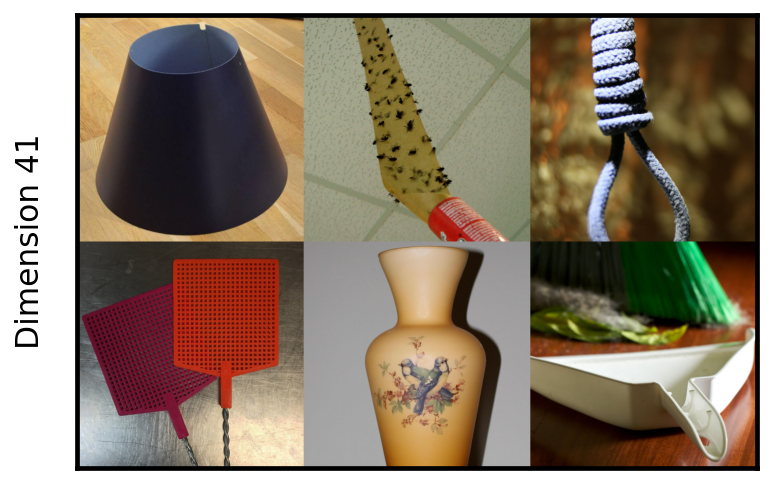}
\end{subfigure}%
\begin{subfigure}{.5\textwidth}
    \centering
    \includegraphics[width=1.0\textwidth]{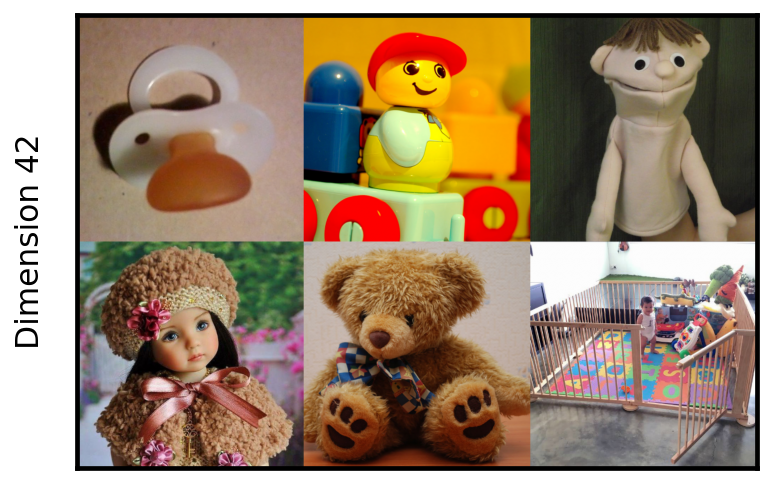}
\end{subfigure}
\begin{subfigure}{.5\textwidth}
    \centering
    \includegraphics[width=1.0\textwidth]{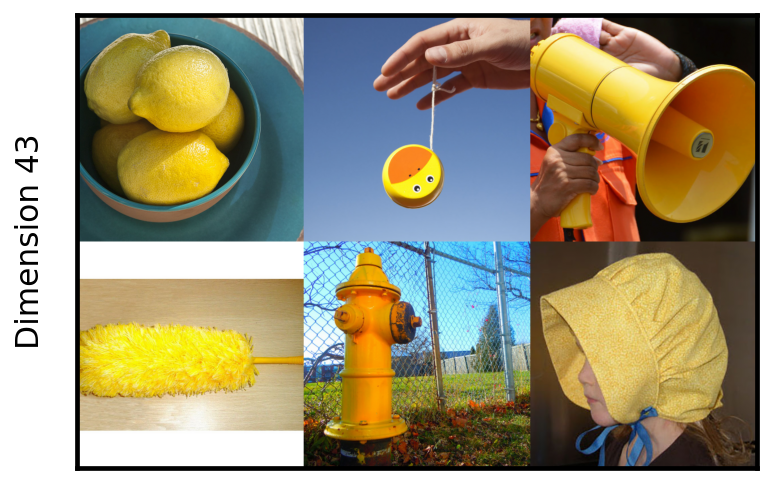}
\end{subfigure}%
\begin{subfigure}{.5\textwidth}
    \centering
    \includegraphics[width=1.0\textwidth]{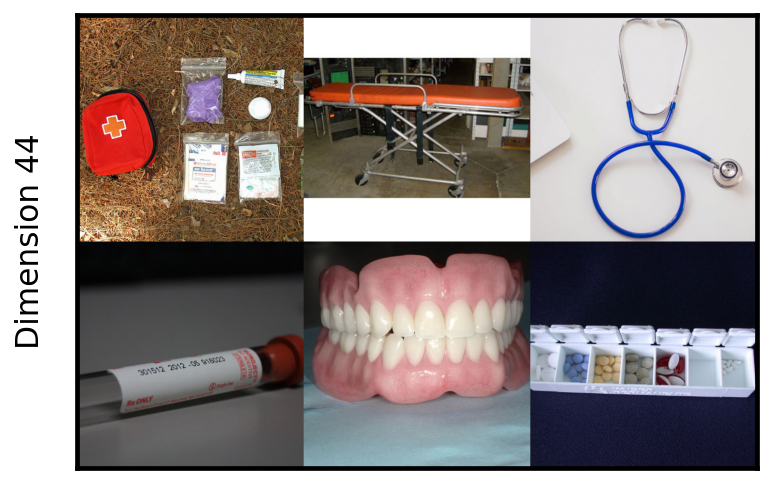}
\end{subfigure}
\begin{subfigure}{.5\textwidth}
    \centering
    \includegraphics[width=1.0\textwidth]{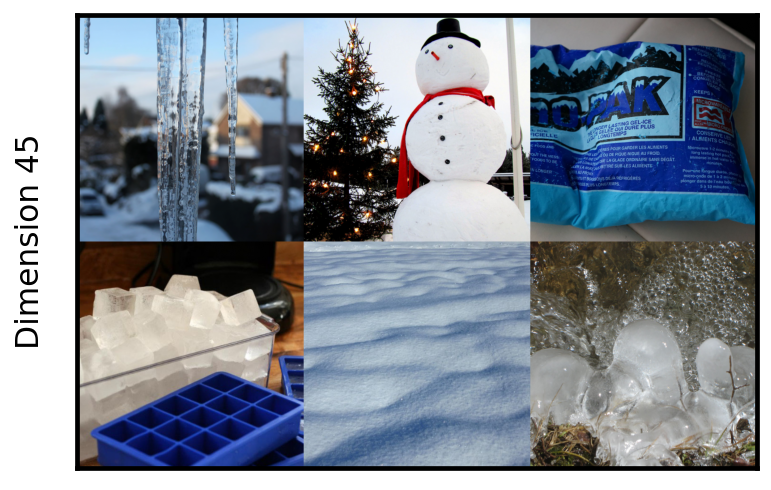}
\end{subfigure}
\caption{\textsc{Things} Dimensions 41-45.}
\end{figure*}
\newpage


\ifbool{true}{}{
\begin{figure*}[t]
\centering
\begin{subfigure}{.5\textwidth}
    \centering
    \includegraphics[width=1.0\textwidth]{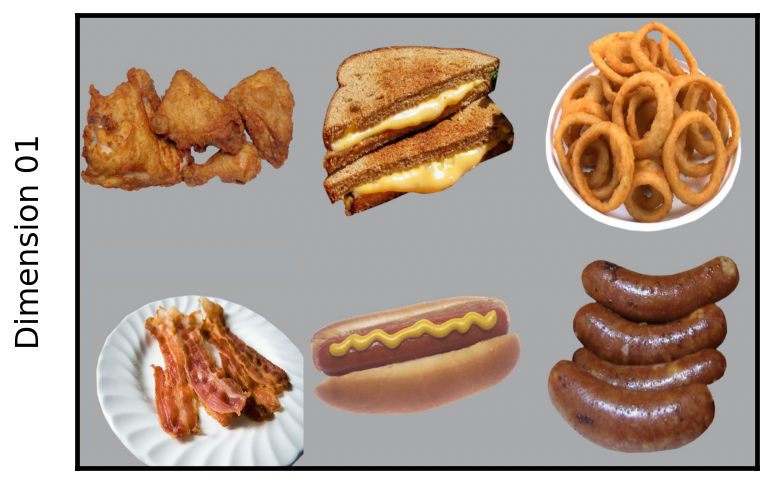}
\end{subfigure}%
\begin{subfigure}{.5\textwidth}
    \centering
    \includegraphics[width=1.0\textwidth]{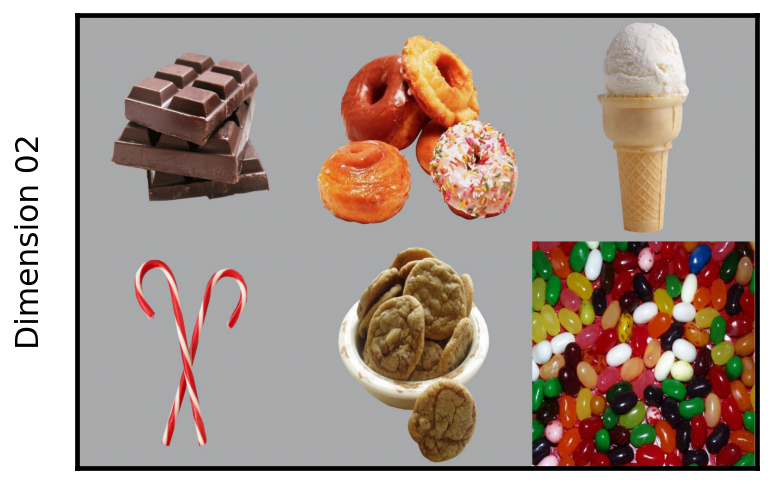}
\end{subfigure}
\begin{subfigure}{.5\textwidth}
    \centering
    \includegraphics[width=1.0\textwidth]{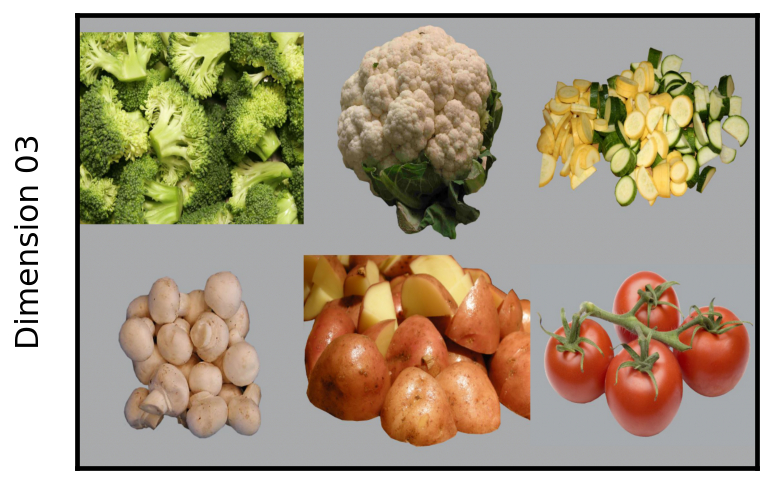}
\end{subfigure}%
\begin{subfigure}{.5\textwidth}
    \centering
    \includegraphics[width=1.0\textwidth]{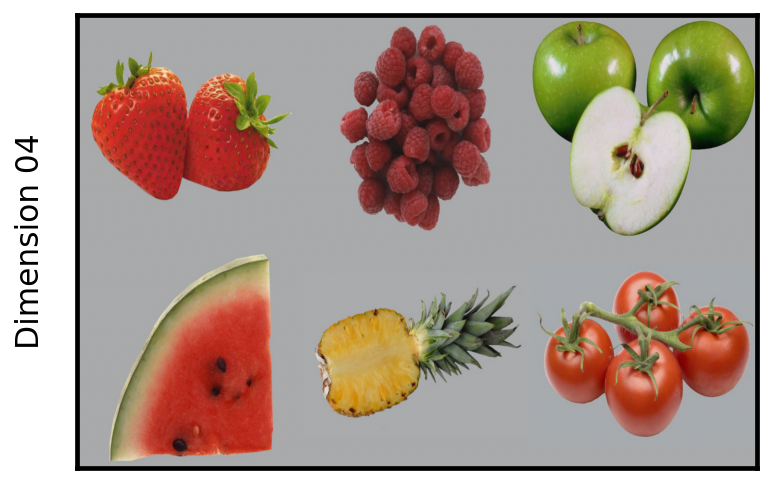}
\end{subfigure}
\begin{subfigure}{.5\textwidth}
    \centering
    \includegraphics[width=1.0\textwidth]{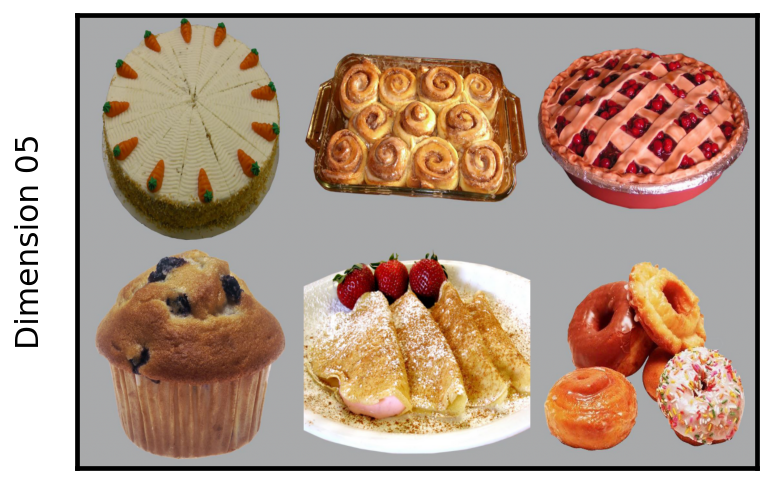}
\end{subfigure}
\caption{\textsc{Food} Dimensions 1-5.}
\end{figure*}
\newpage
}


\ifbool{true}{}{
\begin{figure*}[t]
\centering
\begin{subfigure}{.5\textwidth}
    \centering
    \includegraphics[width=1.0\textwidth]{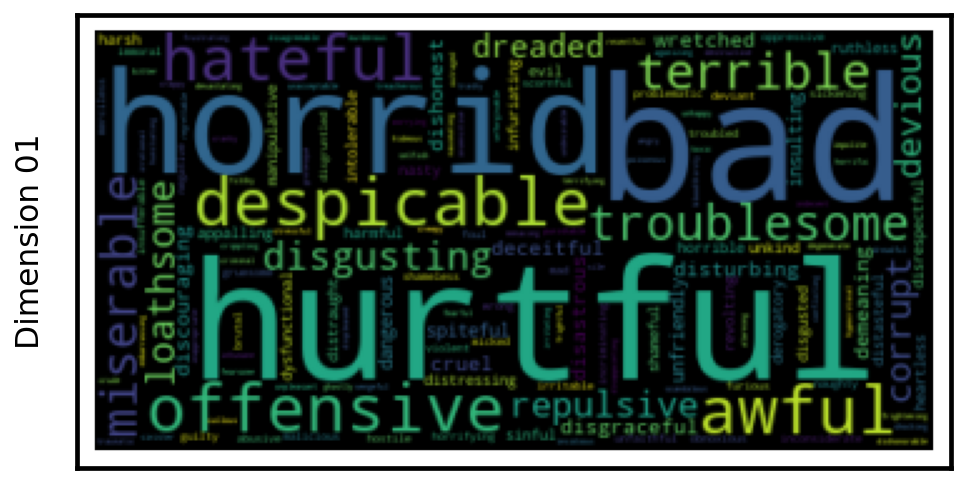}
\end{subfigure}%
\begin{subfigure}{.5\textwidth}
    \centering
    \includegraphics[width=1.0\textwidth]{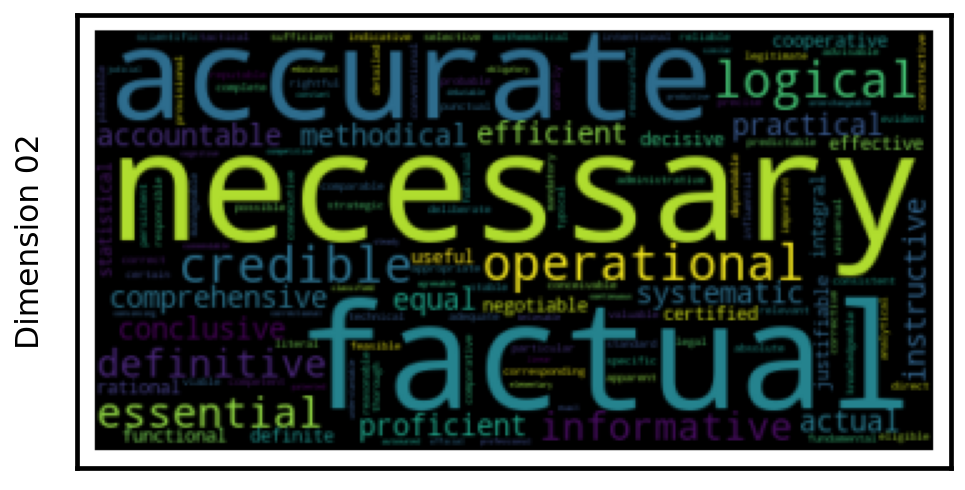}
\end{subfigure}
\begin{subfigure}{.5\textwidth}
    \centering
    \includegraphics[width=1.0\textwidth]{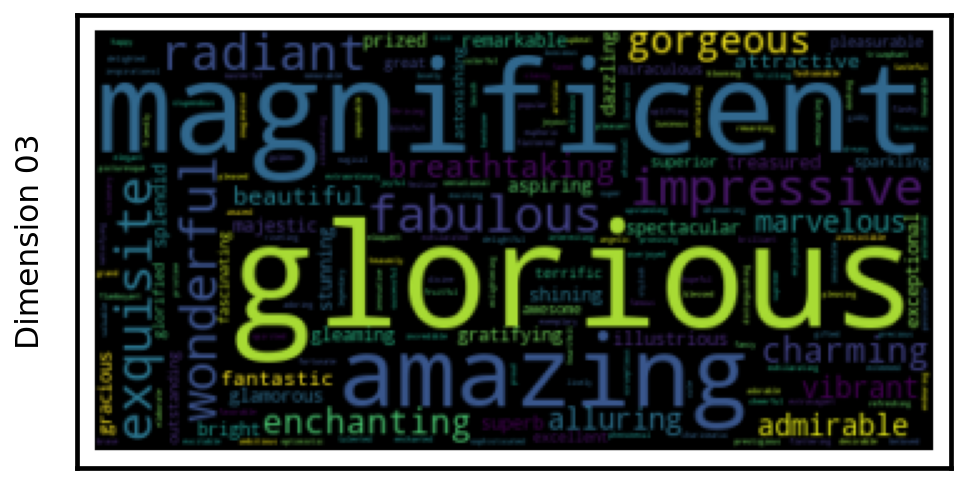}
\end{subfigure}%
\begin{subfigure}{.5\textwidth}
    \centering
    \includegraphics[width=1.0\textwidth]{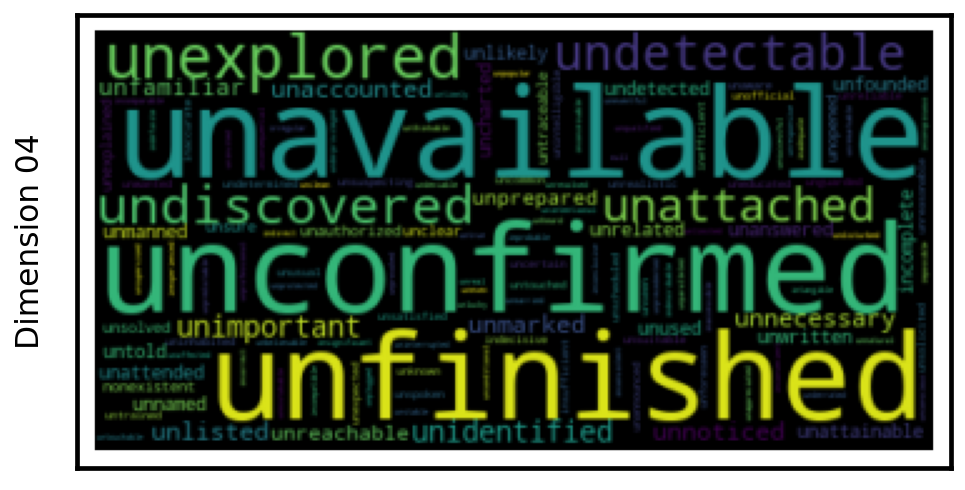}
\end{subfigure}
\begin{subfigure}{.5\textwidth}
    \centering
    \includegraphics[width=1.0\textwidth]{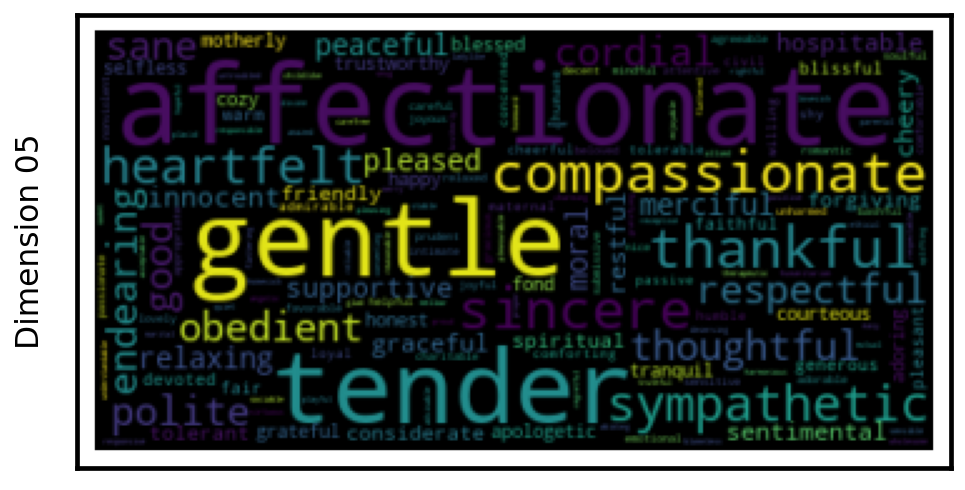}
\end{subfigure}%
\begin{subfigure}{.5\textwidth}
    \centering
    \includegraphics[width=1.0\textwidth]{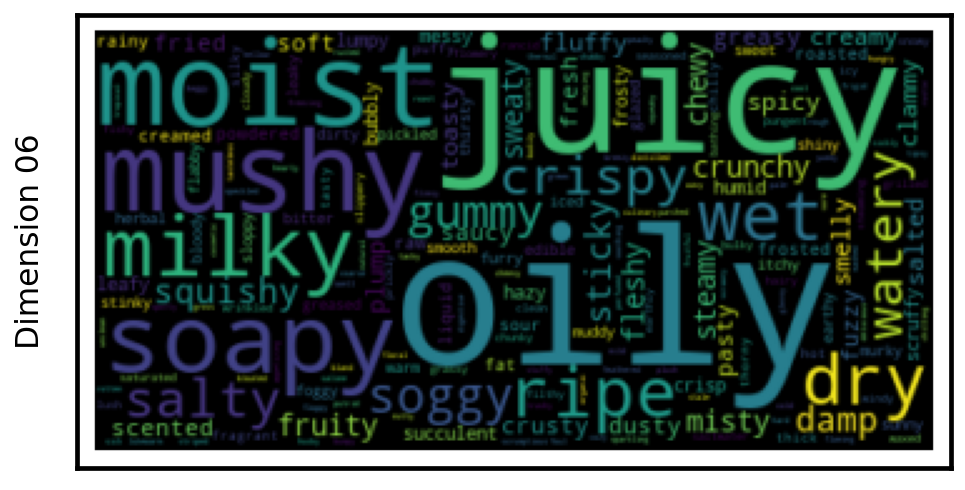}
\end{subfigure}
\begin{subfigure}{.5\textwidth}
    \centering
    \includegraphics[width=1.0\textwidth]{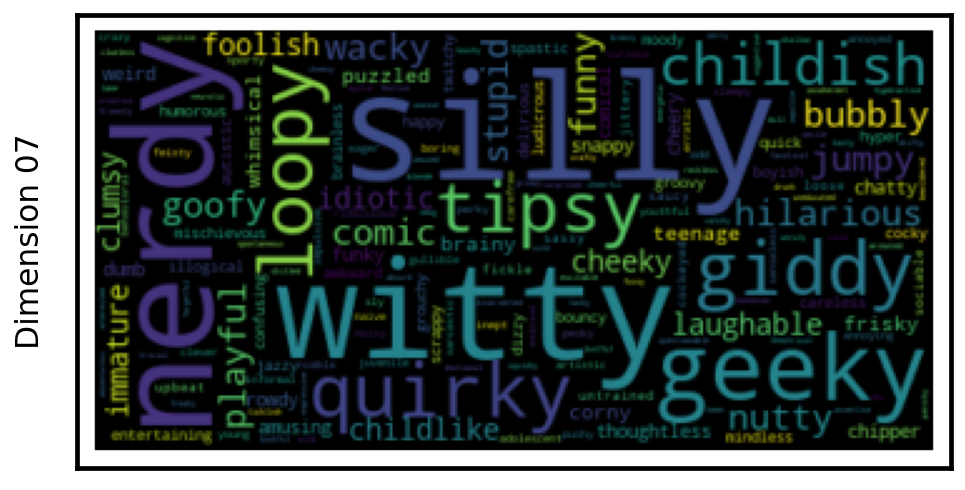}
\end{subfigure}%
\begin{subfigure}{.5\textwidth}
    \centering
    \includegraphics[width=1.0\textwidth]{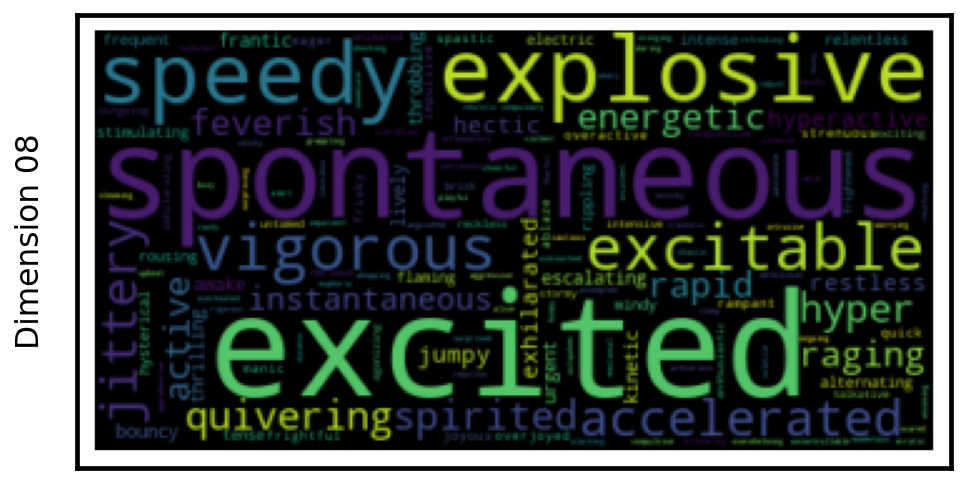}
\end{subfigure}
\begin{subfigure}{.5\textwidth}
    \centering
    \includegraphics[width=1.0\textwidth]{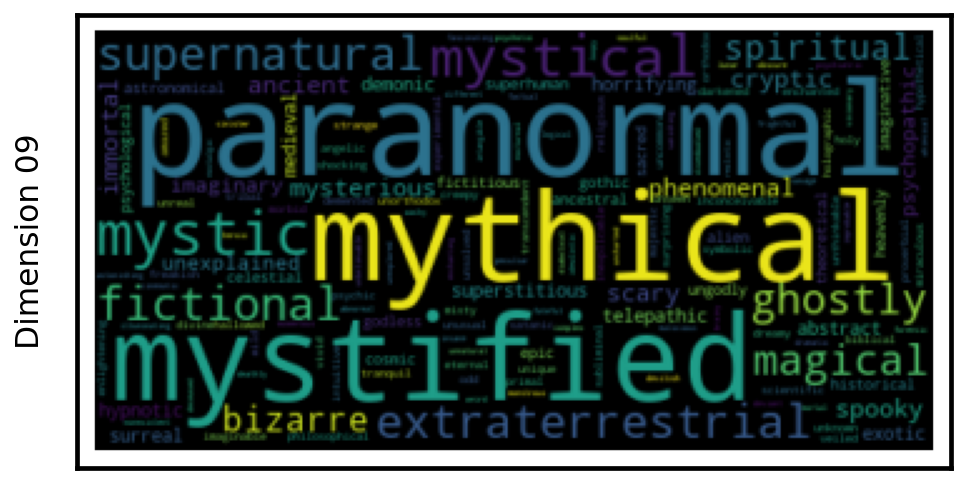}
\end{subfigure}%
\begin{subfigure}{.5\textwidth}
    \centering
    \includegraphics[width=1.0\textwidth]{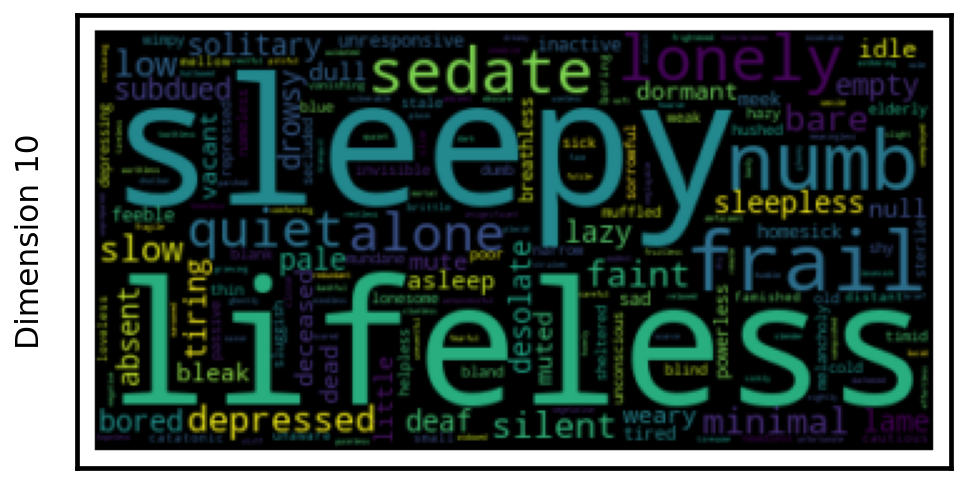}
\end{subfigure}
\caption{\textsc{Adjectives} Dimensions 1-10.}
\end{figure*}

\begin{figure*}[t]
\centering
\begin{subfigure}{.5\textwidth}
    \centering
    \includegraphics[width=1.0\textwidth]{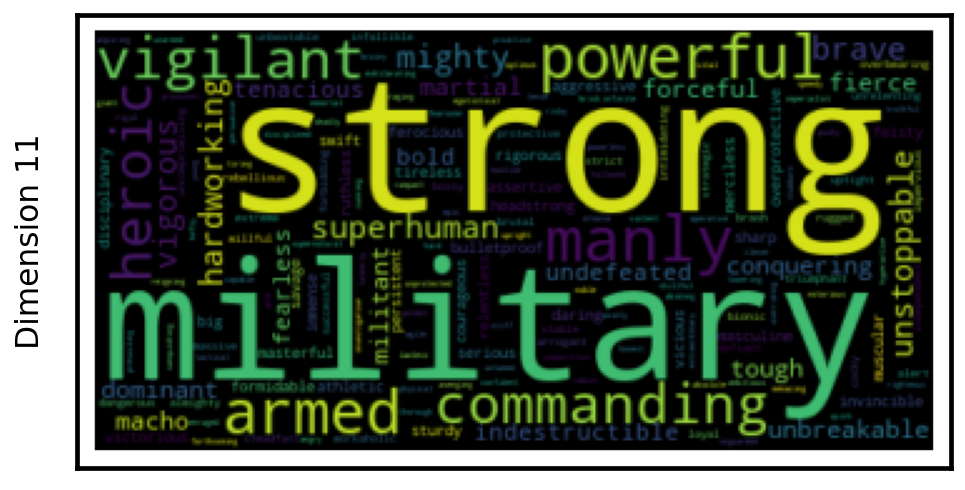}
\end{subfigure}%
\begin{subfigure}{.5\textwidth}
    \centering
    \includegraphics[width=1.0\textwidth]{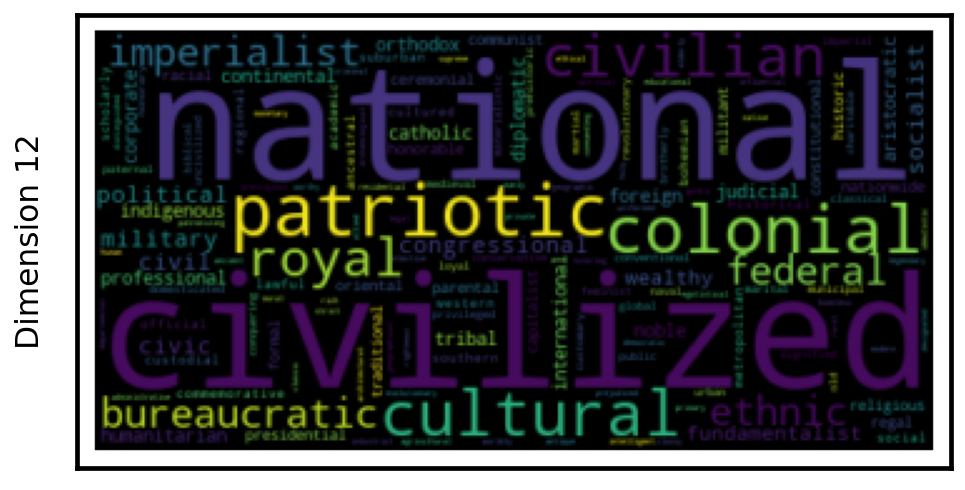}
\end{subfigure}
\begin{subfigure}{.5\textwidth}
    \centering
    \includegraphics[width=1.0\textwidth]{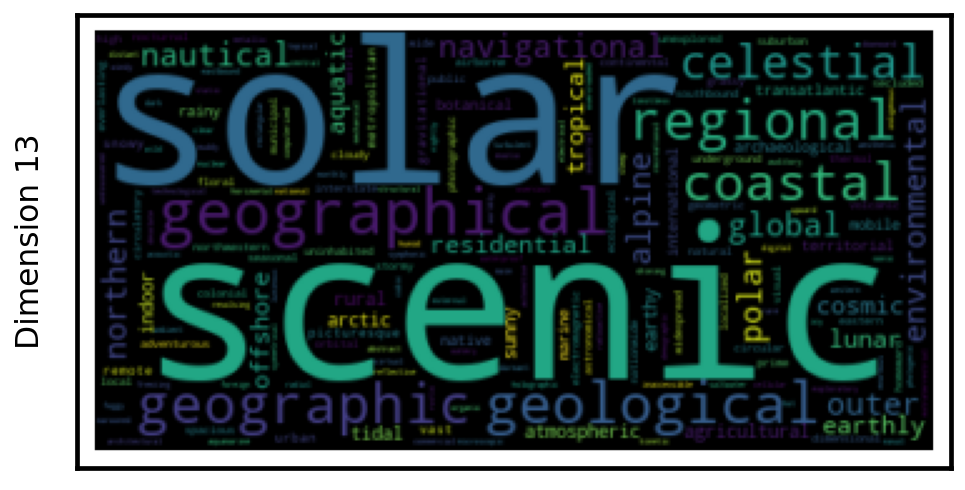}
\end{subfigure}%
\begin{subfigure}{.5\textwidth}
    \centering
    \includegraphics[width=1.0\textwidth]{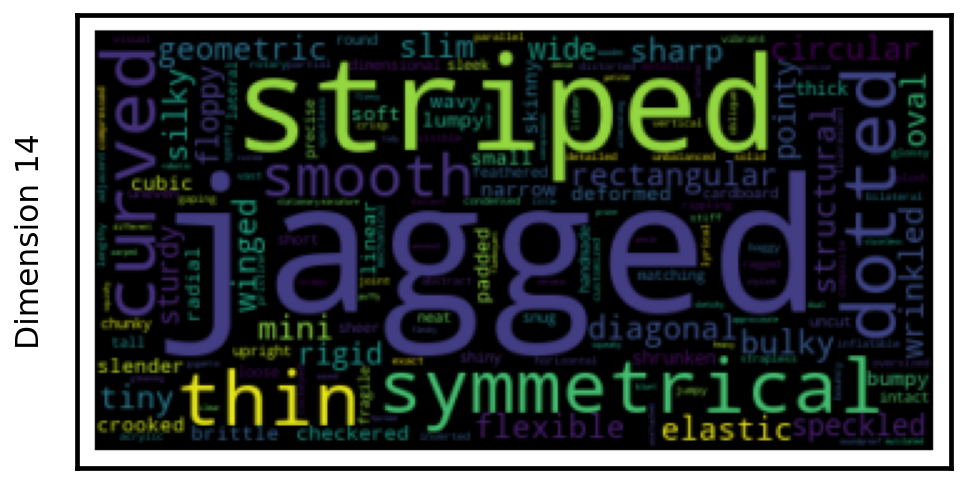}
\end{subfigure}
\begin{subfigure}{.5\textwidth}
    \centering
    \includegraphics[width=1.0\textwidth]{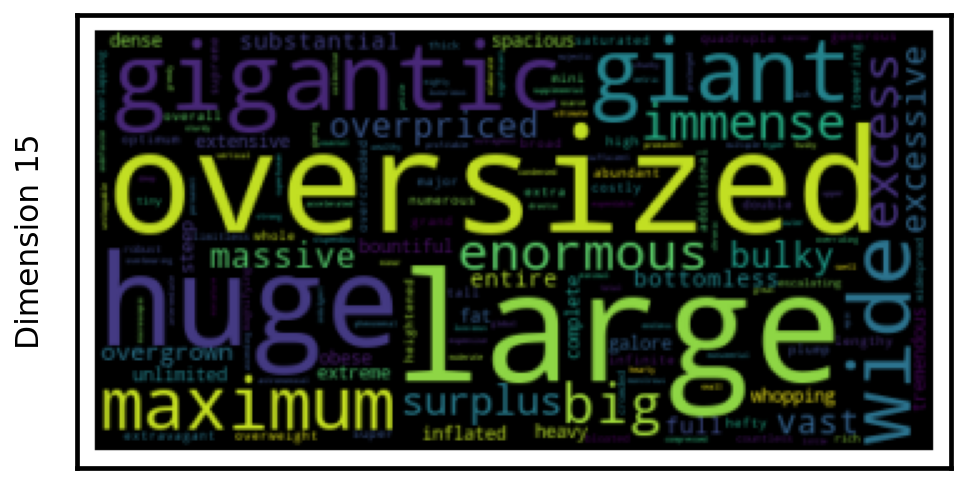}
\end{subfigure}%
\begin{subfigure}{.5\textwidth}
    \centering
    \includegraphics[width=1.0\textwidth]{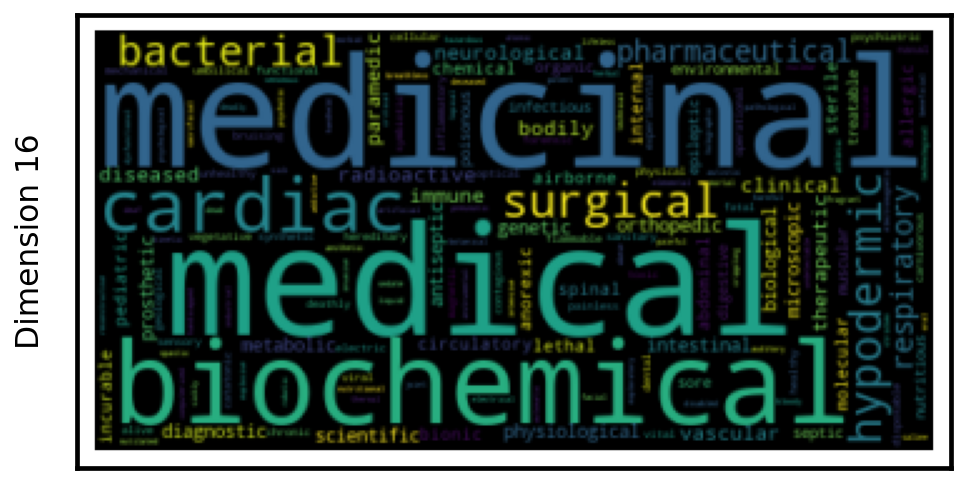}
\end{subfigure}
\begin{subfigure}{.5\textwidth}
    \centering
    \includegraphics[width=1.0\textwidth]{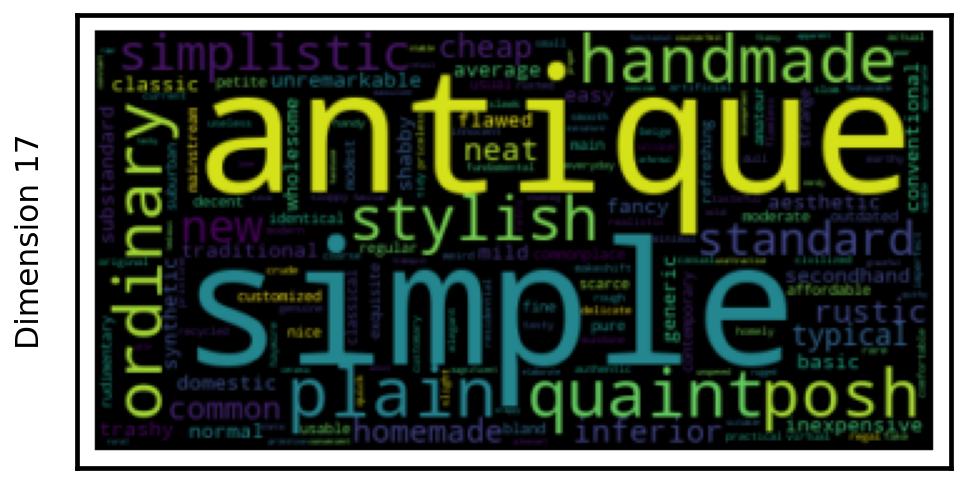}
\end{subfigure}%
\begin{subfigure}{.5\textwidth}
    \centering
    \includegraphics[width=1.0\textwidth]{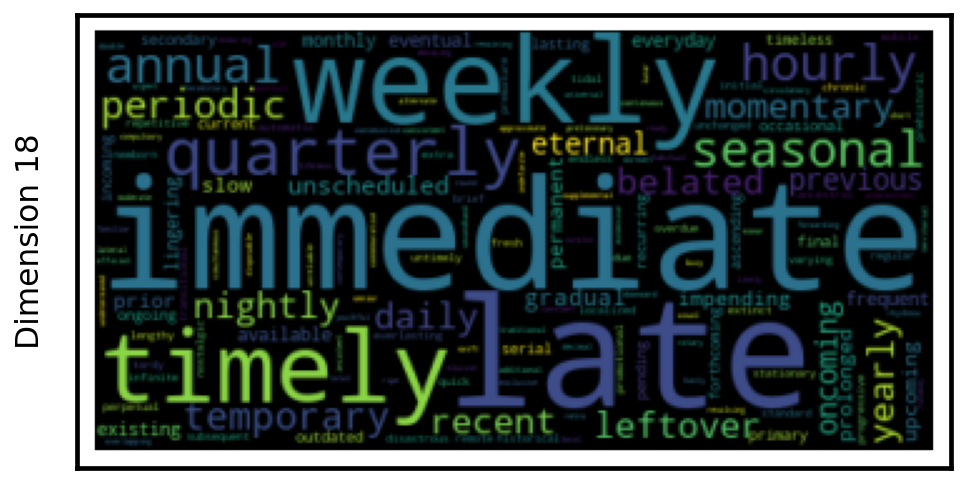}
\end{subfigure}
\begin{subfigure}{.5\textwidth}
    \centering
    \includegraphics[width=1.0\textwidth]{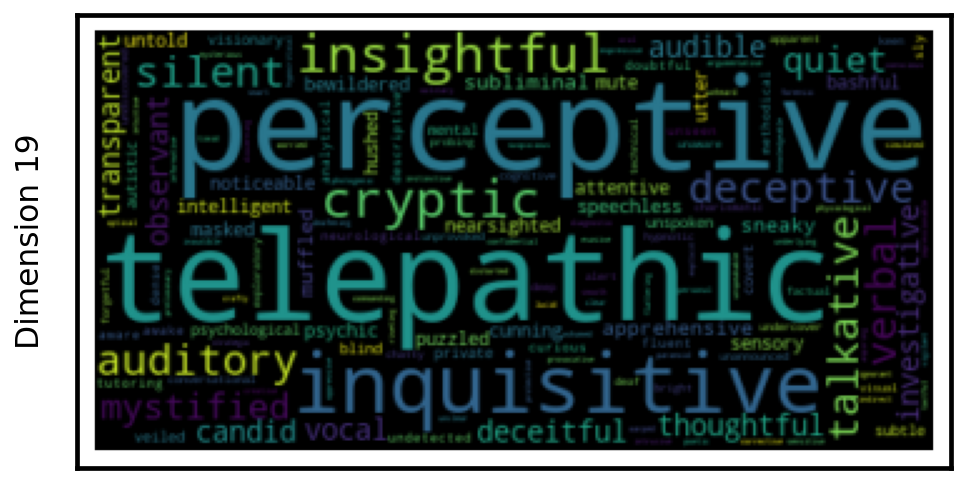}
\end{subfigure}%
\begin{subfigure}{.5\textwidth}
    \centering
    \includegraphics[width=1.0\textwidth]{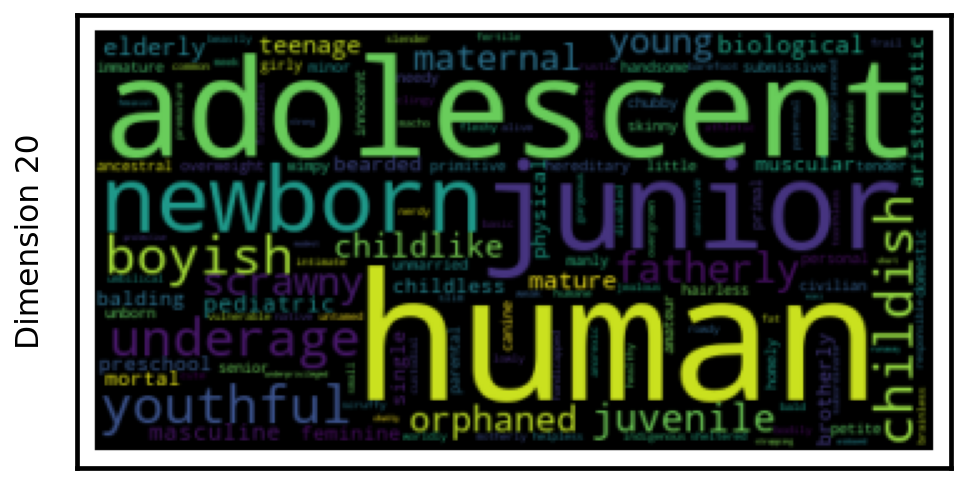}
\end{subfigure}
\caption{\textsc{Adjectives} Dimensions 11-20.}
\end{figure*}
}

\end{document}